\documentclass{article}
\PassOptionsToPackage{table,dvipsnames}{xcolor}
\usepackage{iclr2027_conference,times}

\usepackage{amsmath,amsfonts,bm}

\def\eqref#1{equation~\ref{#1}}

\def\1{\bm{1}}

\DeclareMathAlphabet{\mathsfit}{\encodingdefault}{\sfdefault}{m}{sl}
\SetMathAlphabet{\mathsfit}{bold}{\encodingdefault}{\sfdefault}{bx}{n}

\usepackage{url}
\usepackage{comment}
\usepackage{lipsum}
\usepackage{booktabs}
\usepackage{wrapfig,booktabs,multirow}

\usepackage[utf8]{inputenc}
\usepackage[T1]{fontenc}
\usepackage{microtype}

\usepackage{amsmath, amssymb, amsfonts, amsthm, mathtools}
\usepackage{mathrsfs}
\usepackage{nicefrac}

\usepackage{graphicx}
\usepackage{subcaption}
\usepackage{wrapfig}
\usepackage{booktabs}
\usepackage{tabularx}
\usepackage{array}
\usepackage{siunitx}
\usepackage{capt-of}
\usepackage{longtable}

\usepackage{multirow}
\usepackage{booktabs}
\usepackage{graphicx}   
\usepackage{xcolor}
\usepackage{pdflscape}  
\usepackage[table,dvipsnames]{xcolor}
\usepackage{colortbl}

\definecolor{burgundy}{HTML}{77324c}
\definecolor{tolblue}{HTML}{4477AA}
\definecolor{ours}{HTML}{228833}
\definecolor{baseline}{HTML}{BBBBBB}
\definecolor{oracle}{HTML}{4477AA}

\definecolor{tent}{HTML}{77324c}
\definecolor{eata}{HTML}{9B5774}
\definecolor{read}{HTML}{C07B8E}
\definecolor{sumi}{HTML}{5A2638}

\definecolor{sotta}{HTML}{CCBB44}
\definecolor{aeo}{HTML}{AA3377}
\definecolor{pta}{HTML}{882255}

\definecolor{textbody}{HTML}{333333}
\definecolor{textmuted}{HTML}{666666}

\definecolor{rowours}{HTML}{E3F0E5}
\definecolor{roworacle}{HTML}{DCE6F0}
\definecolor{rowdeg}{HTML}{F0E0E6}
\definecolor{rowgray}{HTML}{F5F5F5}

\usepackage{hyperref}
\hypersetup{colorlinks=true}
\definecolor{LinkColor}{HTML}{A04000}   
\definecolor{CiteColor}{HTML}{A04000}   
\definecolor{URLColor}{HTML}{AF601A}    
\definecolor{rowgray}{HTML}{F9ECE8}

\hypersetup{
    colorlinks=true,
    linkcolor=LinkColor,
    citecolor=CiteColor,
    urlcolor=URLColor,
    linktocpage=true,
    unicode=true,
    pdfborder={0 0 0},
}

\hypersetup{
    linkcolor=burgundy,
    citecolor=tolblue,
    urlcolor=burgundy,
    filecolor=burgundy,
    menucolor=burgundy,
    linktocpage=true,
    unicode=true,
    pdfborder={0 0 0},
}

\usepackage{natbib}

\usepackage[capitalize,noabbrev]{cleveref}

\usepackage{enumitem}
\usepackage{pifont}
\usepackage{gensymb}
\usepackage{adjustbox}
\usepackage[normalem]{ulem}
\useunder{\uline}{\ul}{}
\usepackage{makecell}

\usepackage{lipsum}
\usepackage{rotating}
\theoremstyle{plain}
\newtheorem{theorem}{Theorem}[section]
\newtheorem{proposition}[theorem]{Proposition}

\newtheorem{corollary}[theorem]{Corollary}

\theoremstyle{definition}

\theoremstyle{remark}

\usepackage{algorithm}
\usepackage{algpseudocode}

\usepackage{amsmath}      
\usepackage{amssymb}      
\usepackage{mathtools}    

\newcommand{\method}{\textsc{SeMARC}}
\newcommand{\backbone}{\textsc{SeMA}}
\newcommand{\policy}{\textsc{ARC}}

\title{Efficient Multimodal Inference through Adaptive Acquisition and Sequential Fusion}

\author{
\textbf{Payal Mohapatra}$^{1}$\thanks{\small Equal contribution; shared first authorship.}\hspace{0.7em}%
\textbf{Haodong Yang}$^{2}$\footnotemark[1]\hspace{0.7em}%
\textbf{Yueyuan Sui}$^{2}$\hspace{0.4em}%
\textbf{Stephen Xia}$^{2}$\hspace{0.4em}%
\textbf{Benjamin Lundell}$^{1}$\hspace{0.4em}%
\textbf{Qi Zhu}$^{2}$ \\[0.6em]
$^{1}$Arm Inc., USA \quad
$^{2}$Northwestern University, USA
}

\usepackage{wrapfig}
\newsavebox{\diagclassbox}
\newsavebox{\diagavailabilitybox}
\newsavebox{\diagrewardbox}

\iclrfinalcopy 
\begin{document}

\maketitle

\begin{abstract}
Multimodal systems often encode every available input, even when a subset suffices for prediction. Adaptive acquisition can reduce this cost by using predictions from incrementally fused evidence to \emph{decide which modality to encode next and when to stop}. However, sequential fusion makes these predictions order-dependent, so decisions based on them may need to distinguish factorially many histories of the same acquired set. We introduce \method{}, which couples a Sequential Modality Aggregator (\backbone{}) with an Adaptive Runtime Controller (\policy{}) and uses acquired evidence to select each modality \emph{before} its encoder runs. \backbone{} executes only selected encoder and fusion branches, updates a fixed-size state, and predicts after each acquisition without recomputing earlier branches. We supervise every acquisition prefix under randomized modality subsets and orders to encourage consistent predictions across acquisition orders. \policy{} combines a set-dependent marginal-utility prior with residual fitted-Q learning to select the next available modality or stop, without inspecting unacquired inputs or retaining acquisition order. Across six multimodal classification datasets and eleven baselines, \method{} achieves 3.2\% higher macro-F1 and 61.4\% lower total inference GFLOPs on average relative to each dataset's most accurate baseline. End-to-end latency falls by 44.0\% across GPU and CPU and by 47.2\% on Android INT8 relative to the fastest measured baseline, on average. Under varying runtime modality missingness, \method{} still skips available modalities, matching or exceeding the best baseline macro-F1 in 21 of 24 conditions with 14.8\% lower total GFLOPs on average. \method{} thus offers a practical path toward efficient multimodal inference across heterogeneous devices.
\end{abstract}

\section{Introduction}
\label{sec:intro}

Multimodal applications increasingly integrate video, audio, language, inertial measurements, depth, and wireless signals~\citep{liang2021multibench,yang2023mm}. Most frameworks use modality-specific encoders followed by fusion, with substantial research on handling noisy~\citep{zhang2023provable} or missing~\citep{mohapatra2026maestro, mohapatra24_interspeech, mohapatra2023person, wang2023multi, NEURIPS2024_7d62a85e} inputs. However, processing every modality adds computation, energy, and latency as modality counts and deployment scales grow~\citep{sui2026modality}. This raises a complementary question: which available modalities need to be processed for each input? Figure~\ref{fig:teaser} illustrates this opportunity.


\begin{figure}[!htbp]
    \centering
    \vspace{-6pt}
    \includegraphics[width=0.85\linewidth]{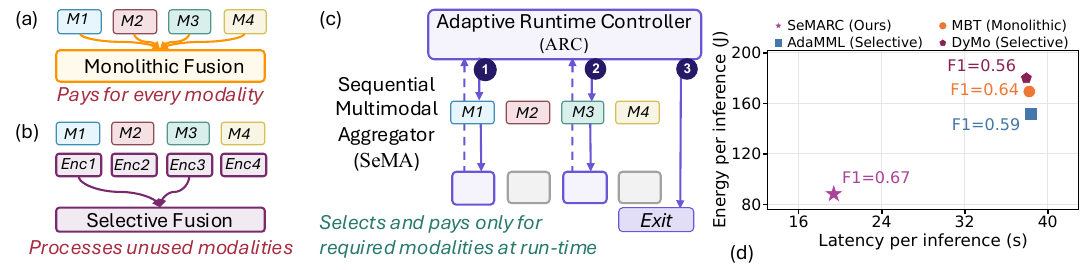}
    \captionsetup{font=small,skip=6pt}
    \vspace{-6pt}
    \caption{\textbf{Select before processing to avoid its cost.} (a) Monolithic fusion processes every modality. (b) Post-encoding selection pays for unused representations. (c) \method{} couples \policy{} and \backbone{} to acquire and incrementally fuse modalities or stop, executing only selected branches. (d) On Android (INT8), \method{} achieves lower latency and energy and higher $F_1$ (annotated) than the shown baselines.}
    \vspace{-8pt}
    \label{fig:teaser}
\end{figure}

Evidence from works on submodular modality selection~\citep{cheng2022greedy} and\begin{wrapfigure}{r}{0.37\textwidth}
    \centering
    \vspace{-14pt}
    \includegraphics[width=\linewidth]{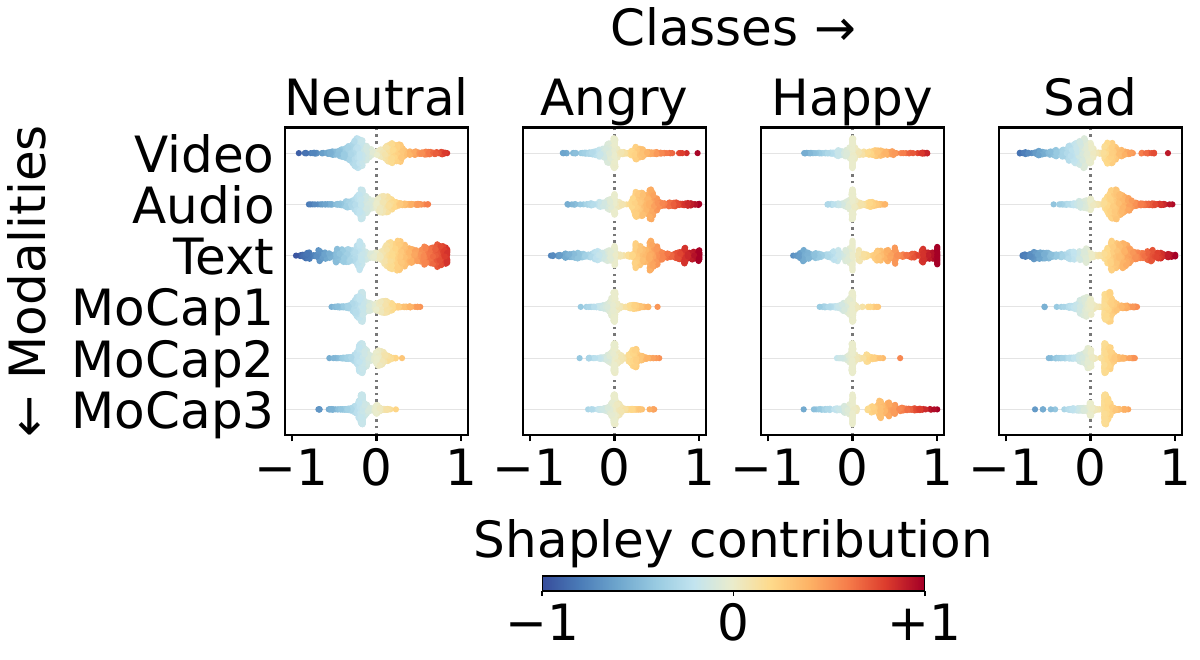}
    \captionsetup{font=scriptsize,skip=3pt}
    \vspace{-16pt}
    \caption{\textbf{Modality contributions vary across samples.} Class-wise Shapley contributions on IEMOCAP motivate sample-adaptive acquisition.}
    \vspace{-17pt}
    \label{fig:shap}
\end{wrapfigure} information-theoretic analysis~\citep{liang2023quantifying} suggests that overlapping task-relevant information can make a subset of modalities sufficient for prediction. Recent modality interaction analyses further reveal sample-level variations~\citep{yang2025efficient}, motivating sample-adaptive modality use. Even a simple Shapley analysis (Figure~\ref{fig:shap}) reveals variation in modality contributions across classes and among samples within each class. However, these analyses typically assess modality value post hoc: target-conditioned interaction estimates and correctness-based Shapley scores rely on ground-truth labels~\citep{yang2025efficient, covert2020understanding}, while prediction-based attributions require access to the inputs being explained~\citep{covert2022learning}. This motivates adaptive runtime estimation of modality value from acquired evidence, enabling the system to select the next modality or stop before incurring unnecessary computation.

But translating selective use into computational savings requires attention to when selection occurs (Figure~\ref{fig:teaser}a--c) and how the fusion is carried out subsequently. Selective fusion based on encoded features or lightweight modality previews can reduce downstream computation~\citep{xue2023dynamic,panda2021adamml}. However, it still incurs the cost of obtaining those modality features, including from modalities ultimately discarded. Active feature acquisition works address which inputs to query and when to stop~\citep{shim2018joint}, but reducing acquisition cost alone does not ensure reuse of multimodal computation across successive modalities. Motivated by these complementary directions, we propose to couple modality acquisition decisions with incremental multimodal fusion: selecting each modality before executing its encoder and fusion branch for every sample, reusing accumulated computation as evidence arrives, and stopping when further processing is no longer worthwhile.

We propose \textbf{\method{}}, a framework for sample-adaptive modality acquisition and incremental multimodal fusion (Figures~\ref{fig:teaser}c and~\ref{fig:sys_diag}). The system must choose the next modality or stop while reusing earlier computation. Yet incremental fusion can yield different predictions across orders of the same acquired evidence, and explicitly accounting for this dependence can require distinguishing factorially many acquisition histories. Our \textbf{Sequential Modality Aggregator (\backbone{})} reuses a compact bottleneck state and predicts after each acquisition. Training combines a modality-dropout curriculum, randomized orders, and prediction supervision with distillation at each step, encouraging consistent predictions across orders of the same acquired evidence. We relate prediction disagreement to prefix risk and use the approximate information state framework to characterize when a compact summary can serve as an approximately Markovian state. Our \textbf{Adaptive Runtime Controller (\policy{})} uses current predictions, acquired-set identity, and modality availability without explicitly retaining acquisition order. Trained on the frozen \backbone{}, it combines an offline, set-dependent marginal-utility prior with standard residual fitted-$Q$ learning to select the next modality or stop. At runtime, each acquisition executes only the selected encoder and fusion branch, updating the prediction for the next decision without rerunning earlier branches or processing unselected modalities. Our contributions are threefold:

\begin{itemize}[leftmargin=12pt, itemsep=0pt, topsep=0pt, partopsep=0pt, label=\raisebox{0.25ex}{\tiny$\blacktriangleright$}]
\item \textbf{Coupled incremental fusion and acquisition.} We introduce \method{}, which couples incremental bottleneck fusion (\backbone{}) with an acquisition controller (\policy{}) through prefix predictions trained to be consistent across acquisition orders. Acquisition and stopping decisions govern incremental encoder and fusion execution, reusing earlier computation and leaving unselected branches unexecuted.


\item \textbf{Design rationale and supporting analysis.} We examine order sensitivity in sequential fusion and motivate randomized-order prefix supervision. Supporting analysis relates population prediction risk to order disagreement and separately identifies the additional assumptions required for compact-state control.

\item \textbf{Broad empirical validation.} Across six datasets and eleven baselines, \method{} averages 61.4\% lower total GFLOPs and 3.2\% higher macro-F1 relative to each dataset's most accurate baseline, with practical latency gains on GPU, CPU, and Android. Under 10 to 40\% test-time missingness, it retains the best macro-F1 in 21 of 24 conditions while skipping modalities that are present, at 14.8\% lower total GFLOPs than the best baseline, so the savings come from selection rather than from missing inputs alone. 
Further component replacements, ablations, and diagnostics examine the proposed coupled design and acquisition behavior.
\end{itemize}

\section{Related Works} \label{sec:related_works}

\noindent\textbf{Efficient and incomplete multimodal inference.} Multimodal Bottleneck Transformer (MBT)~\citep{nagrani2021attention} limits cross-modal communication through bottleneck tokens; ShaSpec~\citep{wang2023multi} and MAESTRO~\citep{mohapatra2026maestro} handle incomplete inputs; MMEE~\citep{hamed2024multimodal} exits at early layer; DyMo~\citep{du2026inference} selects reliable recovered modalities. AdaMML~\citep{panda2021adamml} selects modalities from inexpensive previews; DynMM~\citep{xue2023dynamic} gates modality/fusion paths; HCMS~\citep{weng2023hcms} gates costly modalities in an audio-first hierarchy; multimodal early exiting~\citep{hamed2024multimodal} adapts depth. Our \method{} selects each subsequent available modality from acquired evidence before encoding and fusion, reusing accumulated state.

\noindent\textbf{Static and adaptive modality acquisition.} Greedy selection~\citep{cheng2022greedy,he2024efficient} uses approximate submodularity for cardinality-constrained, dataset-level modality subsets. Joint-AFA~\citep{shim2018joint} jointly learns cost-sensitive acquisition, stopping, and classification with order-invariant set encoding. A2MT~\citep{kossen2022active} schedules observations per timestep using predictive feedback over a fixed horizon, balancing prediction quality and acquisition costs while excluding computation costs. Concurrent, independently developed EAMA~\citep{peronimore} iteratively acquires unobserved modality subsets by estimated utility, stopping when none offers positive net utility. Complementing these methods, \method{} couples acquisition and incremental fusion per sample, using each updated prefix prediction to choose one whole modality branch or \textsc{Stop} without rerunning earlier branches.

\noindent\textbf{Sequential fusion and order robustness.} MoDN~\citep{trottet2023modular} and MultiModN~\citep{swamy2023multimodn} reuse state for intermediate predictions; MultiModN skips absent modules, evaluates a fixed order, and discusses permutation training. Deep Sets~\citep{NIPS2017_f22e4747} constructs invariant set functions; Janossy pooling~\citep{murphy2018janossy} studies symmetrization of order-sensitive functions, including random permutation training. \method{} trains sequential fusion over randomized modality subsets and orders with prefix supervision, relates expected prediction disagreement to excess prefix cross-entropy risk, and evaluates only the realized acquisition path at inference. Separately, we specialize the approximate information state framework~\citep{subramanian2022ais} to characterize the additional reward and action-conditioned transition assumptions needed for control from compact predictive summaries.

\section{Methodology}\label{sec:method}


We aim to improve inference efficiency by selectively processing available modalities while preserving predictive performance. For each sample $\bm{x}$, let $\mathcal{A}\subseteq\{1,\ldots,M\}$ denote the available modalities. A controller decides which available modality to process next and when to stop, using only acquired evidence and modality availability. These decisions precede the execution of each selected branch, avoiding the cost of encoding and fusing unselected modalities.

Incremental fusion enables these sequential decisions to reuse previous computation, but can make predictions depend on acquisition order. Ideally, different processing orders of the same sample and acquired modality set should yield consistent predictions. This motivates co-designing the fusion network and controller around an order-robust summary of the acquired evidence.

\method{} addresses these requirements by coupling \backbone{}, which incrementally fuses selected modalities and is trained for order-robust prediction after each acquisition, with \policy{}, which uses the resulting summary and modality availability to select the next modality or stop. As shown in Figure~\ref{fig:sys_diag}, each decision precedes the selected branch's execution, leaving unselected encoder and fusion branches unexecuted.

\begin{figure}
    \centering
    \captionsetup{font=small,skip=4pt}
    \includegraphics[width=\linewidth]{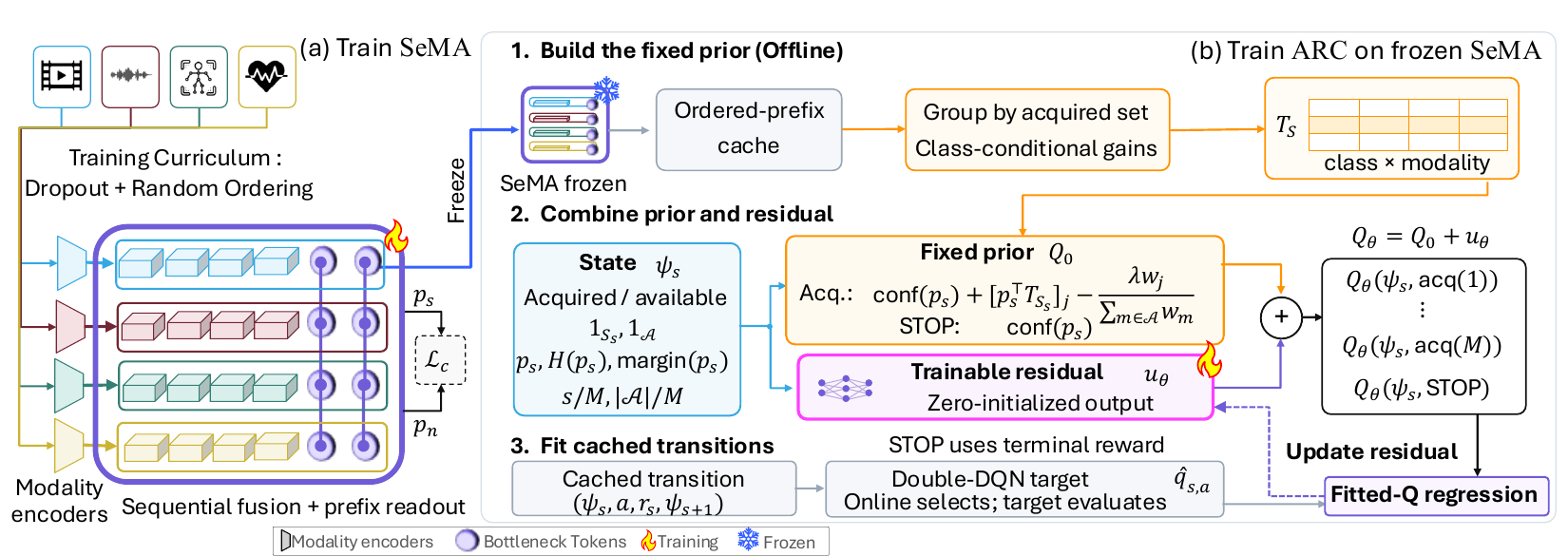}
    \caption{\textbf{\method{} training.} (a) \backbone{} learns prefix predictions $p_s$ under modality dropout and randomized orders, with classification supervision and same-forward distillation from detached $p_n$, the prediction after all $n$ retained modalities. (b) 1. Group frozen-backbone ordered-prefix predictions by acquired set $S$ to estimate class-conditional marginal gains $\bm T_S$. 2. Combine the fixed cost-aware prior $Q_0$ and trainable residual $u_\theta$ to score acquisition and stopping actions using acquired evidence and availability. 3. Fit cached-action values to Double-DQN targets, updating only the residual; STOP uses its terminal reward.}
    \label{fig:sys_diag}
    \vspace{-15pt}
\end{figure}

\subsection{Modality-branched Sequential Bottleneck Aggregator Network (\backbone{})}
\label{sec:m_arch}


\backbone{} adapts the attention-bottleneck mechanism of MBT~\citep{nagrani2021attention} for sequential fusion of selected modalities, producing a prediction after each acquisition. Let $\sigma=(\sigma_1,\ldots,\sigma_n)$ denote an acquisition order of distinct modalities from $\mathcal A$. After step $s$, $\sigma_{1:s}$ denotes the ordered prefix and $S_s=\{\sigma_1,\ldots,\sigma_s\}$ the corresponding acquired set.

\noindent\textbf{Modality-specific bottleneck attention.} Each modality $j$ has an input projection $\operatorname{W}_j$, an encoder $\operatorname{Enc}_j$, and a fusion module $\mathcal F_j$. At step $s$, only the selected branch $j=\sigma_s$ executes, producing $\bm h_j=\operatorname{Enc}_j(\bm x_j\operatorname{W}_j)$. Information from earlier acquisitions is carried by $\bm z_{s-1}\in\mathbb R^{K\times d}$, a compact representation comprising $K$ bottleneck tokens of feature dimension $d$; the initial tokens $\bm z_0$ are learned. We prepend the bottleneck tokens to the modality tokens; this configuration yields higher macro-F1 than suffix placement in our pilot analyses (discussed more in Figure~\ref{fig:ablation}). The fusion module jointly attends over these bottleneck tokens and the current modality tokens, producing $[\widetilde{\bm z}_s;\bm h'_s] =\mathcal F_j([\bm z_{s-1};\bm h_j])$, where $[\,\cdot;\cdot\,]$ denotes token concatenation. It applies $L_f$ pre-normalized transformer layers followed by layer normalization, yielding a bottleneck proposal $\widetilde{\bm z}_s$ and fused modality tokens $\bm h'_s$.

\noindent\textbf{Sequential bottleneck aggregation and prefix prediction.} The first bottleneck proposal initializes $\bm z_1=\widetilde{\bm z}_1$. Subsequent proposals update the running mean: $\bm z_s=(1-\frac1s)\bm z_{s-1}+\frac1s\widetilde{\bm z}_s=\frac1s\sum_{r=1}^s\widetilde{\bm z}_r$, for $s\geq2$. The bottleneck retains $K$ tokens regardless of the number of acquired modalities\footnote{The mean update does not impose permutation invariance: each proposal depends on the preceding bottleneck state and can therefore change with acquisition order.}. To predict, we average the layer-normalized tokens in $\bm h'_s$ to obtain a modality summary $\bm r_s$.
Their running mean is $\bar{\bm h}_s=\frac1s\sum_{r=1}^s\bm r_r$. A shared classifier $g$ produces logits $\bm o_s=g(\bar{\bm h}_s)$ and probabilities $p_s=\operatorname{softmax}(\bm o_s)$. Here, $p_s$ predicts from the realized prefix $\sigma_{1:s}$, with predicted class $\hat y_s=\arg\max_k p_s(k)$. Both $\bm z_s$ and $\bar{\bm h}_s$ are updated from their previous aggregates and the current branch outputs, without re-executing previously processed encoder or fusion branches. The controller can therefore use each prefix prediction to continue acquisition or stop, leaving unselected branches unexecuted.

\noindent\textbf{Training across subsets and orders.} We combine modality  dropout with randomized-order prefix supervision. We linearly increase the base dropout probability after an initial warm-up with all modalities, then hold it fixed. Each forward pass samples a nonempty retained set of $n$ modalities and a uniform random permutation $\sigma$ of that set. Every prefix receives classification supervision. We additionally use the prediction after processing all retained modalities as a same-forward distillation target, $\bar p=\operatorname{sg}(p_n)$, where $\operatorname{sg}$ denotes stop-gradient. With cross-entropy loss $\ell$, we minimize
\begin{equation}\label{eq:loss}
    \mathcal L_c
    =\ell(\bm o_n,y)
    +\frac{\lambda_{\mathrm{ds}}}{n}
      \sum_{s=1}^n\ell(\bm o_s,y)
    +\frac{\lambda_{\mathrm{kd}}}{n}
      \sum_{s=1}^n\operatorname{KL}(\bar p\,\|\,p_s).
\end{equation}
The three terms supervise the full retained prefix, supervise every prefix, and distill the same-forward full-prefix prediction, respectively. The weights $\lambda_{\mathrm{ds}}$ and $\lambda_{\mathrm{kd}}$ control prefix supervision and distillation. Conditional on the retained set and prefix length $s$, the acquired set $S_s$ is uniform over its $s$-element subsets (Appendix~\ref{app:order}). Together, modality dropout and order resampling expose the backbone to different acquisition paths.

\noindent\emph{\textbf{{Order sensitivity: analysis and empirical evidence.}}}
Different acquisition orders of the same modality subset reveal the same evidence, but sequential fusion can introduce order-dependent predictions. Randomized ordering and prefix supervision train these paths toward a common conditional label distribution, while modality dropout exposes this training to different retained sets.

\begin{proposition}[Prefix risk bounds order disagreement]
\label{thm:order_robustness}
Fix an available set $\mathcal A$ and a nonempty acquired set $S\subseteq\mathcal A$. Let $p_S^\sigma$ denote the prediction after processing $S$ in order $\sigma$, and let $\epsilon_S$ be its population prefix cross-entropy above the Bayes minimum, averaged over samples and uniformly sampled orders independent of the samples. For two such independent orders $\sigma$ and $\tau$ evaluated on the same acquired evidence,
\vspace{-5pt}
\begin{equation}\label{eq:order_robustness}
    \mathbb E\!\left[
        \operatorname{TV}(p_S^\sigma,p_S^\tau)
    \right]
    \leq\sqrt{2\epsilon_S},
\end{equation}
where $\operatorname{TV}$ is total variation distance.
\end{proposition}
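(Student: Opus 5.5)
The plan is to route both predictions through the Bayes-optimal conditional label distribution and apply the triangle inequality. Let $q_S(\cdot\mid x_S)$ denote the Bayes conditional distribution $\Pr(Y=\cdot\mid x_S)$ given the acquired evidence $x_S=(x_j)_{j\in S}$. The first step is the standard decomposition of cross-entropy risk. For any predictor $p$ that is a measurable function of $x_S$ and of an auxiliary order $\sigma$ drawn independently of the sample,
\begin{equation*}
\mathbb E\bigl[-\log p(Y)\bigr]-\mathbb E\bigl[-\log q_S(Y\mid x_S)\bigr]=\mathbb E_{x_S,\sigma}\bigl[\operatorname{KL}\bigl(q_S(\cdot\mid x_S)\,\|\,p_S^\sigma\bigr)\bigr].
\end{equation*}
This follows by conditioning on $(x_S,\sigma)$. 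Since $\sigma$ is independent of $(x,Y)$, the conditional law of $Y$ given $(x_S,\sigma)$ is still $q_S$. Hence $\epsilon_S=\mathbb E_{x_S,\sigma}[\operatorname{KL}(q_S\|p_S^\sigma)]$. This identification of the excess risk as an expected KL is the main point needing care. In particular, the ``Bayes minimum'' must be taken with respect to the information in $x_S$. The auxiliary randomness in $\sigma$ cannot lower this minimum, precisely because $\sigma$ is independent of the label.

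The second step applies Pinsker's inequality pointwise, $\operatorname{TV}(q_S,p_S^\sigma)\le\sqrt{\tfrac12\operatorname{KL}(q_S\|p_S^\sigma)}$. Jensen's inequality for the concave square root then gives
\begin{equation*}
\mathbb E\bigl[\operatorname{TV}(q_S,p_S^\sigma)\bigr]\le\sqrt{\tfrac12\,\epsilon_S}.
\end{equation*}

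The third step uses the triangle inequality for total variation: $\operatorname{TV}(p_S^\sigma,p_S^\tau)\le\operatorname{TV}(p_S^\sigma,q_S)+\operatorname{TV}(q_S,p_S^\tau)$. Here $\sigma$ and $\tau$ are independent uniform orders, each independent of the sample, so $(x_S,\tau)$ has the same joint law as $(x_S,\sigma)$. Taking expectations therefore bounds each term by $\sqrt{\epsilon_S/2}$, and the total by $2\sqrt{\epsilon_S/2}=\sqrt{2\epsilon_S}$, which is exactly \eqref{eq:order_robustness}.

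The only obstacle I anticipate is in the first step: the bound holds only if the predictions depend on the sample solely through $x_S$ and the independent order. This is guaranteed by the architecture, since unselected branches never execute. If instead one used TV with the convention $\tfrac12\|\cdot\|_1$ and a tighter form of Pinsker, the constant would still be at most the one stated, so the stated bound is safe.
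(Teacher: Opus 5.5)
Your proof is correct and matches the paper's argument step for step: you identify $\epsilon_S$ as $\mathbb E[\operatorname{KL}(q_S\|p_S^\sigma)]$ by conditioning on $(x_S,\sigma)$, then apply the triangle inequality through the common posterior with Pinsker and Jensen, using that the two terms have equal expectation. One correction to your closing remark: the Pinsker form $\operatorname{TV}\le\sqrt{\tfrac12\operatorname{KL}}$ already assumes the paper's $\sup_E$ (equivalently $\tfrac12\|\cdot\|_1$) convention, and under the full-$L_1$ convention the bound would double, so the constant depends on that convention and is not safe under either one.
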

\begin{wrapfigure}{r}{0.33\textwidth}
    \centering
    \vspace{-36pt}
    \includegraphics[width=\linewidth]
        {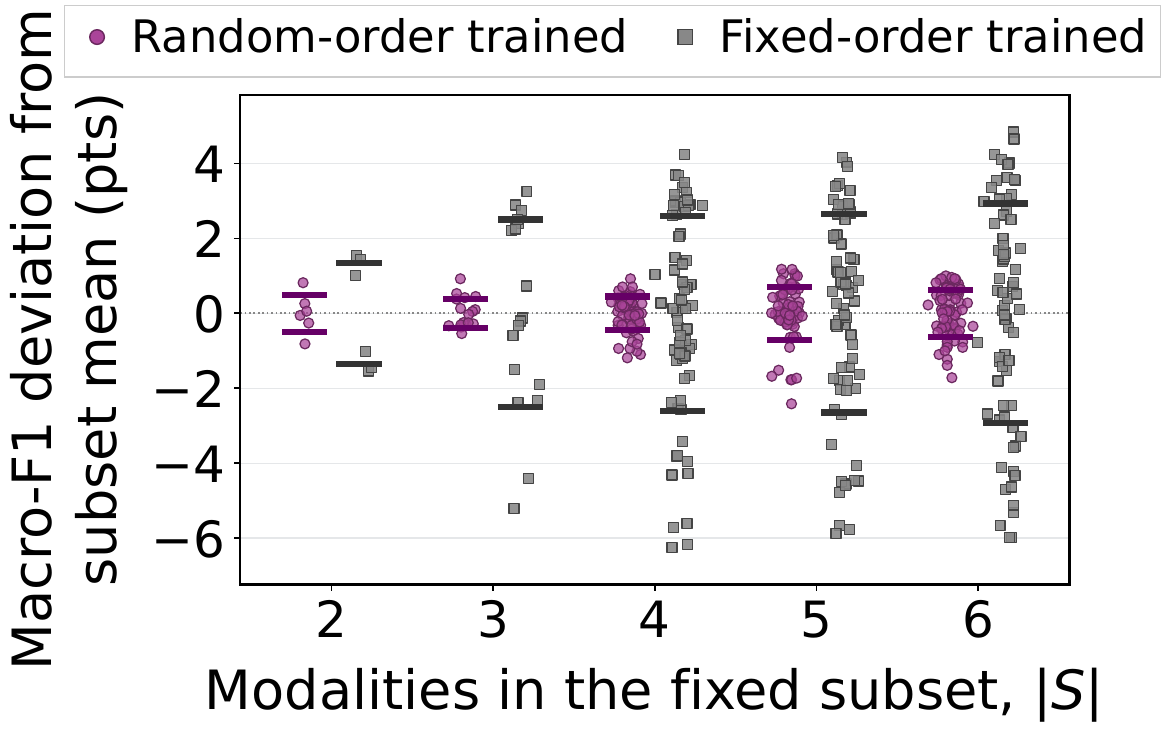}
    \captionsetup{font=small,skip=4pt}
    \vspace{-12pt}
    \caption{\small Macro-F1 deviation from the mean across acquisition
    orders of each fixed modality subset.}
    \vspace{-20pt}
    \label{fig:rand_curr}
\end{wrapfigure} \noindent\textbf{Implication.} Low excess prefix loss makes the predictions approach a common, order-independent posterior on average. Prefix supervision thus targets the excess-risk term governing disagreement across acquisition paths, while inference evaluates only the realized path.
Same-forward distillation complements this mechanism when the full-prefix targets are stable across orders and the prefix-to-target KL divergence is small (Complete proof in Appendix~\ref{app:order}).

\noindent\textbf{Empirical evidence.}
Figure~\ref{fig:rand_curr} shows that randomized-order training yields smaller macro-F1 deviations across different orders of a fixed acquired subset than fixed-order training, which illustrates our approximate robustness to prefix ordering.



\subsection{Prior-Augmented Adaptive Run-time Controller (\policy{})}
\label{sec:m_rl}

An acquired set of $s$ modalities admits $s!$ acquisition orders, which order-sensitive fusion may map to different predictive summaries. The approximate order robustness of \backbone{} (Proposition~\ref{thm:order_robustness}) motivates a compact state for \policy{} comprising acquired-set identity, modality availability, and the predictive summary, without explicitly encoding acquisition order. We formulate this sequential decision problem and use the approximate  information state (AIS) framework to analyze the conditions under which this compact state supports effective control.

\noindent\textbf{Sequential acquisition and controller state.}
We freeze \backbone{} before training \policy{}. Each sample $\bm x$ with a nonempty available set $\mathcal A$ defines an episode. After prefix $\sigma_{1:s}$, $s\geq1$, the controller observes
\begin{equation}\label{eq:rlstate}
\bm\psi_s
=
\big[
    \mathbf{1}_{S_s}
    \mathbin{\Vert}
    \mathbf{1}_{\mathcal A}
    \mathbin{\Vert}
    p_s
    \mathbin{\Vert}
    H(p_s)
    \mathbin{\Vert}
    \operatorname{margin}(p_s)
    \mathbin{\Vert}
    \tfrac{s}{M}
    \mathbin{\Vert}
    \tfrac{|\mathcal A|}{M}
\big].
\end{equation}
Here $\mathbf{1}_S\in\{0,1\}^M$ is the modality mask of $S$, $\Vert$ denotes concatenation, $H(p_s)$ is predictive entropy, and $\operatorname{margin}(p_s)$ is the difference between the two largest probabilities. The final entries normalize acquired and available modality counts. All features use only acquired evidence and availability.

Admissible actions are $\mathcal U_s=\{\operatorname{acq}(j): j\in\mathcal A\setminus S_s\}\cup\{\textsc{stop}\}$. Acquiring $j$ executes its branch, updates the backbone aggregates, and produces $\bm\psi_{s+1}$ with $S_{s+1}=S_s\cup\{j\}$. Stopping returns $\hat y_s$ and ends the episode; it is the only admissible action when $S_s=\mathcal A$. We use $\bm\psi_s$ as the state of a finite-horizon MDP approximation for learning acquisition and stopping decisions. The AIS analysis below characterizes the loss from this history compression under bounded reward and transition approximation errors.

\smallskip
\noindent\textbf{Reward and acquisition cost.} Stopping after $s$ acquisitions gives terminal reward $R_s$ and episodic return $G_s$:
\vspace{-10pt}
\begin{equation}\label{eq:rlreturn}
R_s=\beta(y)\,\mathbb{1}[\hat y_s=y],
\qquad
G_s=R_s-\lambda
\frac{\sum_{m\in S_s}w_m}{\sum_{m\in\mathcal A}w_m}.
\end{equation}
Here $\beta(y)\propto1/\operatorname{freq}(y)$ is normalized to unit mean, $w_m>0$ is modality $m$'s fixed relative acquisition cost, and $\lambda\geq0$ controls the acquisition penalty. Acquiring modality $j$ incurs a penalty of $\lambda w_j/\sum_{m\in\mathcal A}w_m$. We use uniform costs, $w_m=1$, as our default setting for comparability with other methods, giving a per-modality penalty of $\lambda/|\mathcal A|$, and examine sensitivity to nonuniform costs in Section~\ref{sec:ablate}. The controller is trained to maximize expected undiscounted return, using labels only during offline training.

\smallskip
\noindent\emph{\textbf{Theoretical support: Approximate Information State (AIS).}} We analyze this MDP approximation through the AIS framework. The modality masks determine admissible actions and acquisition costs exactly; the remaining approximation errors concern expected stopping rewards and action-conditioned successor-state distributions. We specialize Theorem~9 of \citet{subramanian2022ais} to our finite-horizon, undiscounted acquisition problem.


\begin{corollary}[Conditional AIS bound for modality acquisition]
\label{thm:ais_control}
Let $H_s$ denote the complete acquired history and $\beta_{\max}=\max_k\beta(k)$. Consider a model on $\bm\psi_s$ that preserves modality availability, admissible actions, acquired-set updates, and nonnegative acquisition costs, with stopping rewards in $[0,\beta_{\max}]$. Assume, uniformly over all reachable histories, that its stopping-reward function approximates $\mathbb E[R_s\mid H_s]$ within $\epsilon_{\mathrm r}$ and, for every admissible acquisition, its successor-state distribution approximates the conditional distribution of $\bm\psi_{s+1}$ given $H_s$ and that action within $\epsilon_{\mathrm p}$ in total variation. Define the normalized state-compression error $\delta=\max\{\epsilon_{\mathrm r}/\beta_{\max},\,\epsilon_{\mathrm p}\}$. If $\widehat\pi$ is optimal for this model and executed through $\bm\psi_s$, then
\begin{equation}\label{eq:ais_control}
0\leq V^\star(H_s)-V^{\widehat\pi}(H_s)
\leq 2\left[\epsilon_{\mathrm r}+(|\mathcal A|-s)\beta_{\max}\epsilon_{\mathrm p}\right]
\leq 2\beta_{\max}(|\mathcal A|-s+1)\delta,
\end{equation}
where $V^\star$ and $V^{\widehat\pi}$ are expected remaining returns in the original process under the optimal history-dependent policy and $\widehat\pi$, respectively. Thus, $\epsilon_{\mathrm r}$ measures error in the terminal reward, while each of the at most $|\mathcal A|-s$ remaining acquisitions contributes transition error on the reward scale through $\beta_{\max}\epsilon_{\mathrm p}$\footnote{Formal definitions and the derivation using an equivalent fixed-horizon representation of stopping are given in Appendix~\ref{app:ais}.}.
\end{corollary}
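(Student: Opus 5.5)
We plan to prove Corollary~\ref{thm:ais_control} by recasting the stopping problem as a fixed-horizon, undiscounted MDP and then running the standard AIS simulation-lemma argument of \citet{subramanian2022ais}, Theorem~9, backwards in time.

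\textbf{Embedding as a fixed horizon.} Since at most $|\mathcal A|$ acquisitions are possible, we add an absorbing terminal state reached after \textsc{stop}. It yields zero reward and zero cost thereafter, which gives a horizon of $T=|\mathcal A|$ decision epochs. Acquisition costs are deterministic functions of the masks $\mathbf 1_{S_s},\mathbf 1_{\mathcal A}$, and the model preserves these masks, the admissible action sets, and the set updates exactly. Costs therefore contribute no error, and only the stopping reward and the successor distribution of $\bm\psi_{s+1}$ differ between the model and the true history-conditioned process. We define the model value $\widehat V_s(\bm\psi_s)$ by model dynamic programming, and the true values $V^\star_s(H_s)$ and $V^{\widehat\pi}_s(H_s)$ by history-based recursions.

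\textbf{Step 1: bounding $\sup_{H_s}|V^\star_s(H_s)-\widehat V_s(\bm\psi_s(H_s))|$.} We use backward induction on the number $k=|\mathcal A|-s$ of remaining possible acquisitions, aiming for the claim
\[
e_k\le \epsilon_{\mathrm r}+k\,\beta_{\max}\epsilon_{\mathrm p}.
\]
For the \textsc{stop} action, the Q-values differ by at most $\epsilon_{\mathrm r}$. For $\operatorname{acq}(j)$, the Q-values share the same cost, and the continuation term splits as
\[
\mathbb E[V^\star_{s+1}\mid H_s,j]-\mathbb E_{\rm model}[\widehat V_{s+1}].
\]
We insert $\mathbb E[\widehat V_{s+1}(\bm\psi_{s+1})\mid H_s,j]$ in between. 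One piece is then bounded by $e_{k-1}$. The other is bounded by $\operatorname{span}(\widehat V_{s+1})\cdot\epsilon_{\mathrm p}$, using the TV duality $|\mathbb E_P f-\mathbb E_Q f|\le \operatorname{span}(f)\,\operatorname{TV}(P,Q)$. Because rewards lie in $[0,\beta_{\max}]$ and costs are nonnegative, model values lie in $[-c,\beta_{\max}]$. The key point, and the step we expect to be the main obstacle, is showing that the relevant span is at most $\beta_{\max}$ rather than $\beta_{\max}+\lambda$. We would handle this by centring the comparison among successor states that share the same set $S_{s+1}$ (guaranteed, since set updates are exact), so that the accumulated cost is identical across them. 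The residual value, future costs included, then lies in an interval of length $\beta_{\max}$: the optimal value is at least the value of stopping immediately, which is $\ge 0$, and at most $\beta_{\max}$. Taking the max over actions gives $e_k\le \max(\epsilon_{\mathrm r},\,e_{k-1}+\beta_{\max}\epsilon_{\mathrm p})$, which yields the claim.

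\textbf{Step 2: bounding $\sup_{H_s}|V^{\widehat\pi}_s(H_s)-\widehat V_s(\bm\psi_s(H_s))|$.} We repeat the same induction with the fixed policy $\widehat\pi$ in place of the max, and obtain the same bound. For the span argument we again use that $\widehat V\ge 0$ after fixing the cost, since stopping is always admissible in the model.

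\textbf{Step 3: combining.} The triangle inequality gives
\[
V^\star-V^{\widehat\pi}\le 2[\epsilon_{\mathrm r}+(|\mathcal A|-s)\beta_{\max}\epsilon_{\mathrm p}].
\]
Nonnegativity holds because $\widehat\pi$, executed through $\bm\psi_s$, is a particular history-dependent policy. The final inequality follows by substituting $\epsilon_{\mathrm r}\le\beta_{\max}\delta$ and $\epsilon_{\mathrm p}\le\delta$.
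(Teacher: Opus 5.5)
Your proposal is correct and follows essentially the paper's proof. Both arguments embed stopping in a fixed horizon, use the observation that the model's remaining value lies in $[0,\beta_{\max}]$ so its span is at most $\beta_{\max}$, and compare $V^\star$ and $V^{\widehat\pi}$ to $\widehat V$ before combining with the triangle inequality. The only difference is that you unroll Theorem~9 of \citet{subramanian2022ais} as an explicit backward induction instead of citing it. This lets you skip the paper's duplicate-action-label padding, which the paper needs to meet that theorem's common-action-space hypothesis, and its TV-convention conversion. Your ``centring'' step is harmless but unnecessary once values are taken as remaining returns, as the paper does.
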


\noindent\textbf{Implication.} Together, Proposition~\ref{thm:order_robustness} and Corollary~\ref{thm:ais_control} identify state compression as the interface between backbone and controller: \backbone{} suppresses acquisition-order dependence, while the AIS result characterizes when the resulting compact state $\bm\psi_s$ supports near-optimal control. Under these conditions, the acquisition problem can be solved through Bellman planning on $\bm\psi_s$ rather than the complete acquisition history. This motivates the value-based controller developed next: we construct an informative prior over acquisition values and then refine those values with fitted-$Q$ learning.


\begingroup
\setlength{\abovedisplayskip}{4pt}
\setlength{\belowdisplayskip}{4pt}
\setlength{\abovedisplayshortskip}{0pt}
\setlength{\belowdisplayshortskip}{4pt}
\setlength{\jot}{2pt}

\smallskip
\noindent\textbf{Construction of the $Q$-prior.} We cache frozen \backbone{} predictions for every ordered modality prefix and group them by acquired set. Let $\Pi(S)$ denote the cached orders of $S$. For sample $i$, define the order-averaged correctness and the marginal gain of modality $j\notin S$ as
\begin{equation}\label{eq:coalition_value}
v_i(S)=\frac{1}{|\Pi(S)|}\sum_{\sigma\in\Pi(S)}\mathbb{1}[\hat y_i(S,\sigma)=y_i],
\qquad
\Delta_{i,j}(S)=v_i(S\cup\{j\})-v_i(S).
\end{equation}
Averaging these gains within each class gives a table $\bm T_S\in\mathbb R^{C\times M}$: $[\bm T_S]_{k,j}=\mathbb E_i[\Delta_{i,j}(S)\mid y_i=k]$, $j\notin S$, where $\mathbb E_i$ denotes the empirical average over cached samples. With $\operatorname{conf}(p_s)=\max_k p_s(k)$, the fixed prior is
\vspace{-10pt}
\begin{equation}\label{eq:qprior}
\begin{aligned}
Q_0(\bm\psi_s,\operatorname{acq}(j))
&=\operatorname{conf}(p_s)+[p_s^\top\bm T_{S_s}]_j-\frac{\lambda w_j}{\sum_{m\in\mathcal A}w_m},\\
Q_0(\bm\psi_s,\textsc{stop})
&=\operatorname{conf}(p_s).
\end{aligned}
\end{equation}
The predictive distribution weights class-specific gains, providing a set-dependent initialization without requiring labels or unacquired inputs at inference.
\par
\endgroup


\smallskip
\noindent\emph{\textbf{Interpretation of the acquisition prior.}} The tables $\bm T_S$ retain the class-conditional marginal contributions averaged in Shapley attribution~\citep{shapley1953value}, while preserving their dependence on the acquired set. Weighting these gains by $p_s$ provides a sample-conditioned estimate of order-averaged correctness gain. The prior initializes action values, while residual learning targets the class-weighted, multistep return. Appendix~\ref{app:qprior} establishes the Shapley connection and states the assumptions required for an exact conditional expected-gain interpretation.

\begingroup
\setlength{\abovedisplayskip}{4pt}
\setlength{\belowdisplayskip}{4pt}
\setlength{\abovedisplayshortskip}{0pt}
\setlength{\belowdisplayshortskip}{4pt}

\smallskip
\noindent\textbf{Residual fitted-$Q$ learning.} We keep $Q_0$ fixed and learn an additive correction:
\begin{equation}\label{eq:residual}
Q_\theta(\bm\psi_s,a)=Q_0(\bm\psi_s,a)+u_\theta(\bm\psi_s,a).
\end{equation}
The residual's output layer is initialized to zero, so $Q_\theta=Q_0$ initially. We use fitted-$Q$ regression~\citep{ernst2005tree} with Double-DQN targets~\citep{van2016deep}. Using the reward and acquisition penalty in Equation~\ref{eq:rlreturn}, we construct the Double-DQN target for each legal cached transition as,
\vspace{-10pt}
\begin{equation}\label{eq:target}
\widehat q_{s,a}=
\begin{cases}
R_s, & a=\textsc{stop},\\
-\dfrac{\lambda w_j}{\sum_{m\in\mathcal A}w_m}+Q_{\bar\theta}(\bm\psi_{s+1},a^+), & a=\operatorname{acq}(j),
\end{cases}
\end{equation}
where $a^+=\arg\max_{a'\in\mathcal U_{s+1}}Q_\theta(\bm\psi_{s+1},a')$. The online network selects the successor action, and the target network, with parameters $\bar\theta$, evaluates it. Over the legal cached transitions $\mathcal D_Q$, we minimize $\mathcal L_Q(\theta)=\mathbb E_{\mathcal D_Q}[(Q_\theta(\bm\psi_s,a)-\widehat q_{s,a})^2]$, where the expectation is the empirical average and targets are held constant during each regression update. Only the residual parameters are optimized; network, cache-splitting, and target-update details are deferred to the appendix.
\par
\endgroup

\subsection{\method{} Operation at Run Time}
\label{sec:summary}


Given the available set, we first acquire the modality with the highest offline-estimated population Shapley value among the modalities. After an initial acquisition from the nonempty available set $\mathcal A$, \backbone{} emits $p_s$ and \policy{} constructs $\bm\psi_s$ to select $a_s^\star=\arg\max_{a\in\mathcal U_s}Q_\theta(\bm\psi_s,a)$. Acquiring another modality executes only its encoder and fusion branch, updates the stored bottleneck and readout aggregates, and repeats the loop. Stopping returns $\hat y_s$ and is the only admissible action when all available modalities are acquired. This adapts computation per sample without rerunning earlier branches or executing unselected ones.
\section{Results and Discussion}
\label{sec:experiments}
We evaluate \method{} against eleven multimodal baselines on six datasets, covering full and simulated partial modality availability and heterogeneous compute platforms.

\smallskip

\noindent\textbf{Datasets.} We evaluate \method{} on six multimodal time-series classification benchmarks: \textbf{IEMOCAP}~\citep{busso2008iemocap}, \textbf{MM-Fi}~\citep{yang2023mm}, \textbf{CMI}~\citep{cmi}, \textbf{CZU-MHAD}~\citep{chao2022czu}, \textbf{EAV}~\citep{lee2024eav}, and \textbf{UTD-MHAD}~\citep{chen2015utd}. Training, validation, and test subjects are disjoint within each evaluation split. EAV uses three independently randomized cross-subject splits; test subjects may overlap across runs. Appendix~\ref{app:data_prep} provides full preparation details and the other dataset protocols, including rebalanced splits.

\smallskip

\noindent\textbf{Baselines.} We compare \method{} with eleven baselines grouped by selection timing. \emph{Monolithic fusion}: \textbf{MAESTRO}, \textbf{ShaSpec}, \textbf{MBT} and \textbf{DecAlign}~\citep{qian2026decalign} all fuse every available input. \emph{Fixed-sequential fusion}: \textbf{MultiModN} processes externally available modalities sequentially without learning their order or stopping. \emph{Selective fusion}: \textbf{Greedy Modality Selection (GMS)} chooses a dataset-level subset under a fixed budget; \textbf{AdaMML} selects per segment using multimodal previews; \textbf{DynMM} routes through data-dependent fusion paths; \textbf{DyMo} selects among recovered modalities; and an early exit baseline \textbf{MMEE} which exits along a predetermined computation graph. None selects the next unseen modality from accumulated prefix evidence with learned stopping. \emph{Proactive sequential acquisition}: \textbf{Joint-AFA} learns feature acquisition and stopping; we treat each modality as a feature group. Baselines use published hyperparameters, with search for unspecified settings. We report mean macro-F1, GFLOPs, latency, and energy in the main text; full statistics and implementation details are given in Appendices~\ref{app:add_results} and~\ref{app:baselines}, respectively. Per-sample compute covers the full path from raw modality streams to logits or, when explicitly labeled $\mathrm{GF}_{\mathrm f}$, only the post modality-encoding to logits stage; Appendix~\ref{app:metrics} specifies selector accounting. 

\subsection{Predictive Performance and Inference Efficiency}

\noindent\textbf{Higher F1 at lower average compute.} \method{} achieves the highest mean macro-F1 on four of six datasets (Table~\ref{tab:full-results}). On CMI, the F1 is 0.556 versus 0.561 for DecAlign, while total compute falls from 23.33 to 0.43 GFLOPs. On MM-Fi, total computation falls from 18,098 GFLOPs for AdaMML to 738 GFLOPs for \method{}, a reduction of approximately 95.9\%; their post-extraction costs are 0.17 and 0.07 GFLOPs, respectively. By selecting each modality before executing its branch, \method{} avoids extraction for unselected inputs and uses approximately 47.9\% of the available modalities on average. On average, \method{} improves macro-F1 by approximately 3.2\% and reduces total GFLOPs by 61.4\% relative to each dataset's highest-F1 baseline.

\begin{table*}[t]
\centering
\footnotesize
\captionsetup[table]{font=small}
\setlength{\tabcolsep}{1.2pt}
\renewcommand{\arraystretch}{1.12}
\caption{\small Macro-F1 and mean total GFLOPs (GF) over three folds. Parentheses show post-encoder fusion cost ($\mathrm{GF}_{\mathrm f}$). \textcolor{burgundy}{\textbf{Bold}} and \underline{underlining} mark the best and second-best values per metric. Complete results: Appendix~\ref{app:full_results}.}
\label{tab:full-results}
\vspace{-8pt}
\resizebox{\textwidth}{!}{%
\begin{tabular}{@{}ll*{5}{cc!{\color{black!25}\vrule width 0.35pt}}cc@{}}
\toprule
\multirow{2}{*}{\textbf{Type}} & \multirow{2}{*}{\textbf{Method}}
& \multicolumn{2}{c}{\textbf{IEMOCAP}}
& \multicolumn{2}{c}{\textbf{MM-Fi}}
& \multicolumn{2}{c}{\textbf{CMI}}
& \multicolumn{2}{c}{\textbf{CZU-MHAD}}
& \multicolumn{2}{c}{\textbf{EAV}}
& \multicolumn{2}{c}{\textbf{UTD-MHAD}} \\
\cmidrule(lr){3-4} \cmidrule(lr){5-6} \cmidrule(lr){7-8}
\cmidrule(lr){9-10} \cmidrule(lr){11-12} \cmidrule(lr){13-14}
& & F1 $\uparrow$ & $\mathrm{GF}\,(\mathrm{GF}_{\mathrm{f}})\downarrow$
  & F1 $\uparrow$ & $\mathrm{GF}\,(\mathrm{GF}_{\mathrm{f}})\downarrow$
  & F1 $\uparrow$ & $\mathrm{GF}\,(\mathrm{GF}_{\mathrm{f}})\downarrow$
  & F1 $\uparrow$ & $\mathrm{GF}\,(\mathrm{GF}_{\mathrm{f}})\downarrow$
  & F1 $\uparrow$ & $\mathrm{GF}\,(\mathrm{GF}_{\mathrm{f}})\downarrow$
  & F1 $\uparrow$ & $\mathrm{GF}\,(\mathrm{GF}_{\mathrm{f}})\downarrow$ \\

\midrule

\multirow{4}{*}{\textbf{Monolithic}} & MAESTRO & .51$_{\pm.04}$ & 6696 (.23) & .18$_{\pm.09}$ & 18098 (.28) & .47$_{\pm.02}$ & 0.28 (.28) & .25$_{\pm.16}$ & \textcolor{burgundy}{\textbf{6091}} (.30) & .53$_{\pm.02}$ & 6230 (.15) & .70$_{\pm.02}$ & 7306 (.19) \\

 & ShaSpec & .61$_{\pm.03}$ & 6697 (1.51) & .66$_{\pm.08}$ & 18099 (1.68) & .53$_{\pm.01}$ & 2.14 (2.14) & .59$_{\pm.06}$ & 6093 (2.17) & .60$_{\pm.05}$ & 6231 (.97) & .71$_{\pm.13}$ & 7307 (1.29) \\

 & MBT & \underline{.64}$_{\pm.02}$ & 6696 (.41) & .77$_{\pm.19}$ & 18098 (.46) & \underline{.55}$_{\pm.01}$ & 0.55 (.55) & .82$_{\pm.11}$ & \textcolor{burgundy}{\textbf{6091}} (.56) & .61$_{\pm.03}$ & 6230 (.26) & \underline{.81}$_{\pm.06}$ & 7306 (.35) \\

 & DecAlign & .48$_{\pm.01}$ & 6707 (11.41) & .28$_{\pm.19}$ & 18106 (8.45) & \textcolor{burgundy}{\textbf{.56}}$_{\pm.02}$ & 23.33 (23.33) & .36$_{\pm.10}$ & 6114 (23.37) & .62$_{\pm.02}$ & 6232 (1.77) & .75$_{\pm.08}$ & 7310 (4.21) \\

\arrayrulecolor{black!20}\midrule\arrayrulecolor{black}
\textbf{Fixed-seq.} & MultiModN & .58$_{\pm.06}$ & 6696 (.76) & .40$_{\pm.06}$ & 18098 (.84) & .45$_{\pm.01}$ & 1.07 (1.07) & .47$_{\pm.09}$ & \underline{6092} (1.08) & .65$_{\pm.04}$ & 6230 (.49) & .66$_{\pm.10}$ & 7306 (.64) \\
\arrayrulecolor{black!20}\midrule\arrayrulecolor{black}

\multirow{5}{*}{\textbf{Selective}} & GMS & .54$_{\pm.08}$ & 4465 (.23) & .40$_{\pm.11}$ & 18092 (.49) & .48$_{\pm.01}$ & \underline{0.11} (\underline{.11}) & .44$_{\pm.10}$ & \textcolor{burgundy}{\textbf{6091}} (\textcolor{burgundy}{\textbf{.13}}) & \underline{.66}$_{\pm.04}$ & \textcolor{burgundy}{\textbf{139}} (\textcolor{burgundy}{\textbf{.09}}) & .61$_{\pm.10}$ & 5185 (\underline{.17}) \\

 & AdaMML & .59$_{\pm.03}$ & 6696 (\underline{.20}) & \textcolor{burgundy}{\textbf{.96}}$_{\pm.02}$ & 18098 (\underline{.17}) & .52$_{\pm.01}$ & 0.51 (.51) & \underline{.83}$_{\pm.09}$ & \textcolor{burgundy}{\textbf{6091}} (\underline{.17}) & .64$_{\pm.07}$ & 6230 (.17) & .75$_{\pm.06}$ & 7306 (.19) \\

 & DynMM & .62$_{\pm.01}$ & 6697 (1.78) & .71$_{\pm.30}$ & 18099 (1.40) & .54$_{\pm.01}$ & 3.13 (3.13) & .75$_{\pm.08}$ & \underline{6092} (1.54) & \underline{.66}$_{\pm.01}$ & 6230 (.54) & .78$_{\pm.05}$ & 7308 (1.91) \\
 & DyMo & .56$_{\pm.05}$ & 6696 (.48) & .54$_{\pm.03}$ & 18098 (.52) & .37$_{\pm.01}$ & 0.65 (.65) & .65$_{\pm.12}$ & \textcolor{burgundy}{\textbf{6091}} (.68) & .60$_{\pm.04}$ & 6230 (.28) & .60$_{\pm.06}$ & 7306 (.38) \\

 & MMEE & .61$_{\pm.01}$ & 6696 (.38) & .60$_{\pm.05}$ & 18098 (.59) & .48$_{\pm.01}$ & 0.70 (.70) & .57$_{\pm.09}$ & \textcolor{burgundy}{\textbf{6091}} (.73) & .60$_{\pm.07}$ & 6230 (.18) & .65$_{\pm.04}$ & 7306 (.44) \\

\arrayrulecolor{black!20}\midrule\arrayrulecolor{black}

\textbf{Proactive} & Joint-AFA & .57$_{\pm.04}$ & \textcolor{burgundy}{\textbf{1655}} (\textcolor{burgundy}{\textbf{.13}}) & .11$_{\pm.10}$ & \underline{12059} (.21) & .38$_{\pm.03}$ & \textcolor{burgundy}{\textbf{0.07}} (\textcolor{burgundy}{\textbf{.07}}) & .28$_{\pm.05}$ & \textcolor{burgundy}{\textbf{6091}} (\textcolor{burgundy}{\textbf{.13}}) & .54$_{\pm.03}$ & 3321 (.16) & .30$_{\pm.04}$ & \textcolor{burgundy}{\textbf{755}} (\textcolor{burgundy}{\textbf{.04}}) \\

\arrayrulecolor{black}\midrule
\rowcolor{tolblue!10}
\multicolumn{2}{l}{\textbf{\method{} (Ours)}} & \textcolor{burgundy}{\textbf{.67}}$_{\pm.02}$ & \underline{3311} (.28) & \underline{.95}$_{\pm.04}$ & \textcolor{burgundy}{\textbf{738}} (\textcolor{burgundy}{\textbf{.07}}) & \textcolor{burgundy}{\textbf{.56}}$_{\pm.01}$ & 0.43 (.43) & \textcolor{burgundy}{\textbf{.91}}$_{\pm.10}$ & \textcolor{burgundy}{\textbf{6091}} (.23) & \textcolor{burgundy}{\textbf{.69}}$_{\pm.02}$ & \underline{719} (\underline{.10}) & \textcolor{burgundy}{\textbf{.83}}$_{\pm.10}$ & \underline{4706} (.19) \\

\bottomrule
\end{tabular}%
}
\vspace{-22pt}
\end{table*}

\begin{wrapfigure}{r}{0.6\textwidth}
    \centering
    \captionsetup{font=small}
    \includegraphics[width=\linewidth]
        {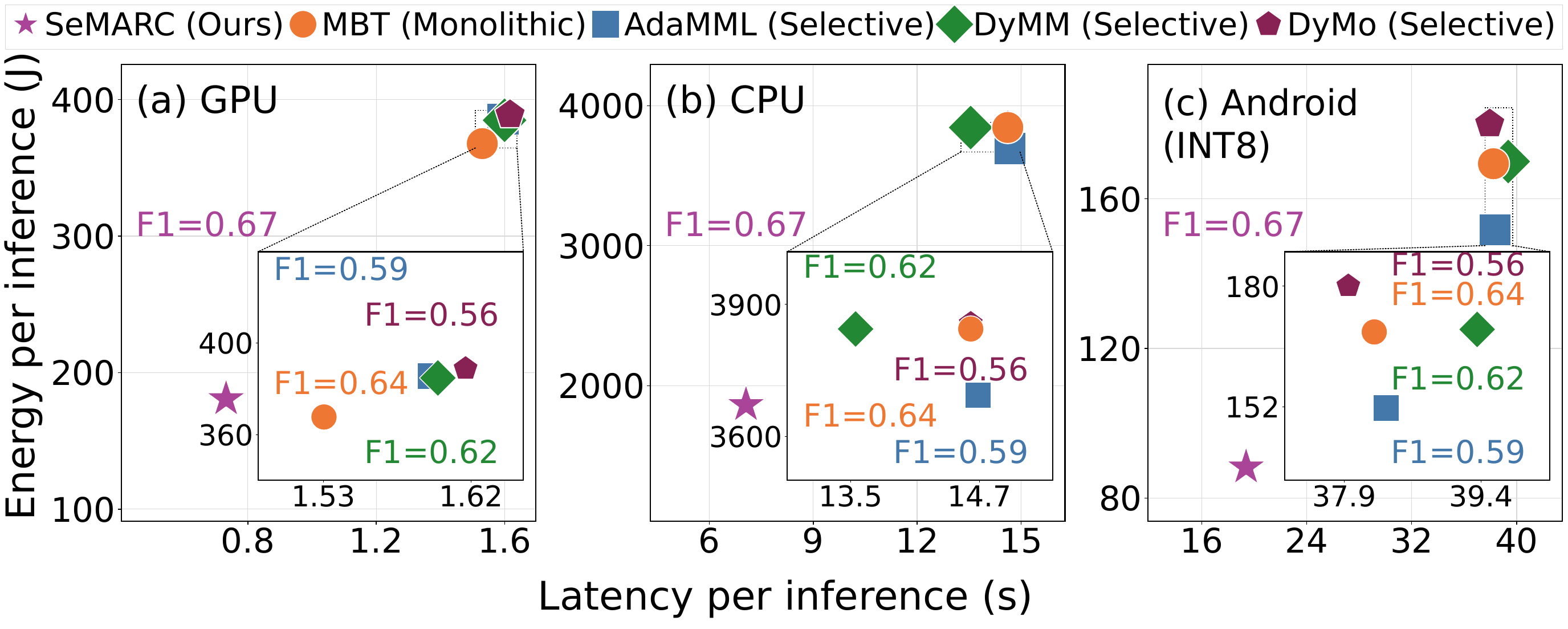}
    \vspace{-22pt}
    \caption{\small Energy--latency trade-offs on IEMOCAP across GPU, CPU, and Android INT8. Lower-left is preferred.}
    \label{fig:energy_latency}
    \vspace{-15pt}
\end{wrapfigure} 

\noindent\textbf{Lower latency and energy across platforms.} On IEMOCAP, \method{} achieves the highest displayed macro-F1 (.67) at about half the latency of the fastest baseline on evaluated platforms. GPU and Android energy are also 50.9\% and 41.9\% lower than their respective lowest-energy baselines. These gains extend across the benchmark suite: \method{} is fastest in 16 of 18 dataset-platform conditions, averaging 44.7\% lower latency against each condition's fastest baseline. Complete results and measurement details are given in Appendix~\ref{app:energy-latency-complete}.

\begin{wrapfigure}{r}{0.5\textwidth}
\vspace{-15pt}
\centering
\captionsetup{font=small}
\includegraphics[width=\linewidth]{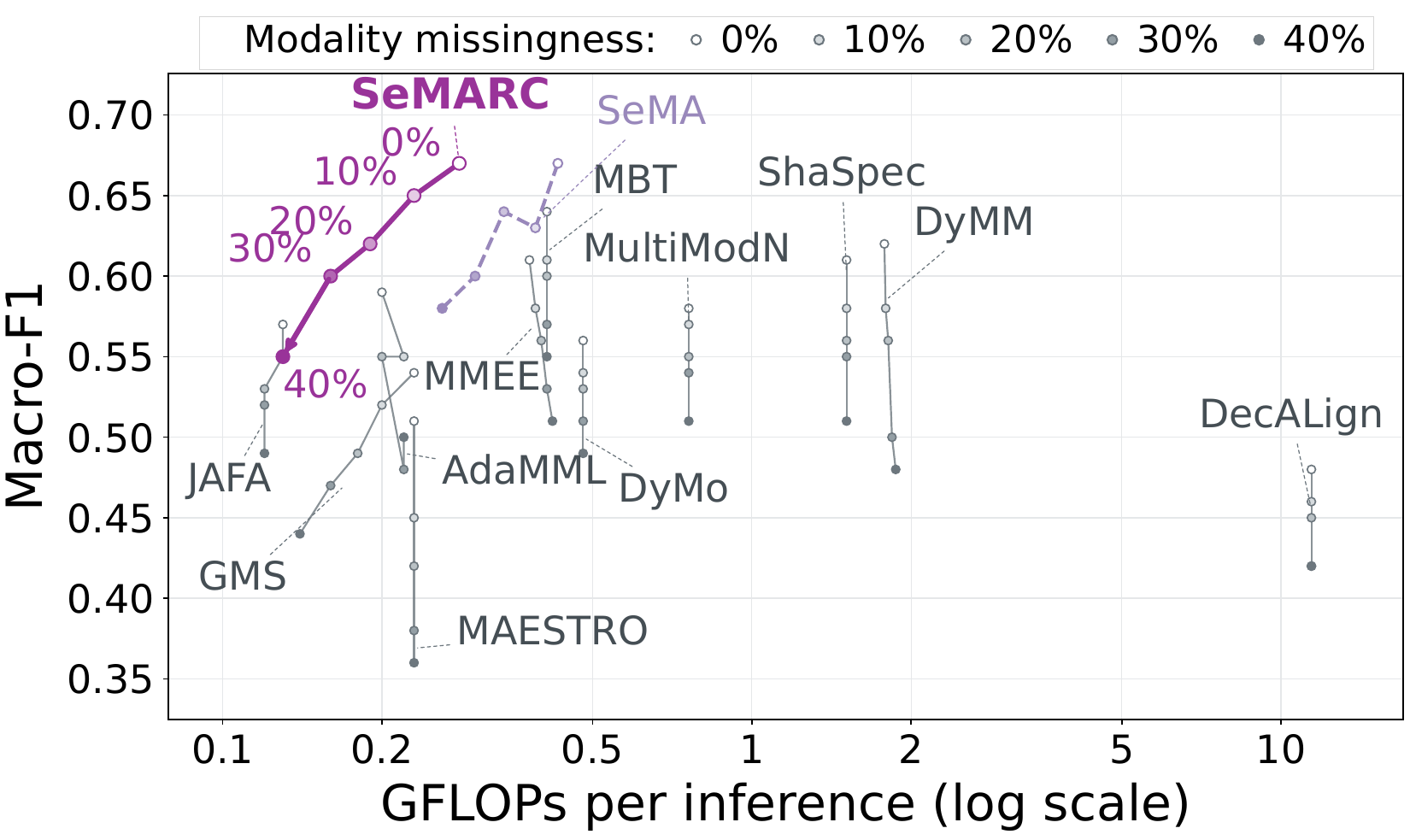}
\vspace{-20pt}
\caption{IEMOCAP macro-F1 versus post-extraction GFLOPs ($\mathrm{GF}_{\mathrm f}$) under 0--40\% test-time modality missingness. Upper-left is preferred.}
\label{fig:miss_gflops}
\vspace{-15pt}
\end{wrapfigure} \noindent\textbf{Efficiency gains persist under modality missingness.} Figure~\ref{fig:miss_gflops} plots IEMOCAP macro-F1 against post-extraction GFLOPs ($\mathrm{GF}_{\mathrm f}$). \method{} is Pareto non-dominated on IEMOCAP at every tested missingness level. Adding \policy{} to \backbone{} reduces post-extraction GFLOPs by 34.9--50.0\% and total GFLOPs, including extraction, by 50.5--63.9\% across 0--40\% missingness on IEMOCAP, with a decrease of at most .03 absolute macro-F1 points (Appendix~\ref{app:missingness-complete}). This supports our premise that sample-adaptive acquisition can substantially reduce computation while largely preserving the backbone's predictive quality. Across six datasets at 10--40\% missingness, \method{} matches or exceeds the highest baseline macro-F1 in 21 of 24 conditions and is Pareto non-dominated in 23 of 24 using either total GFLOPs or post-encoder fusion GFLOPs ($\mathrm{GF}_{\mathrm f}$). Relative to each condition's best baseline, \method{} averages 3.6\% higher macro-F1, 14.8\% lower total GFLOPs, and 38.4\% lower post-extraction GFLOPs. Complete results are given in Appendix~\ref{app:missingness-complete}.

\vspace{-10pt}
\subsection{Further Design Analysis, Ablations and Diagnostics}\label{sec:ablate}

\noindent\textbf{Component replacements.} Table~\ref{tab:component-swap} compares backbone and controller replacements across six datasets. (i) Adding \policy{} to MultiModN improves efficiency, reducing mean latency by 36.36\% with a small decrease in mean macro-F1 from .533 to .527. However, this combination remains less accurate and slower than \method{}. (ii) With \backbone{} fixed, \policy{} achieves higher mean macro-F1 (.77) than EAMA (.74), Joint-AFA (.75), and GMS (.73). It reduces mean latency by 43.46\% relative to Joint-AFA and 32.62\% relative to GMS, while EAMA is faster (463.50 versus 520.27\,ms). GMS provides a static-selection comparison
on the same backbone, helping isolate the contribution of adaptive
acquisition. At the reported operating points, \method{} achieves
both higher mean macro-F1 and lower mean latency across the six
datasets (Table~\ref{tab:component-swap}). \begin{wraptable}{r}{0.32\textwidth}
\centering
\captionsetup{
    font=scriptsize,
    skip=3pt,
    justification=raggedright,
    singlelinecheck=false
}
\caption{Mean macro-F1 and GPU latency across six datasets.}
\vspace{-2pt}
\label{tab:component-swap}

{\fontsize{7.2}{8.0}\selectfont
\setlength{\tabcolsep}{2.5pt}
\renewcommand{\arraystretch}{0.98}
\begin{tabular*}{\linewidth}{@{\extracolsep{\fill}}lcc@{}}
\toprule
\textbf{Configuration}
& \textbf{F1}$\uparrow$
& \textbf{$t$ (ms)}$\downarrow$ \\
\midrule
\rowcolor{tolblue!10}
\textbf{\method{}}      & .77 & 520.27 \\
\arrayrulecolor{black!20}\midrule
\arrayrulecolor{black}
MultiModN + \policy{}   & .53 & 1078.90 \\
\arrayrulecolor{black!20}\midrule
\arrayrulecolor{black}
\backbone{} + EAMA      & .74 & 463.50 \\
\backbone{} + Joint-AFA & .75 & 920.32 \\
\backbone{} + GMS       & .73 & 772.08 \\
\bottomrule
\end{tabular*}
\vspace{-13pt}
}
\end{wraptable}This supports the
cross-dataset average benefit of our controller over this static
selector. EAMA and Joint-AFA act as adaptive selector baselines. These results support our coupled design: a strong sequential predictor with acquisition and stopping decisions informed by accumulated evidence. They also motivate exploring richer subset-acquisition strategies coupled with \backbone{}. Complete results are given in Appendix~\ref{app:comp-replace-complete}.

\begin{wrapfigure}{r}{0.5\textwidth}
    \vspace{-15pt}
    \centering
    \captionsetup{font=small}
    \includegraphics[width=\linewidth]{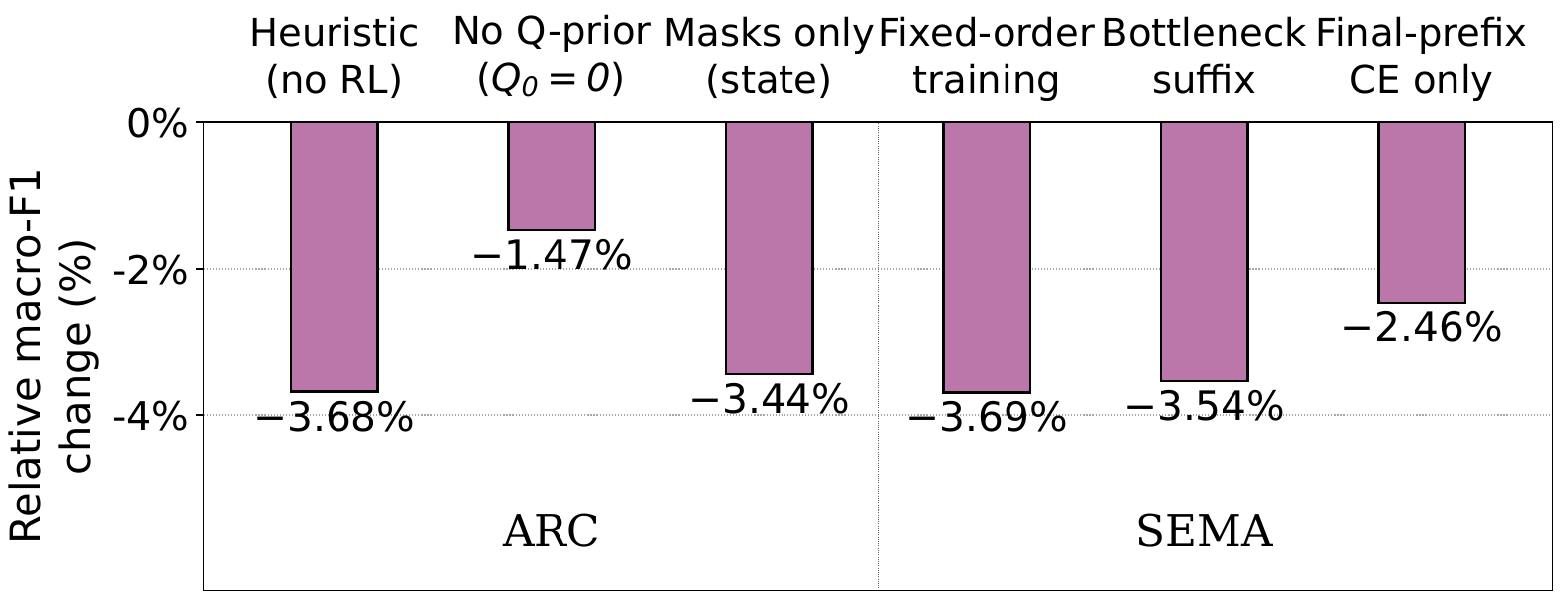}
    \vspace{-15pt}
    \caption{Relative macro-F1 changes for selected ablations on IEMOCAP under full modality availability. Complete results: Appendix~\ref{app:ablate-complete}.}
    \label{fig:ablation}
    \vspace{-12pt}
\end{wrapfigure} \noindent\textbf{Ablation study.} Figure~\ref{fig:ablation} and Table~\ref{tab:iemocap-ablation} (Appendix~\ref{app:ablate-complete}) highlight three findings. (i) Removing the Q-prior and randomized-order training reduces macro-F1 by 1.47\% and 3.69\%, respectively. (ii) Bottleneck-suffix yields 3.54\% lower macro-F1, while removing prefix supervision and distillation together, retaining only final-prefix CE, reduces macro-F1 by 2.46\%. (iii) Disabling $\mathrm{STOP}$ preserves approximately .67 macro-F1 but increases modality utilization from .65 to 1.00. Together, these results support the coupled design of \method{}: \backbone{} reuses previous  computation and is trained for order-robust prediction after each acquisition, while \policy{} uses the resulting evidence summary to select the next modality or stop, avoiding unnecessary encoder and fusion execution.
\vspace{-5pt}
\newlength{\diagwidth}

\begin{figure*}[!htbp]
\centering
\setlength{\parskip}{0pt}
\setlength{\diagwidth}{0.96\textwidth}
\captionsetup{
    font=small,
    position=bottom,
    skip=1pt,
    belowskip=-2pt
}

\begin{minipage}[t]{0.33\diagwidth}
\centering\vspace{0pt}
\includegraphics[width=\linewidth]
    {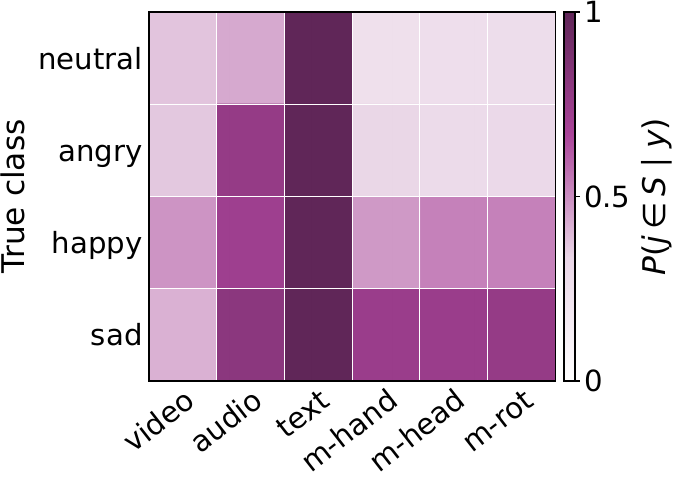}
\caption{Modality acquisition frequencies by true class.}
\label{fig:diag-class}
\end{minipage}\hfill
\begin{minipage}[t]{0.33\diagwidth}
\centering\vspace{0pt}
\includegraphics[width=\linewidth]
    {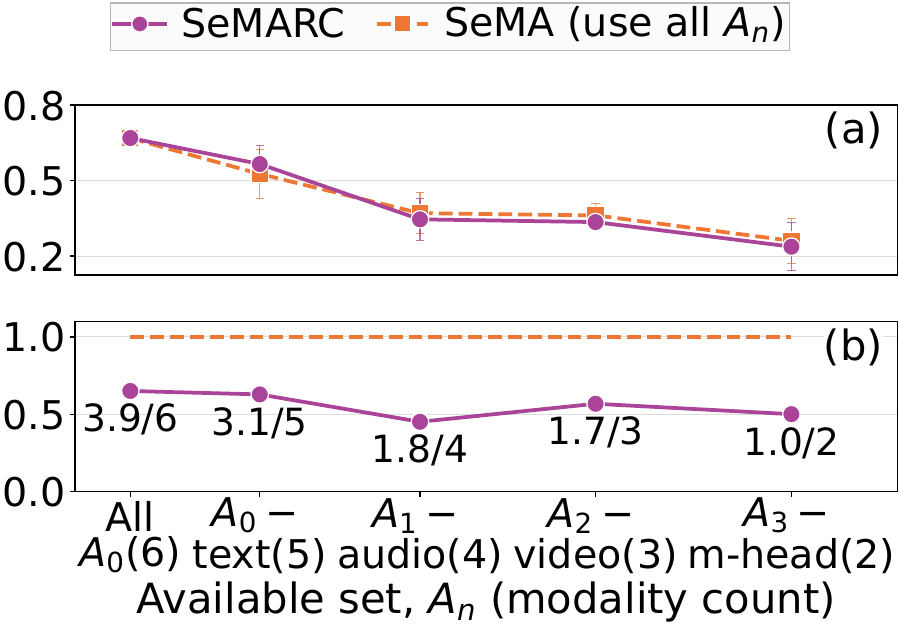}
\vspace{-12pt}
\caption{Availability cascade: (a) macro-F1; (b) fraction of available modalities acquired.}
\label{fig:diag-availability}
\end{minipage}\hfill
\begin{minipage}[t]{0.33\diagwidth}
\centering\vspace{0pt}
\includegraphics[width=\linewidth]
    {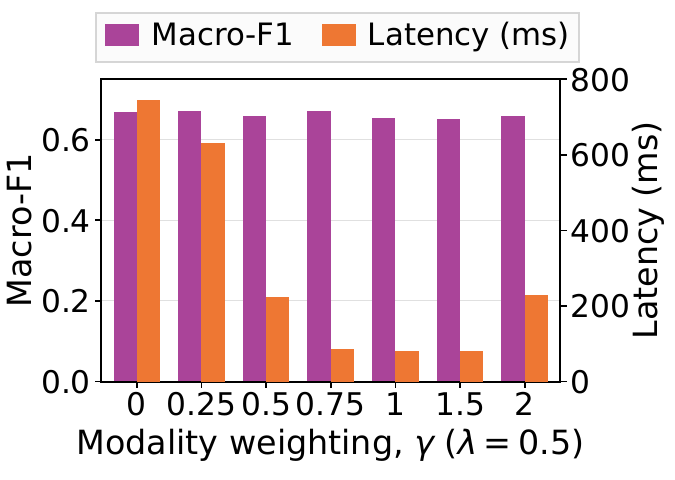}
\caption{Modality-weight sensitivity at $\lambda=0.5$.}
\label{fig:diag-reward}
\end{minipage}
\end{figure*}
\vspace{-10pt}
\noindent\textbf{Diagnostics.} Our IEMOCAP diagnostics show three behaviors. (i) Under full availability, \policy{} selects text first using only the availability mask. When samples are grouped by true class, modalities' acquisition frequencies vary, supporting our premise of adapting modality selection to each sample's acquired evidence (Figure~\ref{fig:diag-class}). (ii) Successively removing the preceding first pick elicits availability-aware fallback. \method{} remains close to \backbone{} processing all remaining modalities while leaving approximately 35--55\% of available modalities unacquired (Figure~\ref{fig:diag-availability}), demonstrating voluntary savings beyond missingness. (iii) At fixed $\lambda$, varying $\gamma$ to iterate the modality-weight exponent $w_j$ in Equation~\ref{eq:target} preserves approximately 0.67 macro-F1 while reducing reported latency by 88.3\% (from 745 to 87\,ms), illustrating the accuracy–latency trade-off (Figure~\ref{fig:diag-reward}). Complete results are provided in Appendix~\ref{app:reward-sensitivity}.


\vspace{-10pt}
\section{Conclusion}
\vspace{-10pt}
We introduced \method{}, coupling incremental fusion with adaptive acquisition and stopping so that acquired evidence determines subsequent computation. Across six datasets, \method{} achieves 3.2\% higher macro-$F1$ and 61.4\% lower total GFLOPs on average than each dataset's most accurate baseline under full availability, with practical latency gains across devices and continued computational savings under modality missingness.

\noindent\textbf{Limitations and future work.}
Our evaluation focuses on per-sample classification with fixed modality costs and batch-one inference. Our compact-state analysis provides conditional guarantees under reward and transition approximation assumptions; empirically assessing these assumptions and extending the analysis to the fitted neural controller remain directions for future work. Future work includes adapting acquisition to externally supplied, time-varying deployment costs, designing batching and scheduling for higher throughput, and incorporating memory across samples to support long-horizon tasks.

\bibliography{iclr2027_conference, semarc_ais_references}
\bibliographystyle{iclr2027_conference}

\appendix
\onecolumn
\appendix

\section{Implementation Details of the \method{} policy}\label{app:rl_policy}

The backbone is frozen before policy fitting. Cached ordered-prefix predictions
define the transition samples, the class-conditional gain tables define the
prior $Q_0$, and a residual network is fitted with Double-DQN targets
(Section~\ref{sec:m_rl}). All of the following is per dataset and per fold.

\paragraph{Backbone configuration.}
Table~\ref{tab:backbone-config} lists the dataset-specific \backbone{} configurations. All use $d=128$, 8 heads, $K=16$ bottleneck tokens, dense attention, running-mean aggregation (Section~\ref{sec:m_arch}), and randomized modality order. We train with AdamW (weight decay 0), a cosine learning-rate schedule from the listed initial rate to $10^{-6}$, and gradient clipping at norm 1. Batch-level modality dropout retains at least one modality; its probability is 0 for the first $E_{\mathrm w}$ epochs, increases linearly over the next $E_{\mathrm r}$ epochs, and remains at $p_{\max}$ thereafter. Each forward pass uniformly permutes the retained modalities. We optimize Equation~\ref{eq:loss} with
$\lambda_{\mathrm{ds}}=1.0$, $\lambda_{\mathrm{kd}}=0.5$,
and uniform prefix weights. For each forward pass, let $n$
be the number of retained modalities. The distillation target is
$\bar p=\operatorname{sg}(p_n)$, the detached prediction after
processing all $n$ retained modalities in the same sampled order
and forward pass. Each prefix prediction is distilled toward this
target, with gradients stopped through $\bar p$. The loss $\ell$ is cross-entropy; on IEMOCAP, it uses class weights $N/(C N_c)$ computed from the training labels. We train one checkpoint per fold with the listed seed and retain the epoch with the best validation macro-F1.

\begin{table}[htbp]
\centering
\small
\captionsetup{font=normalsize,skip=2pt}
\caption{\backbone{} training configuration per dataset ($E$: epochs;
$E_{\mathrm w}/E_{\mathrm r}$: modality-dropout warm-up and ramp epochs;
$p_{\max}$: final modality-dropout probability; $L_{\mathrm e}/L_{\mathrm f}$:
encoder and fusion blocks per branch; $T$: sequence length).}
\label{tab:backbone-config}
\setlength{\tabcolsep}{4pt}
\begin{tabular}{@{}lccccccccc@{}}
\toprule
Dataset & Seed & $E$ & $E_{\mathrm w}/E_{\mathrm r}$ & $p_{\max}$ & LR & Batch & $L_{\mathrm e}/L_{\mathrm f}$ & $T$ & Folds \\
\midrule
IEMOCAP  & 239 & 40  & 10/20 & 0.4 & $10^{-4}$ & 32 & 3/3 & 100 & 0--2 \\
MM-Fi    & 239 & 100 & 20/40 & 0.2 & $10^{-4}$ & 16 & 2/4 & 128 & 0--2 \\
CMI      & 239 & 40  & 10/20 & 0.4 & $10^{-4}$ & 32 & 3/3 & 128 & 0--2 \\
CZU-MHAD & 239 & 100 & 20/40 & 0.2 & $10^{-4}$ & 32 & 3/3 & 128 & 0--2 \\
EAV      & 239 & 100 & 20/40 & 0.2 & $10^{-4}$ & 32 & 3/3 & 128 & 0--2 \\
UTD-MHAD & 9   & 100 & 20/40 & 0.4 & $10^{-4}$ & 32 & 3/3 & 128 & 0--2 \\
\bottomrule
\end{tabular}
\end{table}

\paragraph{Prefix cache.}
With $M$ modalities, we run the frozen backbone once over every ordered
modality prefix of every sample in each split (train, validation, test) and
store the class posterior after each prefix. The cache is a complete depth-$M$
prefix tree with $\sum_{k=0}^{M} M!/(M-k)!$ nodes: 1957 for IEMOCAP ($M{=}6$),
13\,700 for CMI and CZU-MHAD ($M{=}7$), 326 for MM-Fi ($M{=}5$), 65 for UTD-MHAD
($M{=}4$) and 16 for EAV ($M{=}3$). The cache supplies the backbone posteriors needed to construct states and gain
tables, evaluate Bellman targets, and roll out policies. The residual MLP is
still evaluated, but the backbone is not rerun during policy fitting or
selection. Test-cache entries are reserved for final evaluation.

\paragraph{Data used at each stage.}
The policy is fitted entirely on the validation fold, because the backbone's
training-split predictions are overfit (correct at almost every prefix) and
carry no acquisition signal. The validation samples are split once into a fit
part and a holdout part: the $N$ validation indices are permuted with NumPy's
\texttt{default\_rng(1000)}, the first $\max(5,\lfloor0.2N\rfloor)$ form the
holdout and the rest the fit part, the same split for every seed, price and
variant. The gain tables $\bm{T}_S$ and the residual network are fitted on the
fit part; the holdout part selects the FQI iteration to keep; the seed is
selected on the validation fold; the test fold is touched once, for
the reported numbers. The train split is not used by the policy at all.

\paragraph{State and prior.}
The state $\bm\psi_s\in\mathbb R^{2M+C+4}$ in Equation~\ref{eq:rlstate} contains the acquired mask, availability mask, posterior, entropy, top-two margin, $|S_s|/M$, and $|\mathcal A|/M$. The gain table of a held set $S$ is
\[
[\bm T_S]_{c,j}=\mathbb E_i\!\left[v_i(S\cup\{j\})-v_i(S)\mid y_i=c\right],
\qquad
v_i(S)=\frac{1}{|\Pi(S)|}\sum_{\pi\in\Pi(S)}\mathbb 1[\hat y_i(S,\pi)=y_i].
\]
Here $v_i(S)$ is the fraction of cached orderings of $S$ under which the backbone classifies sample $i$ correctly. Correctness is evaluated separately for each ordering and then averaged. Tables are estimated on the validation fit part, with one $C\times M$ table for each of the $2^M$ subsets. This differs from EAMA's set-posterior cache, which averages the posteriors themselves.

For the uniform-cost setting, the prior in Equation~\ref{eq:qprior} is
\[
Q_0^{(\lambda)}(\bm\psi_s,j)=\max_c p_{s,c}+[p_s^\top\bm T_{S_s}]_j-\frac{\lambda}{|\mathcal A|},
\qquad
Q_0^{(\lambda)}(\bm\psi_s,\textsc{stop})=\max_c p_{s,c},
\]
with acquired and unavailable modalities masked. The prior cost, the acquisition cost in the Bellman targets and the prior used at inference all use the same per-fold price $\lambda_{\mathrm{train}}$, so the deployed action values are exactly the fitted ones, $Q_\theta=Q_0^{(\lambda_{\mathrm{train}})}+R_\theta$.

\paragraph{Residual network.}
$R_\theta$ is a three-layer MLP, $(2M+C+4)\to 64\to 64\to M{+}1$ with ReLU
activations and no normalization. Its output layer is zero-initialized, so
$Q_\theta=Q_0$ exactly before fitting and the prior is only ever corrected,
never replaced. It has about 6\,k parameters on IEMOCAP.

\paragraph{Fitted-Q iteration.}
We run 8 FQI iterations. Each iteration recomputes the Bellman target for every
reachable (prefix, sample, legal action) triple in the fit part with a
Double-DQN rule: the next action is the arg-max of the online network
(residual plus prior), its value is read from a target copy, and the target
copy is overwritten with the online weights at the end of the iteration. The
terminal reward uses normalized inverse-frequency weights. Let $N_{\mathrm{fit}}$ be the number of validation-fit samples and let $\mathrm{freq}(c)$ be the proportion with label $c$ in that split. We use
\[
w_i=\frac{1/\mathrm{freq}(y_i)}{N_{\mathrm{fit}}^{-1}\sum_{k=1}^{N_{\mathrm{fit}}}1/\mathrm{freq}(y_k)},
\qquad
R_{\textsc{stop},i}=w_i\,\mathbb 1[\hat y_i=y_i].
\]
The sample-mean weight is one, keeping the reward scale comparable to the acquisition price across datasets with different class counts. This weights minority-class correctness without directly optimizing macro-F1. Each acquisition in the FQI target costs
$\lambda_{\mathrm{train}}/|\mathcal{A}|$, and the return is undiscounted.
Within an iteration the residual is regressed onto (target $-$ prior) with a
masked mean-squared error over the reachable triples for 40 full-batch epochs
of Adam (learning rate $10^{-3}$), 320 epochs in total. After each iteration
the greedy policy is rolled out on the validation holdout and the iteration
with the best holdout macro-F1 is kept.

\paragraph{Availability augmentation.}
To fit a policy that also works when modalities are missing, every sample's
available set is drawn as Bernoulli(0.7) per modality (at least one modality
kept) and the draw is repeated at every FQI iteration, so each sample is seen
under several availability patterns and the network sees the same sample with
and without a given modality. The holdout is scored under one fixed draw.
Unavailable modalities are removed from the action set and from the state's
availability mask; the training cost $\lambda_{\mathrm{train}}/|\mathcal{A}|$ is normalized by the
number that are actually available.

\paragraph{Deployed settings.}
Table~\ref{tab:policy-config} lists, for every fold, the settings of the
deployed uniform-cost policy reported in Table~\ref{tab:full-results}. The
training price $\lambda_{\mathrm{train}}$ enters the prior and the Bellman
targets during fitting and the prior at inference, and actions are selected
greedily on every fold.
Seeds are selected on the validation sets.

\begin{table}[t]
\centering
\small
\captionsetup{font=normalsize,skip=2pt}
\caption{Per-fold settings of the deployed uniform-cost \policy{}. The
prior is priced at $\lambda_{\mathrm{train}}$ during fitting and at inference on every fold.}
\label{tab:policy-config}
\setlength{\tabcolsep}{4pt}
\begin{tabular}{@{}lcccc@{}}
\toprule
Dataset & $M$ & Fold & Seed & $\lambda_{\mathrm{train}}$ \\
\midrule
IEMOCAP  & 6 & 0 / 1 / 2 & 1 / 1 / 0 & 0.5 / 0.5 / 0.5 \\
MM-Fi    & 5 & 0 / 1 / 2 & 2 / 0 / 0 & 0.5 / 0.5 / 1.0 \\
CMI      & 7 & 0 / 1 / 2 & 1 / 2 / 0 & 0.5 / 0.5 / 0.5 \\
CZU-MHAD & 7 & 0 / 1 / 2 & 0 / 1 / 2 & 0.5 / 0.5 / 0.5 \\
EAV      & 3 & 0 / 1 / 2 & 2 / 1 / 2 & 0.5 / 0.5 / 0.5 \\
UTD-MHAD & 4 & 3 / 4 / 5 & 0 / 1 / 2 & 0.5 / 0.5 / 0.5 \\
\bottomrule
\end{tabular}
\end{table}

All policies are fitted in under a minute per fold on one GPU, since fitting
runs on cached tables; at inference the policy adds one table lookup and one
MLP evaluation per step, about 0.1\,ms.

\section{Complete Dataset Preparation}\label{app:data_prep}

\subsection{Dataset Preparation}
\label{app:datasets}

This section details the six datasets used for evaluation: modalities and their native sampling rates, the feature each modality is mapped to before fusion, class distributions, and the cross-subject splits. The train, validation and test partitions are subject-disjoint. Backbone and policy fitting and model selection use the designated training and validation data. Quantization calibration uses val-split examples, as specified in Appendix~\ref{app:deployment}.

\smallskip

\noindent\textbf{Common preprocessing.} Each modality is converted to a sequence $(T_m, d_m)$, either directly from the raw stream (sensor and motion-capture channels) or through a frozen extractor (DINOv3 ViT-L/16~\citep{simeoni2025dinov3} for image and depth frames, XLS-R~\citep{babu2021xls} or wav2vec 2.0~\citep{baevski2020wav2vec} for audio, BERT-base~\citep{devlin2019bert} for text, and a frozen, randomly initialized PointNet~\citep{qi2017pointnet} descriptor for point clouds). Preprocessing includes setting non-finite entries to zero, normalization by the largest per-timestep $\ell_2$ norm within each sequence, and temporal alignment to a common length $T$. The default temporal alignment uses zero-padding when shorter and linear interpolation when longer; the CZU-MHAD resampling and normalization order is specified below. $T=100$ on IEMOCAP and $T=128$ elsewhere. Per-modality extraction times for the explicitly labelled cost-weighted variants are measured on the same hardware for every dataset; the default policy uses uniform acquisition costs.

\paragraph{Extractor checkpoints and feature layers.}
All extractors are frozen. Image, audio, and text extractors use publicly released pretrained checkpoints without fine-tuning; the PointNet descriptors use fixed random weights, as specified below. The same frozen extractor weights produce the training features and server-side measurements and serve as the source for the ONNX graphs deployed on the phone. The extraction environment uses PyTorch~2.5.1, torchaudio~2.5.1, and Transformers~5.3.0.

\emph{Images and depth.} DINOv3 ViT-L/16 uses the official repository's \texttt{dinov3\_vitl16\_pretrain\_lvd1689m} weights, pretrained on LVD-1689M. Each frame is represented by the 1024-dimensional CLS token after the final LayerNorm (\texttt{x\_norm\_clstoken}). Frames are resized directly to $224\times224$ with bilinear interpolation, without center cropping, and ImageNet-normalized. RGB video frames are used as decoded. The 16-bit depth maps from MM-Fi and UTD-MHAD are clipped to 0.1--6\,m, min--max scaled, and replicated to three channels; CZU-MHAD's 8-bit depth frames are replicated to three channels.

\emph{Audio.} IEMOCAP uses XLS-R~1B, \texttt{torchaudio.pipelines.}\allowbreak\texttt{WAV2VEC2\_XLSR\_1B} (\texttt{facebook/wav2vec2-xls-r-1b}), on 16\,kHz mono audio cut or zero-padded to 12\,s. Features come from the last Transformer layer, giving $599\times1280$ values per clip. EAV uses wav2vec~2.0 Base, \texttt{torchaudio.pipelines.}\allowbreak\texttt{WAV2VEC2\_BASE} (\texttt{facebook/wav2vec2-base}), pretrained on LibriSpeech without fine-tuning. Its 20-second, 16\,kHz mono clips produce $999\times768$ last-layer features.

\emph{Text.} Hugging Face's \texttt{bert-base-uncased} provides the 768-dimensional last hidden state of every WordPiece token, including \texttt{[CLS]} and \texttt{[SEP]}, with at most 256 tokens.

\emph{Point clouds.} MM-Fi mmWave and LiDAR use a PointNet shared MLP with hidden widths $64\!\to\!128\!\to\!128$, BatchNorm, ReLU, and max-pooling over points. The weights are randomly initialized with seed~42 and frozen, producing a fixed 128-dimensional random descriptor for each frame's point set. Inputs are five-dimensional mmWave points $(x,y,z,\mathrm{Doppler},\mathrm{intensity})$ and three-dimensional LiDAR points. This descriptor is not a pretrained PointNet checkpoint.

\emph{Raw streams.} No extractor is applied to IEMOCAP MoCap (hand 18-d, head 6-d, rotated 165-d); EAV EEG (30 channels at 500\,Hz, using each recorded 10{,}000-sample trial without filtering); MM-Fi infrared keypoints (34-d) and Wi-Fi CSI (3420-d); UTD-MHAD skeleton (20 joints $\times$ 3 coordinates, 60-d) and inertial data (6-d); CMI IMU (7-d), thermopile (5-d), and five $8\times8$ time-of-flight arrays (64-d each); or CZU-MHAD skeleton and IMU streams, whose construction is detailed below. Extractor outputs and raw streams then undergo the temporal alignment and normalization described in this section.

\smallskip
\noindent\textbf{IEMOCAP.} Five dyadic sessions with ten speakers (one female and one male per session), scripted and improvised. We keep the four emotions \textit{neutral}, \textit{angry}, \textit{happy} and \textit{sad}; utterances labelled otherwise (e.g.\ frustrated, excited) are excluded, leaving 4{,}457 utterances. The class distribution is imbalanced: neutral 1{,}704, angry 1{,}090, sad 1{,}077, happy 586. Splits are by session, hence by speaker (Table~\ref{tab:iemocap_split}). Video is cut per utterance from the dialog recording and 100 frames are sampled uniformly; audio is padded or cut to 12\,s before XLSR-1B; text is the reference transcript; the three motion-capture streams are used raw at their native 120\,Hz.

\begin{table}[h]
\centering\small
\begin{tabular}{lrr}
\toprule
Modality & Sampling rate & Variates \\
\midrule
video (DINOv3 ViT-L/16) & 29.97\,fps, 720$\times$480 & 1024 \\
audio (XLSR-1B) & 16\,kHz, mono & 1280 \\
text (BERT-base) & -- & 768 \\
mocap\_hand & 120\,Hz & 18 \\
mocap\_head & 120\,Hz & 6 \\
mocap\_rotated & 120\,Hz & 165 \\
\bottomrule
\end{tabular}
\hspace{1.5em}
\begin{tabular}{lccc}
\toprule
Fold & Train & Val & Test \\
\midrule
0 & Ses01--03 (2{,}755) & Ses04 (793) & Ses05 (909) \\
1 & Ses02,03,05 (2{,}722) & Ses01 (942) & Ses04 (793) \\
2 & Ses01,04,05 (2{,}644) & Ses02 (813) & Ses03 (1{,}000) \\
\bottomrule
\end{tabular}
\caption{IEMOCAP modalities (left) and cross-session splits (right). Output classes: neutral, angry, happy, sad.}
\label{tab:iemocap_split}
\end{table}

\smallskip
\noindent\textbf{MM-Fi.} Forty subjects in four environments perform 27 actions (14 daily, 13 rehabilitation), one 30\,s recording per subject and action, giving 1{,}080 sequences and exactly 40 per class. All sensors are logged with timestamps and the release is resampled to a unified 10\,Hz frame index, so every modality carries 297 synchronized frames per sequence. We use five modalities (Table~\ref{tab:mmfi}): 2D keypoints from the infrared camera, WiFi CSI amplitude with the official NaN interpolation and per-frame min--max normalization, the mmWave and LiDAR point clouds encoded per frame by the fixed random PointNet descriptor, and the depth map. The stock split leaves one validation subject, too few to fit or select a policy, so validation is enlarged by moving whole subjects out of the test set; train subjects are untouched and all three sets remain subject-disjoint (Table~\ref{tab:mmfi}).

\begin{table}[h]
\centering\small
\begin{tabular}{lp{5.2cm}r}
\toprule
Modality & Native sensing & Variates \\
\midrule
infra1 & RealSense D435 IR, 30\,Hz; 17 keypoints $(u,v)$ & 34 \\
wifi & TP-Link N750, 1$\times$3 antennas, 114 subcarriers; 1000\,Hz raw, 10 packets per frame & 3420 \\
mmwave (PointNet) & TI IWR6843AOP, 30\,Hz; $N\times5$ (x,y,z,Doppler,intensity) & 128 \\
lidar (PointNet) & Ouster OS1-32, 20\,Hz; $N\times3$ & 128 \\
depth (DINOv3) & RealSense D435, 30\,fps, 640$\times$480 16-bit & 1024 \\
\bottomrule
\end{tabular}

\vspace{0.5em}
\begin{tabular}{lccc}
\toprule
Fold & Train & Val & Test \\
\midrule
0 & S14--S39 (702) & S40, S01--S06 (189) & S07--S13 (189) \\
1 & S01--S13, S27--S39 (702) & S40, S14--S19 (189) & S20--S26 (189) \\
2 & S01--S12, S14--S26 (675) & S13, S27--S32 (189) & S33--S40 (216) \\
\bottomrule
\end{tabular}
\caption{MM-Fi modalities and cross-subject splits. All modalities share the 10\,Hz unified frame index (297 frames per 30\,s sequence). Output classes: 27 actions (stretching, chest expansion, twists, mark time, limb extension, lunges, squat, raising and waving hands, picking up, throwing, kicking, body extension, jumping, bowing).}
\label{tab:mmfi}
\end{table}

\smallskip

\noindent\textbf{CZU-MHAD.} Five subjects perform 22 actions with 9--13 repetitions per subject--action pair. We form the index from the union of \texttt{train.csv}, \texttt{val.csv}, and \texttt{test.csv}, deduplicate by \texttt{file\_stem}, and require the depth, skeleton, and sensor \texttt{.mat} files for each trial. Of 1166 depth files, only \texttt{qyh\_a10\_t11} lacks the corresponding skeleton and sensor files and is excluded. No other trials are removed. The resulting 1165 trials contain 52--53 samples per class and 198, 220, 220, 241, and 286 samples for subjects zyh, cyy, myj, qyh, and cx, respectively. Each split holds out one subject for testing and another for validation, giving test-set sizes of 198, 220, and 241 (Table~\ref{tab:czu}).

\paragraph{CZU-MHAD feature construction.}
Depth recordings contain 31--132 frames of $424\times512$ uint8 images. We uniformly sample 100 frames, repeating frames when necessary, extract a 1024-dimensional DINOv3-L CLS vector per frame, and interpolate the resulting $100\times1024$ sequence to $T=128$. Skeleton recordings contain 53--118 frames with 100 columns: 25 Kinect~V2 joints, each represented by $(x,y,z)$ and a microsecond timestamp. The ten IMU streams each contain approximately 1500--2100 rows with seven columns: three accelerometer axes, three gyroscope axes, and a microsecond timestamp. For each IMU pair, the second stream is linearly resampled to the first stream's length and the two seven-column streams are concatenated, producing one of five 14-dimensional sensor streams. Timestamps remain input features in both the skeleton and IMU representations. All streams are linearly resampled to $T=128$ and then divided by their maximum row $\ell_2$ norm within the sequence.

\begin{table}[h]
\centering\small
\begin{tabular}{lrr}
\toprule
Modality & Input / rate & Variates \\
\midrule
depth (DINOv3) & 100 sampled frames & 1024 \\
skeleton & 25 joints & 100 \\
sensor\_head & $\approx$338\,Hz, IMUs 0--1 & 14 \\
sensor\_torso & $\approx$338\,Hz, IMUs 2--3 & 14 \\
sensor\_rarm & $\approx$338\,Hz, IMUs 4--5 & 14 \\
sensor\_larm & $\approx$338\,Hz, IMUs 6--7 & 14 \\
sensor\_leg & $\approx$338\,Hz, IMUs 8--9 & 14 \\
\bottomrule
\end{tabular}
\hspace{1.5em}
\begin{tabular}{lccc}
\toprule
Fold & Train & Val & Test \\
\midrule
0 & cx, cyy, myj (726) & qyh (241) & zyh (198) \\
1 & cx, qyh, zyh (725) & cyy (220) & myj (220) \\
2 & cyy, myj, zyh (638) & cx (286) & qyh (241) \\
\bottomrule
\end{tabular}
\caption{CZU-MHAD input representations (left) and cross-subject splits (right). Each paired-IMU input has $2\times(3+3+1)=14$ features, including timestamps; all streams use $T=128$. Output classes: right/left high wave, right/left horizontal wave, hammer, grasp, draw fork (right/left), draw circle (right/left), forward kick (right/left), side kick (right/left), clap, bend down, wave up and down, sur place, body turn (left/right), lateral movement (left/right).}
\label{tab:czu}
\end{table}

\smallskip

\noindent\textbf{EAV.} Forty-two subjects each complete 200 twenty-second trials, five emotions $\times$ two tasks (listening, speaking) $\times$ 20 repetitions. Audio exists only for the speaking trials, so the trimodal set used here is the 100 speaking trials per subject, 4{,}200 in total and exactly balanced at 840 per class. EEG is used raw; audio is resampled from 44.1\,kHz to 16\,kHz and cut or padded to 20\,s before wav2vec2-base; 100 video frames are sampled uniformly from each clip and encoded by DINOv3. We use three independently randomized cross-subject train/validation/test splits, with seeds 42, 123, and 456 and target proportions 70/15/15. Each split assigns 29 subjects to training, six to validation, and seven to testing, yielding 2900/600/700 speaking trials. Every partition is exactly class-balanced, with 580/120/140 trials per class. Subjects are disjoint within each split, but test sets overlap across splits. Table~\ref{tab:eav-splits} lists the held-out subject IDs; the training subjects are the complement among IDs 1--42. The union of the test sets contains 14 subjects, leaving 28 subjects never used for testing. Thus these are three independent cross-subject splits, rather than a partition of subjects into non-overlapping test folds. We retain fold indices 0--2 as split identifiers in the result tables.

\begin{table}[h]
\centering\small
\begin{tabular}{lrr}
\toprule
Modality & Sampling rate & Variates \\
\midrule
eeg & 500\,Hz, 30 channels & 30 \\
audio (wav2vec2-base) & 44.1\,kHz $\rightarrow$ 16\,kHz, mono & 768 \\
video (DINOv3) & 30\,fps, 640$\times$480 & 1024 \\
\bottomrule
\end{tabular}
\caption{EAV modalities. Output classes: neutral, anger, happiness, sadness, calmness (840 trials each).}
\label{tab:eav}
\end{table}

\begin{table}[t]
\centering\small
\setlength{\tabcolsep}{5pt}
\caption{EAV: three independent cross-subject splits. Validation and test IDs are listed explicitly; the remaining subjects among IDs 1--42 form the training set. Each split contains 29/6/7 train/validation/test subjects. Test subjects can recur across splits.}
\label{tab:eav-splits}
\begin{tabular}{@{}rrll@{}}
\toprule
Split & Seed & Validation subjects & Test subjects \\
\midrule
0 & 42 & 4, 11, 24, 35, 38, 42 & 8, 15, 19, 21, 23, 29, 39 \\
1 & 123 & 1, 10, 15, 32, 33, 34 & 3, 18, 20, 23, 29, 38, 39 \\
2 & 456 & 3, 6, 12, 14, 15, 21 & 16, 18, 26, 28, 29, 38, 39 \\
\bottomrule
\end{tabular}
\end{table}

\smallskip

\noindent\textbf{CMI.} The Child Mind Institute wrist-sensor dataset contains 8{,}151 gesture sequences from 81 participants recorded with the Helios device: an IMU providing three-axis acceleration and an orientation quaternion, five thermopiles, and five $8\times8$ time-of-flight proximity sensors. Sequences average 70.5 samples (29--700); the device sampling rate is not published with the data, so durations in seconds are not reported. Missing time-of-flight readings, encoded as $-1$, are set to zero. Eighteen gestures are labelled, eight body-focused repetitive behaviours and ten everyday actions, with class imbalance. Participants are sorted by identifier and split into three groups of 27 (Table~\ref{tab:cmi}). Class counts are computed after deduplicating \texttt{train.csv} by \texttt{sequence\_id}. Label IDs follow lexicographic order of the gesture names and match the cached targets. Table~\ref{tab:cmi-class-counts} reports all 18 classes; the three test folds contain 2748, 2697 and 2706 sequences and partition all 8151 sequences.

\begin{table}[h]
\centering\small
\begin{tabular}{lrr}
\toprule
Modality & Sensor & Variates \\
\midrule
imu & 3-axis accelerometer + quaternion & 7 \\
thm & 5 thermopiles & 5 \\
tof\_1 -- tof\_5 & five $8\times8$ time-of-flight grids & 64 each \\
\bottomrule
\end{tabular}
\hspace{1.5em}
\begin{tabular}{lccc}
\toprule
Fold & Train & Val & Test \\
\midrule
0 & B[9:]$\cup$C (4{,}485) & B[:9] (918) & A (2{,}748) \\
1 & A$\cup$C[9:] (4{,}536) & C[:9] (918) & B (2{,}697) \\
2 & A[9:]$\cup$B (4{,}528) & A[:9] (917) & C (2{,}706) \\
\bottomrule
\end{tabular}
\caption{CMI modalities (left) and cross-subject splits over participant groups A, B, C of 27 each (right). Output classes: above ear pull hair, cheek pinch skin, eyebrow pull hair, eyelash pull hair, forehead pull hairline, forehead scratch, neck pinch skin, neck scratch (BFRB-like); drink from bottle/cup, feel around in tray, glasses on/off, pinch knee/leg skin, pull air toward face, scratch knee/leg, text on phone, wave hello, write name in air, write name on leg.}
\label{tab:cmi}
\end{table}

\begin{table}[t]
\centering\footnotesize
\setlength{\tabcolsep}{3pt}
\renewcommand{\arraystretch}{1.05}
\caption{CMI class distribution after deduplication by sequence ID. BFRB denotes body-focused repetitive behaviour. IDs follow lexicographic gesture-name order. The three test-fold counts sum to the full class count; percentages are rounded independently.}
\label{tab:cmi-class-counts}
\begin{tabular}{@{}rp{5.8cm}lrrrrr@{}}
\toprule
ID & Gesture & Type & Total & Share (\%) & Fold 0 & Fold 1 & Fold 2 \\
\midrule
0 & Above ear -- pull hair & BFRB & 638 & 7.8 & 216 & 210 & 212 \\
1 & Cheek -- pinch skin & BFRB & 637 & 7.8 & 215 & 210 & 212 \\
2 & Drink from bottle/cup & non-BFRB & 161 & 2.0 & 54 & 53 & 54 \\
3 & Eyebrow -- pull hair & BFRB & 638 & 7.8 & 216 & 210 & 212 \\
4 & Eyelash -- pull hair & BFRB & 640 & 7.9 & 216 & 212 & 212 \\
5 & Feel around in tray and pull out an object & non-BFRB & 161 & 2.0 & 54 & 53 & 54 \\
6 & Forehead -- pull hairline & BFRB & 640 & 7.9 & 216 & 212 & 212 \\
7 & Forehead -- scratch & BFRB & 640 & 7.9 & 216 & 212 & 212 \\
8 & Glasses on/off & non-BFRB & 161 & 2.0 & 54 & 53 & 54 \\
9 & Neck -- pinch skin & BFRB & 640 & 7.9 & 216 & 212 & 212 \\
10 & Neck -- scratch & BFRB & 640 & 7.9 & 216 & 212 & 212 \\
11 & Pinch knee/leg skin & non-BFRB & 161 & 2.0 & 54 & 53 & 54 \\
12 & Pull air toward your face & non-BFRB & 477 & 5.9 & 160 & 159 & 158 \\
13 & Scratch knee/leg skin & non-BFRB & 161 & 2.0 & 54 & 53 & 54 \\
14 & Text on phone & non-BFRB & 640 & 7.9 & 216 & 212 & 212 \\
15 & Wave hello & non-BFRB & 478 & 5.9 & 161 & 159 & 158 \\
16 & Write name in air & non-BFRB & 477 & 5.9 & 160 & 159 & 158 \\
17 & Write name on leg & non-BFRB & 161 & 2.0 & 54 & 53 & 54 \\
\midrule
\multicolumn{3}{l}{Total} & 8151 & 100.0 & 2748 & 2697 & 2706 \\
\bottomrule
\end{tabular}
\end{table}

\smallskip
\noindent\textbf{UTD-MHAD.} Eight subjects (four female, four male) perform 27 actions four times each; after removing three corrupted trials, 861 samples remain, 31--32 per class and 107--108 per subject. Skeleton, depth and RGB come from a Kinect~v1; the release carries depth and skeleton at an effective $\approx$20\,fps and RGB at 15\,fps. The inertial sensor is worn on the right wrist for actions 1--21 and the right thigh for actions 22--27. We report the rebalanced cross-subject protocol with four training, two validation and two test subjects per fold (Table~\ref{tab:utd}); it keeps the original test subjects and replaces the original single validation subject, which was too small for policy selection, with two.

\begin{table}[h]
\centering\small
\begin{tabular}{lrr}
\toprule
Modality & Sampling rate & Variates \\
\midrule
skeleton & Kinect, 20 joints $\times$ (x,y,z) & 60 \\
inertial & 50\,Hz, 3-axis accel + 3-axis gyro & 6 \\
depth (DINOv3) & Kinect, 320$\times$240 & 1024 \\
rgb (DINOv3) & 15\,fps, 640$\times$480 & 1024 \\
\bottomrule
\end{tabular}
\hspace{1.5em}
\begin{tabular}{lccc}
\toprule
Fold & Train & Val & Test \\
\midrule
3 & s3--s6 (431) & s7, s8 (215) & s1, s2 (215) \\
4 & s1, s2, s5, s6 (430) & s7, s8 (215) & s3, s4 (216) \\
5 & s1--s4 (431) & s5, s6 (215) & s7, s8 (215) \\
\bottomrule
\end{tabular}
\caption{UTD-MHAD modalities (left) and 4/2/2 cross-subject splits (right). Output classes: swipe left, swipe right, wave, clap, throw, arm cross, basketball shoot, draw X, draw circle (CW), draw circle (CCW), draw triangle, bowling, boxing, baseball swing, tennis swing, arm curl, tennis serve, push, knock, catch, pickup and throw, jog, walk, sit to stand, stand to sit, lunge, squat.}
\label{tab:utd}
\end{table}

\section{Complete Baseline Implementation Details}\label{app:baselines}

\subsection{Baseline Implementations and Adaptations}
\label{app:baseline_adaptations}

None of the baselines was designed for the setting evaluated here: per-sample acquisition of whole modalities from raw sensor streams, under cross-subject splits and test-time missingness. Each therefore needed some adaptation. We describe, per method, what is carried over from the original, what was changed, and why. The following descriptions distinguish the original methods from adaptations to our inputs and evaluation protocol. These adaptations can affect performance.

\smallskip

\noindent\textbf{Common training protocol.} All learned baselines are trained from scratch on the same per-modality features, the same cross-subject folds and the same data loader as our backbone, with the following default configuration, subject to the method-specific exceptions below: $d_{\text{model}}=128$, eight heads, six encoder layers per modality, Adam at $10^{-4}$, batch size 32, the same epochs as \method{}, and model selection on validation macro-F1. Every baseline that has a notion of missing modalities is trained with the same reported modality-dropout schedule as \method{}. Where a method has its own missingness mechanism (ShaSpec, MAESTRO), that mechanism is kept and the curriculum drives it. 

\smallskip

\noindent\textbf{Execution under missingness.} Every baseline skips raw-data extraction for unavailable modalities. This is distinct from skipping a downstream encoder: the non-adaptive MBT, MAESTRO, ShaSpec, DecAlign, MultiModN and MMEE implementations retain their modality encoders and process zero-filled inputs at absent positions. Among AdaMML, DynMM and DyMo, AdaMML and DyMo can skip encoders, whereas DynMM cannot. MultiModN skips a missing modality's state update but still executes its encoder. Acquisition methods mask unavailable actions. Modality utilization is normalized by the number of modalities available to the current sample, rather than the nominal total $M$.

\emph{Vanilla MBT} is a plain Multimodal Bottleneck Transformer: the same shared-bottleneck-token fusion our backbone uses, with four bottleneck tokens introduced at layer three, but with no sequential ordering, no early exit and no policy. It is the non-adaptive counterpart of our fusion at identical width and depth.

\emph{MAESTRO} is run from the authors' released model with every layer, buffer and parameter name unchanged, so the published checkpoints load strictly. It keeps its own ProbSparse attention, mixture-of-experts feed-forward and modality-dropout schedule with a learned dynamic factor; the only change is the input adapter that splits our concatenated per-modality tensor.

\emph{ShaSpec} uses the original implementation, including its shared and modality-specific representations and missing-modality mechanism. The surrogate shared representation is the mean of the available shared features.

\emph{DecAlign} is generalised from its original three modalities (text, audio, video) to $M$ modalities. The heterogeneity alignment scales accordingly: the multi-marginal Sinkhorn transport runs over an $M$-dimensional cost tensor, and the directed cross-attention branch has $M(M-1)$ pairs. Everything else, including the three-path fusion and the decoupling loss, is as published.

\emph{MultiModN} is a faithful port of the sequential modular network: a shared state vector updated one modality at a time by an un-gated MLP that concatenates the modality feature with the running state, and a shared decoder readable after every prefix. Its per-modality encoder is the same six-layer Transformer as the other baselines, and all prefixes are supervised during training. A missing modality skips its state update, as in the original mechanism; raw-data extraction is skipped, while its downstream encoder still processes the zero-filled input. Because the model is order-sensitive, we report it in the order chosen on validation by exhaustive search over all $M!$ orders.

\smallskip
\noindent\textbf{Modality-selective baselines.} These make a per-sample decision about which modalities to process, and are measured live: the same forward pass that produces the prediction is what is timed and whose executed-path compute is reported. Under missingness, raw-data extraction is skipped for absent modalities, and unavailable actions are masked. Downstream encoder skipping is method-specific: AdaMML and DyMo can skip encoders, whereas DynMM retains encoder execution.

\emph{AdaMML} keeps the official policy network: a lightweight network previews a cheap summary of the input and emits a per-modality keep/drop decision by hard, straight-through Gumbel-softmax with an annealed temperature. Two changes, both forced by the setting. The decision is made once per sample rather than per segment through an LSTM, since our inputs are single windows rather than long videos. The main network is a set of per-modality encoders with logit-level late fusion under learned weights, so that a dropped modality's encoder is genuinely skipped at inference and the saving is real. Training follows the original three phases: encoder warm-up, joint policy and encoder training, and fine-tuning.

\emph{DynMM} mirrors the official affect-recognition implementation: nested modality-subset experts, a per-sample gate trained with the FLOP-resource regulariser as its only auxiliary loss, and no distillation. Each expert is a set of MultiBench-style Transformer encoders with concatenation fusion. We use the two-branch configuration, a single-modality expert and a full expert, with the resource loss weight at 0.1 and a soft gate during training that is made hard at inference. 

\emph{DyMo} uses the original implementation, retaining its missing-modality recovery and per-sample selection mechanisms on the repository's cross-modal Transformer backbone. The configuration uses prototype-distance quality with cosine temperature 0.1, per-subset Gaussian probabilities, the prototype training loss, per-epoch prototype recomputation as class means of projected class-token features, and greedy per-sample selection with reward $R(u)=\text{quality}(S\cup\{u\})\,G-\text{quality}(S)$, with $K=4$ and a 16-dimensional projection. Missing raw streams are not extracted, and the implementation can skip downstream encoders.

\smallskip
\noindent\textbf{Early-exit baseline.} \emph{MMEE} ports the anytime early-exit mechanism, off-ramp heads on intermediate fusion layers with weighted multi-exit training, onto a generic concatenate-then-fuse Transformer. It exits in the depth axis, complementary to our exit in the modality axis, at the first head whose maximum softmax reaches 0.9. All available modalities are extracted, while missing raw streams are skipped. Downstream modality encoders still execute on zero-filled inputs for absent modalities. Its usage is 1.0 relative to the available modalities; early exit reduces fusion depth rather than selecting modalities.

\smallskip
\noindent\textbf{Selection controllers on the shared backbone.} Table~\ref{tab:comp-replace-complete} compares controllers coupled to the same frozen \backbone{}. EAMA and Joint-AFA are placed before the frozen backbone. These comparisons evaluate the shared-backbone adaptations, with the original action spaces retained at modality granularity.

\emph{Greedy submodular selection} is adapted to the frozen backbone. The reported implementation greedily adds the modality with the largest estimated gain using $H(Y)-N^{-1}\sum_n H(p(y\mid x^n_S))$ on validation samples, averaging over cached orders. This predictive-entropy utility differs from the converged cross-entropy estimate described for the own-backbone implementation below and is an approximation rather than an identical reproduction. The budget is selected on validation. The GMS comparison evaluates the static subset selected under its configured budget as a static-selection baseline for \method{}.

\emph{Joint-AFA} adapts the authors' released implementation from scalar features to modalities and keeps its original placement: the agent observes only the inputs acquired so far. It uses the same $M+1$ discrete actions as \policy{}: acquire modality $j$, for any modality not yet acquired, or $\mathrm{STOP}$. Acquired modalities are masked. The agent never reads a \backbone{} prediction during acquisition. Each set element combines a one-hot modality index with a per-modality linear projection of that modality's raw preview: its input stream mean-pooled over time and z-scored with validation statistics. The attentive LSTM set encoder reads the acquired elements, and Joint-AFA's dueling head scores the $M+1$ actions. We retain the log-likelihood terminal reward, per-acquisition cost, $\epsilon$-greedy schedule from 1.0 to 0.1, and target network. After $\mathrm{STOP}$, or once every modality is acquired, the frozen \backbone{} fuses the final set once. Its prediction, which also defines the terminal reward, is read from the prefix cache of that set. The agent is trained on the validation split with a per-acquisition cost of 0.05.

\emph{EAMA} follows Algorithm~1 of EAMA: the selector acts on the raw inputs of the observed modalities, and the classifier is applied once, to the final acquired set. At stage $s$, let $p_s$ be the observed modalities and $U_s$ the unobserved ones. Each acquisition action is a nonempty subset $q\subseteq U_s$, yielding $2^{|U_s|}-1$ candidates: 63 initially on IEMOCAP and 127 on CMI and CZU-MHAD. The stage optimizer maximizes
\[
J_s(q)=\operatorname{score}(q\mid p_s)-\lambda\sum_{j\in q}z_j,
\qquad \varnothing\ne q\subseteq U_s,
\]where the value model predicts $\operatorname{score}=P(\mathrm{gain})-P(\mathrm{loss})$ from the raw previews of $p_s$, defined as for Joint-AFA and averaged after per-modality linear projections, together with unit-norm embeddings of $p_s$ and $q$. It never reads a \backbone{} prediction. Our datasets have no always-available context modality, so $p_1=\varnothing$ and the first stage selects the same subset for every sample. $\mathrm{STOP}$ is implicit: acquisition ends if the best returned objective is nonpositive or no modality remains unobserved. The reported controller uses the paper's hybrid optimizer, randomized double greedy and greedy maximization, retaining the better candidate. We also solve the stage objective exactly by enumeration as an optimization upper bound; this does not replace the hybrid optimizer in the reported EAMA row. When acquisition ends, the frozen \backbone{} fuses the final set once, and its prediction is read from the cache of that set. Thus EAMA can add several modalities in one decision, whereas \policy{} and Joint-AFA add exactly one. We retain the three-class value target, computed from the correctness of this readout on $p_s$ and $p_s\cup q$, Algorithm~2 view sampling, and unit-norm subset embeddings. The value model is fitted on one part of the validation split; early stopping and the per-fold choice of $\lambda$ use the other. The unqualified EAMA row uses uniform costs ($z_j=z$ for all modalities). The separately labelled weighted-cost row uses modality-specific extraction-cost fractions.

\emph{MultiModN+ARC} attaches our acquisition policy to the MultiModN backbone instead of ours, with the prefix cache rebuilt from MultiModN's own per-prefix decoder. This tests whether the policy's gains transfer to a backbone that was not trained to be order-robust.

\smallskip
\noindent\textbf{Own-backbone variants.} For the main table, greedy submodular selection and JAFA are also run end to end with their own classifiers trained from scratch, so that each method owns its full pipeline as its paper assumes. Both use the same neutral per-modality encoder, two 1D convolutions with global average pooling, which is EAMA's stated time-series encoder, the natural sequence analogue of JAFA's per-element embedder, and a mild strengthening of the MLP Cheng et al.\ use. For greedy selection, $H(Y\mid S)$ is estimated by the converged training loss of a model trained on $S$, as in the paper's Section~4.2, with one model per budget. JAFA is trained as the released code trains it: classifier pre-training on Bernoulli(0.5) subsets, then joint training with $\epsilon$-greedy episodes, terminal reward $\log p(y\mid\text{acquired})$, cost $0.05$ per acquisition, smooth-L1 loss, periodic target copies and double DQN. Under missingness, a selected modality that is absent is not acquired and its encoder is skipped; JAFA's episodes mask unavailable actions.

\smallskip
\noindent\textbf{Ours without ARC.} The backbone alone, consuming every available modality in the fold's validation leave-one-out order and predicting once. This provides a full-modality reference for evaluating the policy’s accuracy–cost trade-off.

\smallskip
\noindent\textbf{Missingness protocol.} Test-time missingness is simulated by dropping each modality of each sample independently with probability $\rho\in\{0.1,0.2,0.3,0.4\}$, re-instating one uniformly chosen modality for any sample that would lose all of them, with a fixed seed per fold. Every method sees the same masks. All methods skip raw-data extraction for absent modalities. Fixed-width baselines process zero-filled features in their downstream encoders; AdaMML and DyMo can skip encoders, whereas DynMM cannot. Selective baselines and our policy are forbidden from choosing an unavailable modality. MU uses the number of available modalities for each sample as its denominator.

\section{Evaluation Metrics}\label{app:metrics}
\paragraph{Predictive performance.}
Macro-F1 is computed separately within each test fold using scikit-learn~1.5.2, \texttt{f1\_score(average='macro')}, with no explicit \texttt{labels} argument and the default \texttt{zero\_division='warn'}. The label set for fold $f$ is $\mathcal C_f=\operatorname{unique}(y_{\mathrm{true},f}\cup y_{\mathrm{pred},f})$. Thus,
\[
\mathrm{F1}_{\mathrm{macro},f}=\frac{1}{|\mathcal C_f|}
\sum_{c\in\mathcal C_f}\frac{2\mathrm{TP}_{f,c}}{2\mathrm{TP}_{f,c}+\mathrm{FP}_{f,c}+\mathrm{FN}_{f,c}}.
\]
The default zero-division rule assigns zero to an undefined class F1 and emits a warning. It is not triggered here: each included class occurs in either the true labels or the predictions, so its F1 denominator is positive. Classes absent from both are omitted from that fold's macro average. Reported results average the three fold-level scores equally rather than pooling predictions across folds. Unless explicitly labelled accuracy, predictive results report macro-F1.

\paragraph{Computation.}
All GFLOPs use \texttt{fvcore.FlopCountAnalysis}, whose convention counts one multiply-add as one FLOP; these values are therefore MAC-equivalent operation counts. Total raw-to-logit cost is reported as extraction cost plus the executed downstream network cost $\mathrm{GF}_{\mathrm f}$. Extraction is counted per unit and multiplied by the number of units actually read for the sample: DINOv3 costs 60.9 GFLOPs per frame, XLS-R-1B costs 603.6 GFLOPs per 12-second segment, and wav2vec2-base costs 139.3 GFLOPs per 20-second segment. BERT cost depends on token count, and PointNet cost uses the mean number of points per frame.

Fusion cost is counted per executed branch step and accumulated along the realized prefix. An extraction or fusion step that is skipped contributes no operations. This follows actual execution: a non-adaptive encoder that still processes a zero-filled input remains counted, even though the absent raw stream is not extracted. The \policy{} selector evaluates an approximately 6k-parameter MLP once per step, costing approximately $10^{-5}$ GFLOPs. AdaMML, DynMM and DyMo gating operations are inside their traced graphs and are included by fvcore. Reported total GFLOPs include extraction, downstream encoding and fusion, and controller computation along the executed path.

\paragraph{Modality utilization.}
For every method, $\mathrm{MU}=N^{-1}\sum_i |S_i|/|\mathcal A_i|$, where $\mathcal A_i$ is the nonempty set of modalities available to sample $i$ under the current missingness mask and $S_i\subseteq\mathcal A_i$ is the set acquired or retained by the method. Methods that use every available modality have $\mathrm{MU}=1$ even when some modalities are missing. At full availability, $|\mathcal A_i|=M$. MU measures modality use, not the number of downstream encoders executed; a method can retain fewer modalities without skipping its encoders.
\paragraph{Latency and energy.}
Measured latency is elapsed inference time per sample. GPU and CPU energy measurements use cumulative hardware-counter differences; phone energy integrates battery power and subtracts an idle baseline. Component-replacement latency is measured end-to-end. 

\paragraph{Three-fold summary convention.}
For the full-availability experiment, the appendix fold entries are the numerical source of truth. For EAV, the three entries correspond to independent cross-subject splits with overlapping test subjects; their standard deviation measures variation across these splits. We compute $\bar x=\frac{1}{3}\sum_{r=0}^{2}x_r$ and the F1 sample standard deviation $s=\sqrt{\frac{1}{2}\sum_{r=0}^{2}(x_r-\bar x)^2}$. Means are equally weighted across folds. Calculations retain the precision of the listed fold entries until final display rounding: F1, its standard deviation, and post-extraction GFLOPs use two decimal places; total GFLOPs use two decimals below 100 and integers otherwise. Halfway values round upward. Rankings use distinct displayed values, with ties sharing marks. Six-dataset full-availability gains use the recomputed means before display rounding; if baseline F1 means tie, the lower-total-GF baseline is selected. No hidden precision is reconstructed from rounded fold values.

\paragraph{Aggregate comparisons.}
For a reference value $b$ and our value $o$, a relative F1 gain is $100(o/b-1)\%$; a cost reduction is $100(1-o/b)\%$. F1 differences in percentage points are $100(o-b)$. Dataset means weight the six datasets equally. Hardware reductions compare against the best measured baseline for each metric in each dataset--platform condition before averaging.






\section{Complete Results} \label{app:add_results}

\subsection{Complete results for Table~\ref{tab:full-results}} \label{app:full_results}

\paragraph{Per-fold results.}

Tables~\ref{tab:perfold-iemocap}--\ref{tab:perfold-utd_mhad} report, for each of the six datasets, the per-fold numbers behind the fold means of Table~\ref{tab:full-results}: macro-F1, modality usage MU (the fraction of modalities acquired per test sample, averaged over the test set) and end-to-end GFLOPs per sample, with the fusion cost $\mathrm{GF}_{\mathrm{f}}$ in parentheses. Every value is measured under full modality availability on the same cross-subject test folds as the main table; the main table and the 0\% missingness entries are now recomputed directly from these three-fold values. End-to-end GFLOPs count the whole pipeline from raw data to logits: the encoder cost of every modality a method actually reads, plus its fusion cost. The results show the fact that \method{} keeps the best or second-best F1 on almost every fold rather than winning on average through one favourable split. Within each column, bold and underlining mark the best and second-best distinct displayed values (highest F1; lowest MU, GF and $\mathrm{GF}_{\mathrm{f}}$); ties share markings.

\begin{table*}[!htbp]
\centering\footnotesize\setlength{\tabcolsep}{2.5pt}\renewcommand{\arraystretch}{1.12}
\caption{\small \textbf{IEMOCAP}, per fold, full modality availability: macro-F1 ($\uparrow$), modality usage MU ($\downarrow$) and end-to-end GFLOPs ($\downarrow$; fusion cost $\mathrm{GF}_{\mathrm{f}}$ in parentheses). Fold means and sample standard deviations of these values are the cells of Table~\ref{tab:full-results}; F1 is shown to three decimals. Per column, \textcolor{burgundy}{\textbf{bold}} and \underline{underlining} mark the best and second-best distinct displayed values (F1 highest; MU, GF and $\mathrm{GF}_{\mathrm{f}}$ lowest, GF compared on the displayed total); ties share markings.}
\label{tab:perfold-iemocap}
\vspace{-6pt}
\resizebox{\textwidth}{!}{%
\begin{tabular}{@{}ll*{3}{ccc}@{}}
\toprule
\multirow{2}{*}{\textbf{Type}} & \multirow{2}{*}{\textbf{Method}} & \multicolumn{3}{c}{\textbf{Fold 0}} & \multicolumn{3}{c}{\textbf{Fold 1}} & \multicolumn{3}{c}{\textbf{Fold 2}} \\
\cmidrule(lr){3-5} \cmidrule(lr){6-8} \cmidrule(lr){9-11}
& & F1 $\uparrow$ & MU $\downarrow$ & $\mathrm{GF}\,(\mathrm{GF}_{\mathrm{f}})\downarrow$ & F1 $\uparrow$ & MU $\downarrow$ & $\mathrm{GF}\,(\mathrm{GF}_{\mathrm{f}})\downarrow$ & F1 $\uparrow$ & MU $\downarrow$ & $\mathrm{GF}\,(\mathrm{GF}_{\mathrm{f}})\downarrow$ \\
\midrule
\multirow{4}{*}{\textbf{Monolithic}} & MAESTRO & .472 & 1.0 & 6696 (.23) & .559 & 1.0 & 6696 (.23) & .496 & 1.0 & 6696 (.23) \\
 & ShaSpec & .623 & 1.0 & 6697 (1.51) & \underline{.636} & 1.0 & 6697 (1.51) & .584 & 1.0 & 6697 (1.51) \\
 & MBT & \underline{.656} & 1.0 & 6696 (.41) & .627 & 1.0 & 6696 (.41) & .626 & 1.0 & 6696 (.41) \\
 & DecAlign & .470 & 1.0 & 6707 (11.41) & .484 & 1.0 & 6707 (11.41) & .486 & 1.0 & 6707 (11.41) \\
\arrayrulecolor{black!20}\midrule
\textbf{Fixed-seq.} & MultiModN & .514 & 1.0 & 6696 (.76) & .602 & 1.0 & 6696 (.76) & \underline{.632} & 1.0 & 6696 (.76) \\
\arrayrulecolor{black!20}\midrule
\multirow{5}{*}{\textbf{Selective}} & GMS & .628 & \textcolor{burgundy}{\textbf{.17}} & \textcolor{burgundy}{\textbf{1.51}} (\textcolor{burgundy}{\textbf{.07}}) & .471 & .83 & 6696 (.30) & .515 & 1.0 & 6696 (.31) \\
 & AdaMML & .568 & .50 & 6696 (.20) & .633 & .50 & 6696 (\underline{.20}) & .583 & .50 & 6696 (\underline{.20}) \\
 & DynMM & .625 & 1.0 & 6698 (2.22) & .623 & .70 & 6697 (1.56) & .612 & .70 & 6697 (1.56) \\
 & DyMo & .502 & \underline{.27} & 6696 (.47) & .591 & \underline{.35} & 6696 (.49) & .583 & \underline{.33} & 6696 (.48) \\
 & MMEE & .625 & 1.0 & 6696 (.41) & .597 & 1.0 & 6696 (.37) & .604 & 1.0 & 6696 (.37) \\
\arrayrulecolor{black!20}\midrule
\textbf{Proactive} & Joint-AFA & .613 & .32 & \underline{2097} (\underline{.16}) & .562 & \textcolor{burgundy}{\textbf{.24}} & \textcolor{burgundy}{\textbf{1367}} (\textcolor{burgundy}{\textbf{.11}}) & .542 & \textcolor{burgundy}{\textbf{.22}} & \textcolor{burgundy}{\textbf{1501}} (\textcolor{burgundy}{\textbf{.11}}) \\
\arrayrulecolor{black}\midrule
\rowcolor{tolblue!10}\multicolumn{2}{l}{\textbf{\method{} (Ours)}} & \textcolor{burgundy}{\textbf{.682}} & .59 & 2691 (.26) & \textcolor{burgundy}{\textbf{.682}} & .79 & \underline{3820} (.34) & \textcolor{burgundy}{\textbf{.643}} & .57 & \underline{3423} (.25) \\
\bottomrule
\end{tabular}%
}
\end{table*}

\begin{table*}[!htbp]
\centering\footnotesize\setlength{\tabcolsep}{2.5pt}\renewcommand{\arraystretch}{1.12}
\caption{\small \textbf{MM-Fi}, per fold, full modality availability: macro-F1 ($\uparrow$), modality usage MU ($\downarrow$) and end-to-end GFLOPs ($\downarrow$; fusion cost $\mathrm{GF}_{\mathrm{f}}$ in parentheses). Fold means and sample standard deviations of these values are the cells of Table~\ref{tab:full-results}; F1 is shown to three decimals. Per column, \textcolor{burgundy}{\textbf{bold}} and \underline{underlining} mark the best and second-best distinct displayed values (F1 highest; MU, GF and $\mathrm{GF}_{\mathrm{f}}$ lowest, GF compared on the displayed total); ties share markings.}
\label{tab:perfold-mmfi}
\vspace{-6pt}
\resizebox{\textwidth}{!}{%
\begin{tabular}{@{}ll*{3}{ccc}@{}}
\toprule
\multirow{2}{*}{\textbf{Type}} & \multirow{2}{*}{\textbf{Method}} & \multicolumn{3}{c}{\textbf{Fold 0}} & \multicolumn{3}{c}{\textbf{Fold 1}} & \multicolumn{3}{c}{\textbf{Fold 2}} \\
\cmidrule(lr){3-5} \cmidrule(lr){6-8} \cmidrule(lr){9-11}
& & F1 $\uparrow$ & MU $\downarrow$ & $\mathrm{GF}\,(\mathrm{GF}_{\mathrm{f}})\downarrow$ & F1 $\uparrow$ & MU $\downarrow$ & $\mathrm{GF}\,(\mathrm{GF}_{\mathrm{f}})\downarrow$ & F1 $\uparrow$ & MU $\downarrow$ & $\mathrm{GF}\,(\mathrm{GF}_{\mathrm{f}})\downarrow$ \\
\midrule
\multirow{4}{*}{\textbf{Monolithic}} & MAESTRO & .232 & 1.0 & 18098 (.28) & .229 & 1.0 & 18098 (.28) & .067 & 1.0 & 18098 (.28) \\
 & ShaSpec & .713 & 1.0 & 18099 (1.68) & .691 & 1.0 & 18099 (1.68) & .570 & 1.0 & 18099 (1.68) \\
 & MBT & .824 & 1.0 & 18098 (.46) & .929 & 1.0 & 18098 (.46) & .557 & 1.0 & 18098 (.46) \\
 & DecAlign & .120 & 1.0 & 18106 (8.45) & .492 & 1.0 & 18106 (8.45) & .236 & 1.0 & 18106 (8.45) \\
\arrayrulecolor{black!20}\midrule
\textbf{Fixed-seq.} & MultiModN & .429 & 1.0 & 18098 (.84) & .437 & 1.0 & 18098 (.84) & .323 & 1.0 & 18098 (.84) \\
\arrayrulecolor{black!20}\midrule
\multirow{5}{*}{\textbf{Selective}} & GMS & .424 & .40 & 18089 (.46) & .490 & .60 & \underline{18089} (.48) & .274 & 1.0 & 18098 (.53) \\
 & AdaMML & \textcolor{burgundy}{\textbf{.960}} & .40 & 18098 (\underline{.17}) & \underline{.977} & .40 & 18098 (.17) & \textcolor{burgundy}{\textbf{.937}} & .40 & 18098 (.17) \\
 & DynMM & .856 & .74 & 18099 (1.79) & .904 & .78 & 18099 (1.88) & .367 & \textcolor{burgundy}{\textbf{.20}} & 18098 (.53) \\
 & DyMo & .574 & .42 & 18098 (.52) & .527 & .41 & 18098 (.52) & .512 & .42 & 18098 (.52) \\
 & MMEE & .597 & 1.0 & 18098 (.59) & .643 & 1.0 & 18098 (.59) & .546 & 1.0 & 18098 (.59) \\
\arrayrulecolor{black!20}\midrule
\textbf{Proactive} & Joint-AFA & .003 & \textcolor{burgundy}{\textbf{.20}} & \textcolor{burgundy}{\textbf{0.35}} (.35) & .195 & \textcolor{burgundy}{\textbf{.20}} & \underline{18089} (\underline{.12}) & .141 & .23 & \underline{18089} (\underline{.16}) \\
\arrayrulecolor{black}\midrule
\rowcolor{tolblue!10}\multicolumn{2}{l}{\textbf{\method{} (Ours)}} & \underline{.946} & \underline{.21} & \underline{861} (\textcolor{burgundy}{\textbf{.07}}) & \textcolor{burgundy}{\textbf{.989}} & \underline{.22} & \textcolor{burgundy}{\textbf{766}} (\textcolor{burgundy}{\textbf{.07}}) & \underline{.915} & \underline{.21} & \textcolor{burgundy}{\textbf{586}} (\textcolor{burgundy}{\textbf{.07}}) \\
\bottomrule
\end{tabular}%
}
\end{table*}

\begin{table*}[!htbp]
\centering\footnotesize\setlength{\tabcolsep}{2.5pt}\renewcommand{\arraystretch}{1.12}
\caption{\small \textbf{CMI}, per fold, full modality availability: macro-F1 ($\uparrow$), modality usage MU ($\downarrow$) and end-to-end GFLOPs ($\downarrow$; fusion cost $\mathrm{GF}_{\mathrm{f}}$ in parentheses). Fold means and sample standard deviations of these values are the cells of Table~\ref{tab:full-results}; F1 is shown to three decimals. Per column, \textcolor{burgundy}{\textbf{bold}} and \underline{underlining} mark the best and second-best distinct displayed values (F1 highest; MU, GF and $\mathrm{GF}_{\mathrm{f}}$ lowest, GF compared on the displayed total); ties share markings.}
\label{tab:perfold-cmi}
\vspace{-6pt}
\resizebox{\textwidth}{!}{%
\begin{tabular}{@{}ll*{3}{ccc}@{}}
\toprule
\multirow{2}{*}{\textbf{Type}} & \multirow{2}{*}{\textbf{Method}} & \multicolumn{3}{c}{\textbf{Fold 0}} & \multicolumn{3}{c}{\textbf{Fold 1}} & \multicolumn{3}{c}{\textbf{Fold 2}} \\
\cmidrule(lr){3-5} \cmidrule(lr){6-8} \cmidrule(lr){9-11}
& & F1 $\uparrow$ & MU $\downarrow$ & $\mathrm{GF}\,(\mathrm{GF}_{\mathrm{f}})\downarrow$ & F1 $\uparrow$ & MU $\downarrow$ & $\mathrm{GF}\,(\mathrm{GF}_{\mathrm{f}})\downarrow$ & F1 $\uparrow$ & MU $\downarrow$ & $\mathrm{GF}\,(\mathrm{GF}_{\mathrm{f}})\downarrow$ \\
\midrule
\multirow{4}{*}{\textbf{Monolithic}} & MAESTRO & .456 & 1.0 & 0.28 (.28) & .487 & 1.0 & 0.28 (.28) & .464 & 1.0 & 0.28 (.28) \\
 & ShaSpec & .544 & 1.0 & 2.14 (2.14) & .516 & 1.0 & 2.14 (2.14) & .539 & 1.0 & 2.14 (2.14) \\
 & MBT & \underline{.561} & 1.0 & 0.55 (.55) & \underline{.543} & 1.0 & 0.55 (.55) & .546 & 1.0 & 0.55 (.55) \\
 & DecAlign & \textcolor{burgundy}{\textbf{.565}} & 1.0 & 23.33 (23.33) & \underline{.543} & 1.0 & 23.33 (23.33) & \textcolor{burgundy}{\textbf{.575}} & 1.0 & 23.33 (23.33) \\
\arrayrulecolor{black!20}\midrule
\textbf{Fixed-seq.} & MultiModN & .445 & 1.0 & 1.07 (1.07) & .452 & 1.0 & 1.07 (1.07) & .441 & 1.0 & 1.07 (1.07) \\
\arrayrulecolor{black!20}\midrule
\multirow{5}{*}{\textbf{Selective}} & GMS & .493 & 1.0 & \underline{0.12} (\underline{.12}) & .471 & .86 & \underline{0.11} (\underline{.11}) & .484 & .71 & \underline{0.09} (\underline{.09}) \\
 & AdaMML & .516 & .83 & 0.51 (.51) & .505 & .86 & 0.53 (.53) & .525 & .80 & 0.49 (.49) \\
 & DynMM & .546 & .94 & 3.04 (3.04) & .528 & .98 & 3.17 (3.17) & .532 & .98 & 3.17 (3.17) \\
 & DyMo & .371 & \textcolor{burgundy}{\textbf{.28}} & 0.65 (.65) & .362 & \textcolor{burgundy}{\textbf{.30}} & 0.66 (.66) & .375 & \textcolor{burgundy}{\textbf{.28}} & 0.65 (.65) \\
 & MMEE & .486 & 1.0 & 0.70 (.70) & .472 & 1.0 & 0.70 (.70) & .481 & 1.0 & 0.70 (.70) \\
\arrayrulecolor{black!20}\midrule
\textbf{Proactive} & Joint-AFA & .361 & \underline{.36} & \textcolor{burgundy}{\textbf{0.06}} (\textcolor{burgundy}{\textbf{.06}}) & .359 & \underline{.38} & \textcolor{burgundy}{\textbf{0.07}} (\textcolor{burgundy}{\textbf{.07}}) & .419 & \underline{.46} & \textcolor{burgundy}{\textbf{0.08}} (\textcolor{burgundy}{\textbf{.08}}) \\
\arrayrulecolor{black}\midrule
\rowcolor{tolblue!10}\multicolumn{2}{l}{\textbf{\method{} (Ours)}} & .544 & .40 & 0.23 (.23) & \textcolor{burgundy}{\textbf{.562}} & .94 & 0.54 (.54) & \underline{.561} & .91 & 0.53 (.53) \\
\bottomrule
\end{tabular}%
}
\end{table*}

\begin{table*}[!htbp]
\centering\footnotesize\setlength{\tabcolsep}{2.5pt}\renewcommand{\arraystretch}{1.12}
\caption{\small \textbf{CZU-MHAD}, per fold, full modality availability: macro-F1 ($\uparrow$), modality usage MU ($\downarrow$) and end-to-end GFLOPs ($\downarrow$; fusion cost $\mathrm{GF}_{\mathrm{f}}$ in parentheses). Fold means and sample standard deviations of these values are the cells of Table~\ref{tab:full-results}; F1 is shown to three decimals. Per column, \textcolor{burgundy}{\textbf{bold}} and \underline{underlining} mark the best and second-best distinct displayed values (F1 highest; MU, GF and $\mathrm{GF}_{\mathrm{f}}$ lowest, GF compared on the displayed total); ties share markings.}
\label{tab:perfold-czu_mhad}
\vspace{-6pt}
\resizebox{\textwidth}{!}{%
\begin{tabular}{@{}ll*{3}{ccc}@{}}
\toprule
\multirow{2}{*}{\textbf{Type}} & \multirow{2}{*}{\textbf{Method}} & \multicolumn{3}{c}{\textbf{Fold 0}} & \multicolumn{3}{c}{\textbf{Fold 1}} & \multicolumn{3}{c}{\textbf{Fold 2}} \\
\cmidrule(lr){3-5} \cmidrule(lr){6-8} \cmidrule(lr){9-11}
& & F1 $\uparrow$ & MU $\downarrow$ & $\mathrm{GF}\,(\mathrm{GF}_{\mathrm{f}})\downarrow$ & F1 $\uparrow$ & MU $\downarrow$ & $\mathrm{GF}\,(\mathrm{GF}_{\mathrm{f}})\downarrow$ & F1 $\uparrow$ & MU $\downarrow$ & $\mathrm{GF}\,(\mathrm{GF}_{\mathrm{f}})\downarrow$ \\
\midrule
\multirow{4}{*}{\textbf{Monolithic}} & MAESTRO & .182 & 1.0 & \textcolor{burgundy}{\textbf{6091}} (.30) & .436 & 1.0 & \textcolor{burgundy}{\textbf{6091}} (.30) & .135 & 1.0 & \textcolor{burgundy}{\textbf{6091}} (.30) \\
 & ShaSpec & .645 & 1.0 & 6093 (2.17) & .587 & 1.0 & 6093 (2.17) & .524 & 1.0 & 6093 (2.17) \\
 & MBT & \underline{.910} & 1.0 & \textcolor{burgundy}{\textbf{6091}} (.56) & .849 & 1.0 & \textcolor{burgundy}{\textbf{6091}} (.56) & .703 & 1.0 & \textcolor{burgundy}{\textbf{6091}} (.56) \\
 & DecAlign & .352 & 1.0 & 6114 (23.37) & .469 & 1.0 & 6114 (23.37) & .265 & 1.0 & 6114 (23.37) \\
\arrayrulecolor{black!20}\midrule
\textbf{Fixed-seq.} & MultiModN & .563 & 1.0 & \underline{6092} (1.08) & .459 & 1.0 & \underline{6092} (1.08) & .379 & 1.0 & \underline{6092} (1.08) \\
\arrayrulecolor{black!20}\midrule
\multirow{5}{*}{\textbf{Selective}} & GMS & .497 & \textcolor{burgundy}{\textbf{.14}} & \textcolor{burgundy}{\textbf{6091}} (\textcolor{burgundy}{\textbf{.11}}) & .504 & .71 & \textcolor{burgundy}{\textbf{6091}} (.18) & .331 & \textcolor{burgundy}{\textbf{.14}} & \textcolor{burgundy}{\textbf{6091}} (\underline{.11}) \\
 & AdaMML & .842 & \textcolor{burgundy}{\textbf{.14}} & \textcolor{burgundy}{\textbf{6091}} (.17) & \underline{.914} & \textcolor{burgundy}{\textbf{.14}} & \textcolor{burgundy}{\textbf{6091}} (\underline{.17}) & \underline{.733} & \textcolor{burgundy}{\textbf{.14}} & \textcolor{burgundy}{\textbf{6091}} (.17) \\
 & DynMM & .809 & .56 & \underline{6092} (1.85) & .793 & .37 & \underline{6092} (1.26) & .660 & .46 & \underline{6092} (1.52) \\
 & DyMo & .675 & .39 & \textcolor{burgundy}{\textbf{6091}} (.69) & .758 & .31 & \textcolor{burgundy}{\textbf{6091}} (.67) & .527 & .37 & \textcolor{burgundy}{\textbf{6091}} (.69) \\
 & MMEE & .664 & 1.0 & \textcolor{burgundy}{\textbf{6091}} (.73) & .560 & 1.0 & \textcolor{burgundy}{\textbf{6091}} (.73) & .478 & 1.0 & \textcolor{burgundy}{\textbf{6091}} (.73) \\
\arrayrulecolor{black!20}\midrule
\textbf{Proactive} & Joint-AFA & .333 & \underline{.17} & \textcolor{burgundy}{\textbf{6091}} (\underline{.13}) & .248 & \underline{.20} & \textcolor{burgundy}{\textbf{6091}} (\textcolor{burgundy}{\textbf{.13}}) & .254 & \textcolor{burgundy}{\textbf{.14}} & \textcolor{burgundy}{\textbf{6091}} (.12) \\
\arrayrulecolor{black}\midrule
\rowcolor{tolblue!10}\multicolumn{2}{l}{\textbf{\method{} (Ours)}} & \textcolor{burgundy}{\textbf{.960}} & .62 & \textcolor{burgundy}{\textbf{6091}} (.37) & \textcolor{burgundy}{\textbf{.982}} & .35 & \textcolor{burgundy}{\textbf{6091}} (.22) & \textcolor{burgundy}{\textbf{.800}} & \underline{.15} & \textcolor{burgundy}{\textbf{6091}} (\textcolor{burgundy}{\textbf{.10}}) \\
\bottomrule
\end{tabular}%
}
\end{table*}

\begin{table*}[!htbp]
\centering\footnotesize\setlength{\tabcolsep}{2.5pt}\renewcommand{\arraystretch}{1.12}
\caption{\small \textbf{EAV}, per fold, full modality availability: macro-F1 ($\uparrow$), modality usage MU ($\downarrow$) and end-to-end GFLOPs ($\downarrow$; fusion cost $\mathrm{GF}_{\mathrm{f}}$ in parentheses). Fold means and sample standard deviations of these values are the cells of Table~\ref{tab:full-results}; F1 is shown to three decimals. Per column, \textcolor{burgundy}{\textbf{bold}} and \underline{underlining} mark the best and second-best distinct displayed values (F1 highest; MU, GF and $\mathrm{GF}_{\mathrm{f}}$ lowest, GF compared on the displayed total); ties share markings.}
\label{tab:perfold-eav}
\vspace{-6pt}
\resizebox{\textwidth}{!}{%
\begin{tabular}{@{}ll*{3}{ccc}@{}}
\toprule
\multirow{2}{*}{\textbf{Type}} & \multirow{2}{*}{\textbf{Method}} & \multicolumn{3}{c}{\textbf{Fold 0}} & \multicolumn{3}{c}{\textbf{Fold 1}} & \multicolumn{3}{c}{\textbf{Fold 2}} \\
\cmidrule(lr){3-5} \cmidrule(lr){6-8} \cmidrule(lr){9-11}
& & F1 $\uparrow$ & MU $\downarrow$ & $\mathrm{GF}\,(\mathrm{GF}_{\mathrm{f}})\downarrow$ & F1 $\uparrow$ & MU $\downarrow$ & $\mathrm{GF}\,(\mathrm{GF}_{\mathrm{f}})\downarrow$ & F1 $\uparrow$ & MU $\downarrow$ & $\mathrm{GF}\,(\mathrm{GF}_{\mathrm{f}})\downarrow$ \\
\midrule
\multirow{4}{*}{\textbf{Monolithic}} & MAESTRO & .516 & 1.0 & 6230 (\underline{.15}) & .549 & 1.0 & 6230 (.15) & .537 & 1.0 & 6230 (.15) \\
 & ShaSpec & .638 & 1.0 & 6231 (.97) & .550 & 1.0 & 6231 (.97) & .615 & 1.0 & 6231 (.97) \\
 & MBT & .639 & 1.0 & 6230 (.26) & .588 & 1.0 & 6230 (.26) & .604 & 1.0 & 6230 (.26) \\
 & DecAlign & .634 & 1.0 & 6232 (1.77) & .613 & 1.0 & 6232 (1.77) & .599 & 1.0 & 6232 (1.77) \\
\arrayrulecolor{black!20}\midrule
\textbf{Fixed-seq.} & MultiModN & \underline{.692} & 1.0 & 6230 (.49) & .610 & 1.0 & 6230 (.49) & .642 & 1.0 & 6230 (.49) \\
\arrayrulecolor{black!20}\midrule
\multirow{5}{*}{\textbf{Selective}} & GMS & .654 & \textcolor{burgundy}{\textbf{.33}} & \textcolor{burgundy}{\textbf{139}} (\textcolor{burgundy}{\textbf{.09}}) & .619 & \textcolor{burgundy}{\textbf{.33}} & \textcolor{burgundy}{\textbf{139}} (\textcolor{burgundy}{\textbf{.09}}) & \textcolor{burgundy}{\textbf{.702}} & \textcolor{burgundy}{\textbf{.33}} & \textcolor{burgundy}{\textbf{139}} (\textcolor{burgundy}{\textbf{.09}}) \\
 & AdaMML & .690 & .67 & 6230 (.17) & .561 & .67 & 6230 (.17) & .680 & .67 & 6230 (.17) \\
 & DynMM & .654 & \textcolor{burgundy}{\textbf{.33}} & 6230 (.52) & \underline{.662} & \textcolor{burgundy}{\textbf{.33}} & 6230 (.52) & .678 & \underline{.37} & 6230 (.57) \\
 & DyMo & .645 & \underline{.52} & 6230 (.28) & .581 & .52 & 6230 (.28) & .567 & .53 & 6230 (.28) \\
 & MMEE & .618 & 1.0 & 6230 (.18) & .521 & 1.0 & 6230 (\underline{.11}) & .647 & 1.0 & 6230 (.26) \\
\arrayrulecolor{black!20}\midrule
\textbf{Proactive} & Joint-AFA & .585 & .73 & \underline{4394} (.18) & .526 & .52 & 2767 (.15) & .523 & .52 & 2802 (.15) \\
\arrayrulecolor{black}\midrule
\rowcolor{tolblue!10}\multicolumn{2}{l}{\textbf{\method{} (Ours)}} & \textcolor{burgundy}{\textbf{.699}} & \textcolor{burgundy}{\textbf{.33}} & \textcolor{burgundy}{\textbf{139}} (\textcolor{burgundy}{\textbf{.09}}) & \textcolor{burgundy}{\textbf{.668}} & \underline{.39} & \underline{870} (\underline{.11}) & \underline{.701} & .39 & \underline{1149} (\underline{.11}) \\
\bottomrule
\end{tabular}%
}
\end{table*}

\begin{table*}[!htbp]
\centering\footnotesize\setlength{\tabcolsep}{2.5pt}\renewcommand{\arraystretch}{1.12}
\caption{\small \textbf{UTD-MHAD}, per fold, full modality availability: macro-F1 ($\uparrow$), modality usage MU ($\downarrow$) and end-to-end GFLOPs ($\downarrow$; fusion cost $\mathrm{GF}_{\mathrm{f}}$ in parentheses). Fold means and sample standard deviations of these values are the cells of Table~\ref{tab:full-results}; F1 is shown to three decimals. Per column, \textcolor{burgundy}{\textbf{bold}} and \underline{underlining} mark the best and second-best distinct displayed values (F1 highest; MU, GF and $\mathrm{GF}_{\mathrm{f}}$ lowest, GF compared on the displayed total); ties share markings.}
\label{tab:perfold-utd_mhad}
\vspace{-6pt}
\resizebox{\textwidth}{!}{%
\begin{tabular}{@{}ll*{3}{ccc}@{}}
\toprule
\multirow{2}{*}{\textbf{Type}} & \multirow{2}{*}{\textbf{Method}} & \multicolumn{3}{c}{\textbf{Fold 0}} & \multicolumn{3}{c}{\textbf{Fold 1}} & \multicolumn{3}{c}{\textbf{Fold 2}} \\
\cmidrule(lr){3-5} \cmidrule(lr){6-8} \cmidrule(lr){9-11}
& & F1 $\uparrow$ & MU $\downarrow$ & $\mathrm{GF}\,(\mathrm{GF}_{\mathrm{f}})\downarrow$ & F1 $\uparrow$ & MU $\downarrow$ & $\mathrm{GF}\,(\mathrm{GF}_{\mathrm{f}})\downarrow$ & F1 $\uparrow$ & MU $\downarrow$ & $\mathrm{GF}\,(\mathrm{GF}_{\mathrm{f}})\downarrow$ \\
\midrule
\multirow{4}{*}{\textbf{Monolithic}} & MAESTRO & .683 & 1.0 & 7306 (\underline{.19}) & .721 & 1.0 & 7306 (.19) & .689 & 1.0 & 7306 (.19) \\
 & ShaSpec & .809 & 1.0 & 7307 (1.29) & \textcolor{burgundy}{\textbf{.765}} & 1.0 & 7307 (1.29) & .566 & 1.0 & 7307 (1.29) \\
 & MBT & \underline{.873} & 1.0 & 7306 (.35) & \underline{.759} & 1.0 & 7306 (.35) & \underline{.805} & 1.0 & 7306 (.35) \\
 & DecAlign & .838 & 1.0 & 7310 (4.21) & .709 & 1.0 & 7310 (4.21) & .692 & 1.0 & 7310 (4.21) \\
\arrayrulecolor{black!20}\midrule
\textbf{Fixed-seq.} & MultiModN & .764 & 1.0 & 7306 (.64) & .578 & 1.0 & 7306 (.64) & .636 & 1.0 & 7306 (.64) \\
\arrayrulecolor{black!20}\midrule
\multirow{5}{*}{\textbf{Selective}} & GMS & .675 & 1.0 & 7306 (.26) & .667 & .50 & 4124 (\underline{.13}) & .502 & .50 & \underline{4124} (\underline{.13}) \\
 & AdaMML & .792 & .50 & 7306 (\underline{.19}) & .683 & .50 & 7306 (.19) & .788 & .50 & 7306 (.19) \\
 & DynMM & .828 & 1.0 & 7308 (1.91) & .721 & 1.0 & 7308 (1.91) & .777 & 1.0 & 7308 (1.91) \\
 & DyMo & .634 & \underline{.43} & 7306 (.38) & .638 & \underline{.42} & 7306 (.38) & .538 & \underline{.43} & 7306 (.38) \\
 & MMEE & .631 & 1.0 & 7306 (.44) & .692 & 1.0 & 7306 (.44) & .622 & 1.0 & 7306 (.44) \\
\arrayrulecolor{black!20}\midrule
\textbf{Proactive} & Joint-AFA & .325 & \textcolor{burgundy}{\textbf{.26}} & \textcolor{burgundy}{\textbf{0.02}} (\textcolor{burgundy}{\textbf{.02}}) & .331 & \textcolor{burgundy}{\textbf{.41}} & \textcolor{burgundy}{\textbf{2246}} (\textcolor{burgundy}{\textbf{.09}}) & .256 & \textcolor{burgundy}{\textbf{.25}} & \textcolor{burgundy}{\textbf{19.20}} (\textcolor{burgundy}{\textbf{.02}}) \\
\arrayrulecolor{black}\midrule
\rowcolor{tolblue!10}\multicolumn{2}{l}{\textbf{\method{} (Ours)}} & \textcolor{burgundy}{\textbf{.931}} & .56 & \underline{5204} (.21) & .724 & \textcolor{burgundy}{\textbf{.41}} & \underline{3488} (.15) & \textcolor{burgundy}{\textbf{.826}} & .58 & 5426 (.21) \\
\bottomrule
\end{tabular}%
}
\end{table*}

\subsection{Class-conditional acquisition on the remaining five datasets}\label{app:class_cond_all}

Figure~\ref{fig:diag-class} in the main text shows that \method{}'s acquisitions separate by true class on \textbf{IEMOCAP}, in a form---one bar panel per class---that only stays readable for a small label space. Figure~\ref{fig:class_heat} gives the same quantity, $P(j\in S\mid y)$, for the other five datasets: a class$\times$modality image with a side strip for the per-class mean acquired-set size. They span $C{=}5$--$27$, where a panel per class is no longer legible.

The effect is not an \textbf{IEMOCAP} artifact. Mean acquired-set size varies by class on every dataset---$3.44$--$5.76$ of $M{=}7$ on \textbf{CMI}, $1.63$--$3.67$ of $7$ on \textbf{CZU-MHAD} and $1.92$--$3.50$ of $4$ on \textbf{UTD-MHAD}---so on every stack some classes are settled cheaply and others are not. The universal-probe pattern recurs as the dark column each panel carries: one stream the policy almost never skips, IMU on \textbf{CMI} and inertial on \textbf{UTD-MHAD} as text is on \textbf{IEMOCAP}, with the remaining streams bought class-selectively around it. Two datasets sit at the extremes. \textbf{MM-Fi} is the narrowest at $1.00$--$1.50$ of $M{=}5$: several classes are settled by a single modality, consistent with its policy acquiring only $0.218$ of the available streams in Table~\ref{tab:comp-replace-complete}. \textbf{EAV} is flat for the opposite reason---with $M{=}3$ there is little to choose between, and the acquired set stays between $1.12$ and $1.25$---which is worth reading as a limit of the method rather than a property of the data: adaptive acquisition needs something to trade off, and three streams barely provide it.

\begin{figure}[t]
\centering
\begin{subfigure}{0.32\linewidth}\centering

  \includegraphics[width=\linewidth,height=3.4cm,keepaspectratio]{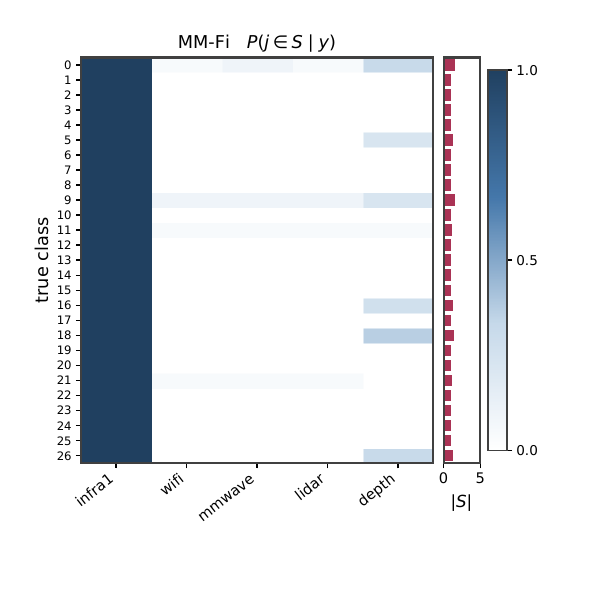}\caption{MM-Fi}\end{subfigure}
\hfill
\begin{subfigure}{0.32\linewidth}\centering
  \includegraphics[width=\linewidth,height=3.4cm,keepaspectratio]{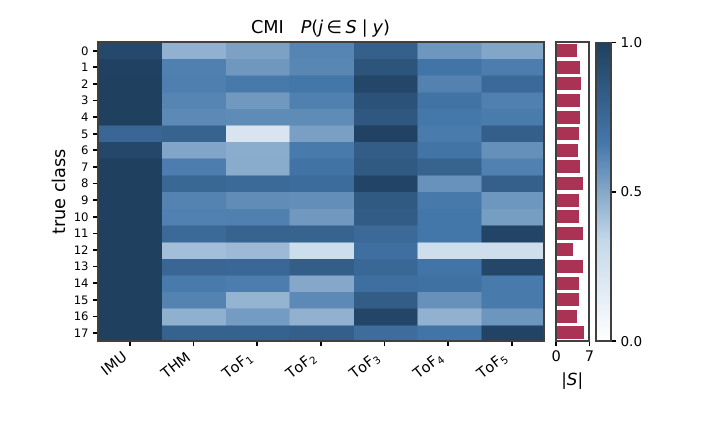}\caption{CMI}\end{subfigure}
\hfill
\begin{subfigure}{0.32\linewidth}\centering
  \includegraphics[width=\linewidth,height=3.4cm,keepaspectratio]{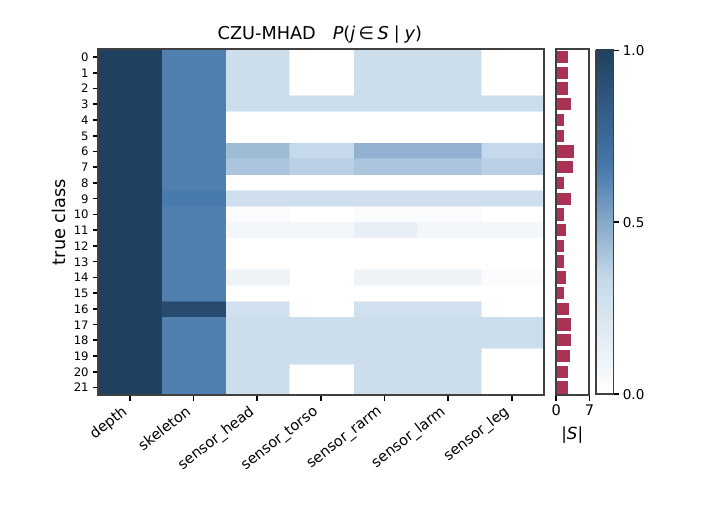}\caption{CZU-MHAD}\end{subfigure}

\vspace{5pt}
\begin{subfigure}{0.32\linewidth}\centering
  \includegraphics[width=\linewidth,height=3.4cm,keepaspectratio]{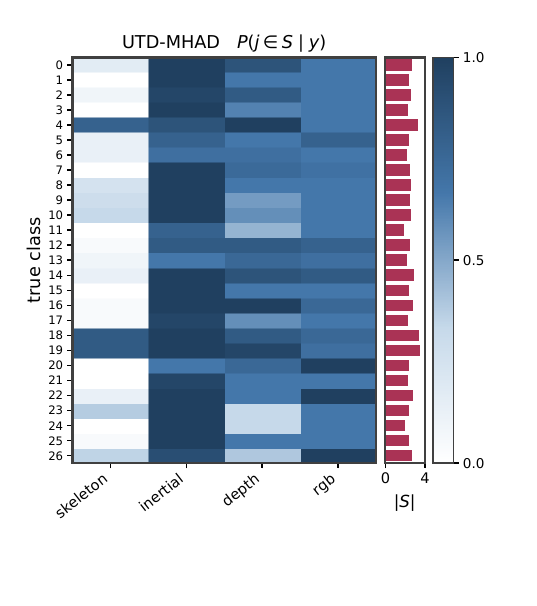}\caption{UTD-MHAD}\end{subfigure}
\hfill
\begin{subfigure}{0.32\linewidth}\centering
  \includegraphics[width=\linewidth,height=3.4cm,keepaspectratio]{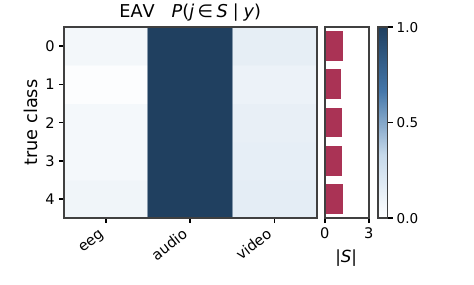}\caption{EAV}\end{subfigure}
\hfill
\begin{subfigure}{0.32\linewidth}\centering\end{subfigure}
\caption{\textbf{Class-conditional acquisition on the five datasets outside the main text.} Each cell is $P(j\in S\mid y)$, the probability that the deployed policy acquires modality $j$ given true class $y$, at full availability with the test folds pooled; darker is more often acquired. The red strip right of each panel is that class's mean acquired-set size $|S|$ on a $0$--$M$ scale. Panels preserve their aspect ratios and fit within their columns; modality counts are $M{=}3$--$7$ and class counts are $C{=}5$--$27$. \textbf{IEMOCAP}, drawn the same way, is Figure~\ref{fig:diag-class}.}
\label{fig:class_heat}
\end{figure}

\section{Complete Energy--Latency Study Results}
\label{app:energy-latency-complete}

Table~\ref{tab:energy-latency-complete} reports the hardware results supporting Figure~\ref{fig:energy_latency}. Across six datasets, \method{} has the lowest reported latency in 10 of 12 GPU/CPU conditions and the lowest reported energy in 8 of 12, with mean reductions of 44.0\% and 42.6\%, respectively, relative to the best of the three adaptive baselines (AdaMML, DynMM and DyMo) for each metric. The exceptions are CMI, whose raw sensor streams are nearly free to extract, and CZU-MHAD, where \method{} acquires the dominant depth stream for every sample and its GPU latency is within 0.3\% of the fastest baseline. Android INT8 results cover all six datasets and are excluded from these GPU/CPU aggregate percentages. On Android INT8, \method{} has the lowest reported latency and energy on all six datasets, with mean reductions of 47.2\% and 49.7\%, respectively, relative to the best of the same three adaptive baselines for each metric. These are equal-weight means of the six dataset-specific percentage reductions; the CMI energy comparison should be interpreted cautiously because of the small measured energies.

\paragraph{Measurement protocol for Table~\ref{tab:energy-latency-complete}.}
GPU measurements use one NVIDIA L40 (48\,GB) in a dual-socket Intel Xeon Platinum 8462Y+ server ($2\times32$ cores), with the machine otherwise idle and the GPU dedicated to the benchmark. CPU measurements use the same server without GPU execution and with PyTorch's default intra-op thread pool. Each method runs in eager PyTorch at batch size~1. For each sample, its required raw files are read and decoded, the corresponding frozen extractors and downstream encoders execute according to the method's selection and missingness rules, and the fusion network and any acquisition loop produce the logits. All encoders are loaded and one-time archive conversions are completed before timing. The first three samples of each fold are used for untimed warm-up on both GPU and CPU.

Latency is wall-clock time from immediately before the first file read until the logits are returned. GPU timing synchronizes CUDA at both boundaries; CPU timing uses the same wall-clock boundaries without CUDA synchronization. GPU energy is the difference between readings of the board's cumulative energy counter, \texttt{nvmlDeviceGetTotalEnergyConsumption}, at these boundaries. It requires no power sampling and includes the GPU's idle draw during the timed interval.

CPU energy is measured on the complete test set of each fold using the difference of four package-plus-DRAM RAPL counters, including idle consumption over the interval. Each baseline energy cell averages these samples within each fold and then averages the folds.

\paragraph{\method{} under the deployed rule.}
\method{}'s GPU and CPU latency is measured end to end under the deployed rule (Section~\ref{sec:summary}) on the complete test folds, using the protocol above. Android deployment uses the same policy rule. For fairness we use the same mobile device for all measurements. Running takes very long due to the low processing speed. Thus, measurements on the phone are for the first 50 test samples of each fold. On every platform, the prior uses the same per-fold $\lambda_{\mathrm{train}}$ as policy fitting. 

Server prediction metrics are evaluated on the complete test set. Latency and energy are measured end to end for each baseline. For example, the IEMOCAP folds contain 909, 793 and 1000 test samples. Each reported mean averages samples within a fold and then weights the three folds equally. Fold-to-fold latency standard deviations are below 10\% of the mean except on CZU-MHAD and UTD-MHAD, whose folds differ in clip length; per-fold measurements are retained with the raw result files. Component-replacement and cost-sensitivity latency measurements use the same L40 server and time each configuration end to end (Appendices~\ref{app:comp-replace-complete} and~\ref{app:reward-sensitivity}).

\subsection{On-Device Deployment and Quantization}
\label{app:deployment}

\paragraph{Hardware and runtime.}
On-device measurements use a OnePlus phone with a Qualcomm Snapdragon~8~Gen~3 (SM8650: one Cortex-X4, five Cortex-A720 and two Cortex-A520 cores, Adreno~750 GPU, and Hexagon NPU), 16\,GB RAM, and Android~16. One Kotlin application performs the complete pipeline at batch size~1: raw-file decoding, feature extraction, modality acquisition, fusion, and prediction. Inference uses ONNX Runtime for Android 1.19 (\texttt{onnxruntime-android} from Maven), with the CPU execution provider and four intra-op threads. NNAPI, GPU and NPU execution providers are disabled. Encoder sessions are created once and remain resident, excluding session creation from timed samples.

Latency is measured with \texttt{System.nanoTime}. Phone energy is calculated from the battery fuel gauge by integrating instantaneous current times voltage at 10\,Hz and subtracting an idle baseline calibrated for 3\,s before each run, with the phone unplugged and the screen on. For datasets whose samples are shorter than the gauge update interval, all measured samples are timed as one block and total latency and energy are divided by the sample count. To keep measurement time manageable, we perform measurements on a subset of the test samples. Every method uses the same fixed first 50 test samples of each fold, with per-sample records retained in the app's result files. On-device macro-F1 is computed on these samples; server GPU/CPU macro-F1 uses the complete test folds. The quantization pilot below reports accuracy and is distinct from macro-F1. Thus, server and phone prediction metrics describe different evaluation populations.

\paragraph{Quantization.}
All deployed methods use post-training quantization without fine-tuning. Deployment uses ONNX Runtime dynamic W8A8 quantization, with per-channel INT8 weights and runtime per-tensor INT8 activations for compute-dominant matrix multiplications, including linear and attention projections. Normalization, softmax, activation functions, and small \texttt{Gemm}/\texttt{Conv} layers remain fp32. SmoothQuant~\citep{xiao2023smoothquant} redistributes activation outliers into weights through an equivalent fp32 transformation.

\paragraph{Calibration data and label use.}
Calibration inputs come from the validation partitions. The initial IEMOCAP calibration for XLS-R, BERT, DINOv3, and the fusion graph uses eight samples from the val manifest, sampled with seed~0 and stratified by label (two samples per emotion). Fusion graphs for the other five datasets use the last eight samples in each val-fold manifest. The shared DINOv3 image encoder is subsequently recalibrated on a mixture of IEMOCAP video frames and frames from the first three staged val samples per source on UTD-MHAD (depth and RGB), CZU-MHAD (depth), and MM-Fi (depth), using 12 frames per source.

SmoothQuant calibration itself uses channel-wise maxima of absolute input activations for each matrix multiplication and weight ranges, without a supervised loss or parameter fine-tuning. Dynamic W8A8 computes activation scales per tensor at runtime. 

Table~\ref{tab:quant-pilot} motivates this choice. Whole-stack accuracy on IEMOCAP varies by less than 0.5 percentage points across precisions, but deployment efficiency and feature fidelity differ substantially. fp16 causes ViT-L attention overflow and largely falls back to fp32 on the CPU; weight-only INT8 preserves fidelity but provides little speedup. W8A8 enables fast INT8 kernels, yet severely distorts DINOv3 and BERT features. SmoothQuant improves their cosine similarity to fp32 from 0.618 to 0.990 and from 0.938 to 0.979, respectively, while retaining accelerated inference. Fusion graphs operating on $\ell_2$-normalized features already preserve outputs under either variant (mean cosine 0.9999; argmax agreement 1.000). Across the adaptive methods, SmoothQuant increases end-to-end prediction agreement with fp32 from 81.2\% to 93.8\%, with exact agreement for \method{} and DyMo (24 fold-0 IEMOCAP samples run end to end through the ONNX graphs, \method{} under the deployed rule). We therefore deploy SmoothQuant W8A8 without fine-tuning for every method.

\begin{table}[htbp]
\centering\small
\setlength{\tabcolsep}{4pt}
\caption{Quantization pilot on IEMOCAP. \textbf{Top:} cross-subject test accuracy (three folds) for fp32, fp16, and two INT8 settings. QAT is included only as an experimental comparison; the deployed PTQ setting uses no fine-tuning. fp16 uses bf16 for the ViT-L encoder. \textbf{Bottom:} the shared encoders on Android phone with ONNX Runtime: size, output-feature cosine against fp32 on real inputs (min.\ per-token cosine in parentheses) and latency relative to fp32. W8A8 = plain dynamic INT8; SQ70 = SmoothQuant($\alpha{=}0.7$) then the same quantizer, the deployed format.}
\label{tab:quant-pilot}
\begin{tabular}{@{}lcccc@{}}
\toprule
\textbf{Method} & \textbf{fp32} & \textbf{fp16} & \textbf{INT8 (QAT)} & \textbf{INT8 (PTQ)} \\
\midrule
\backbone{}+\policy{} (Ours) & .696 & .697 & .696 & .696 \\
AdaMML & .623 & .624 & .624 & .619 \\
DynMM & .663 & .661 & .663 & .663 \\
DyMo & .640 & .639 & .641 & .642 \\
\midrule\midrule
\textbf{Encoder / format} & \textbf{MB} & \textbf{cos (min tok.)} & \multicolumn{2}{c}{\textbf{latency vs fp32}} \\
\midrule
DINOv3-L/16 \ fp32 & 1213 & 1.000 (1.000) & \multicolumn{2}{c}{1.00$\times$} \\
\phantom{DINOv3-L/16} \ weight-only INT8 & 308 & 1.000 (--) & \multicolumn{2}{c}{$\approx$1.0$\times$} \\
\phantom{DINOv3-L/16} \ W8A8 & 308 & 0.618 (0.528) & \multicolumn{2}{c}{0.24$\times$} \\
\phantom{DINOv3-L/16} \ \textbf{SQ70} & 309 & \textbf{0.990 (0.973)} & \multicolumn{2}{c}{\textbf{0.21$\times$}} \\
XLSR-1B \ fp32 & 3852 & 1.000 (1.000) & \multicolumn{2}{c}{1.00$\times$} \\
\phantom{XLSR-1B} \ W8A8 & 1023 & 0.995 (0.252) & \multicolumn{2}{c}{0.30$\times$} \\
\phantom{XLSR-1B} \ \textbf{SQ70} & 1025 & \textbf{0.996 (0.384)} & \multicolumn{2}{c}{\textbf{0.25$\times$}} \\
BERT-base \ fp32 & 436 & 1.000 (1.000) & \multicolumn{2}{c}{1.00$\times$} \\
\phantom{BERT-base} \ W8A8 & 182 & 0.938 (0.117) & \multicolumn{2}{c}{0.44$\times$} \\
\phantom{BERT-base} \ \textbf{SQ70} & 182 & \textbf{0.979 (0.460)} & \multicolumn{2}{c}{\textbf{0.40$\times$}} \\
\bottomrule
\end{tabular}
\end{table}

\begin{table*}[htbp]
\centering
\scriptsize
\setlength{\tabcolsep}{2.5pt}
\renewcommand{\arraystretch}{1.05}
\caption{Latency and energy across compute platforms.
Each entry reports latency (ms)/energy (J); lower is better.
\textcolor{burgundy}{\textbf{Bold}} and \underline{underlining}
denote the best and second-best values within each dataset and
platform, ranked separately for latency and energy at full precision.
Latency is measured end to end for every method and platform. \method{} uses the deployed policy. Server prediction metrics on GPU and CPU use the complete test folds; phone metrics use the same 50 samples per fold for every method. GPU and CPU energy include idle consumption over the timed interval, whereas phone energy subtracts a calibrated idle baseline.}
\label{tab:energy-latency-complete}
\resizebox{\textwidth}{!}{%
\begin{tabular}{@{}l*{5}{c!{\color{black!25}\vrule width 0.35pt}}c@{}}
\toprule
\textbf{Method}
& \textbf{IEMOCAP}
& \textbf{MM-Fi}
& \textbf{CMI}
& \textbf{CZU-MHAD}
& \textbf{EAV}
& \textbf{UTD-MHAD} \\
\midrule
\multicolumn{7}{@{}l}{\emph{GPU}} \\
MBT
& \underline{1530.22}/\underline{367.68}
& 4234.19/804.09
& 15.12/2.15
& 1324.02/260.59
& 2032.22/321.31
& 1036.01/279.43 \\
AdaMML
& 1595.15/385.60
& 4196.11/\underline{789.15}
& \underline{14.61}/\textcolor{burgundy}{\textbf{1.57}}
& \textcolor{burgundy}{\textbf{1312.38}}/262.83
& \underline{2011.12}/\underline{316.99}
& \underline{1018.04}/\underline{275.27} \\
DynMM
& 1599.79/384.79
& \underline{4194.79}/796.49
& 17.98/1.64
& 1317.80/\textcolor{burgundy}{\textbf{259.45}}
& 2017.36/319.48
& 1020.67/275.42 \\
DyMo
& 1616.70/388.86
& 4208.37/799.10
& 35.50/3.75
& 1341.98/264.15
& 2028.04/322.96
& 1031.99/277.82 \\
\rowcolor{tolblue!10}
\textbf{\method{} (Ours)}
& \textcolor{burgundy}{\textbf{732.23}}/\textcolor{burgundy}{\textbf{180.65}}
& \textcolor{burgundy}{\textbf{179.46}}/\textcolor{burgundy}{\textbf{32.84}}
& \textcolor{burgundy}{\textbf{14.39}}/\underline{1.60}
& \underline{1315.73}/\underline{260.54}
& \textcolor{burgundy}{\textbf{224.42}}/\textcolor{burgundy}{\textbf{36.74}}
& \textcolor{burgundy}{\textbf{627.78}}/\textcolor{burgundy}{\textbf{169.99}} \\
\arrayrulecolor{black!20}\midrule
\arrayrulecolor{black}
\multicolumn{7}{@{}l}{\emph{CPU}} \\
MBT
& 14618.00/3844.30
& 19551.00/4509.80
& 35.01/3.80
& 6467.90/1360.10
& 8054.00/1246.80
& 8055.00/2308.20 \\
AdaMML
& 14687.00/\underline{3693.60}
& 19551.00/\underline{4429.50}
& \textcolor{burgundy}{\textbf{23.00}}/\textcolor{burgundy}{\textbf{2.10}}
& \underline{6437.00}/\textcolor{burgundy}{\textbf{1265.90}}
& 8076.00/1246.80
& 8062.00/\underline{2273.80} \\
DynMM
& \underline{13546.00}/3844.30
& \underline{19477.00}/4646.50
& 41.00/4.40
& 6489.00/1420.30
& \underline{7954.00}/1319.00
& \underline{8031.00}/2390.10 \\
DyMo
& 14618.00/3856.80
& 19567.00/4509.80
& 48.00/4.60
& 6498.00/1364.50
& 8054.00/\underline{1244.30}
& 8055.00/2308.20 \\
\rowcolor{tolblue!10}
\textbf{\method{} (Ours)}
& \textcolor{burgundy}{\textbf{7057.57}}/\textcolor{burgundy}{\textbf{1862.06}}
& \textcolor{burgundy}{\textbf{810.81}}/\textcolor{burgundy}{\textbf{186.88}}
& \underline{30.64}/\underline{2.94}
& \textcolor{burgundy}{\textbf{6284.01}}/\underline{1319.57}
& \textcolor{burgundy}{\textbf{628.70}}/\textcolor{burgundy}{\textbf{97.13}}
& \textcolor{burgundy}{\textbf{4429.75}}/\textcolor{burgundy}{\textbf{1269.37}} \\
\arrayrulecolor{black!20}\midrule
\arrayrulecolor{black}
\multicolumn{7}{@{}l}{\emph{Android ONNX Runtime (INT8)}} \\
MBT
& 38229.44/169.40
& 100256.70/351.37
& \underline{45.00}/\underline{0.11}
& \underline{33891.60}/174.72
& 37646.00/163.14
& \underline{42716.00}/\underline{165.44} \\
AdaMML
& 38360.50/\underline{151.76}
& 99529.80/354.73
& 47.00/0.18
& 34356.00/\underline{155.27}
& \underline{37445.30}/167.97
& 47223.00/174.50 \\
DynMM
& 39354.00/169.99
& \underline{96346.60}/381.87
& 59.00/0.21
& 34482.30/155.95
& 37498.70/163.76
& 45987.00/168.25 \\
DyMo
& \underline{37946.00}/180.07
& 99350.50/\underline{350.61}
& 46.00/0.13
& 34530.80/155.87
& 37460.70/\underline{159.65}
& 43746.00/188.68 \\
\rowcolor{tolblue!10}
\textbf{\method{} (Ours)}
& \textcolor{burgundy}{\textbf{19383.10}}/\textcolor{burgundy}{\textbf{88.21}}
& \textcolor{burgundy}{\textbf{2509.60}}/\textcolor{burgundy}{\textbf{12.23}}
& \textcolor{burgundy}{\textbf{39.80}}/\textcolor{burgundy}{\textbf{0.09}}
& \textcolor{burgundy}{\textbf{33058.10}}/\textcolor{burgundy}{\textbf{147.85}}
& \textcolor{burgundy}{\textbf{5909.90}}/\textcolor{burgundy}{\textbf{24.79}}
& \textcolor{burgundy}{\textbf{28305.00}}/\textcolor{burgundy}{\textbf{96.84}} \\
\bottomrule
\end{tabular}%
}
\end{table*}


\section{Complete Missingness Study Results}
\label{app:missingness-complete}

Tables~\ref{tab:missingness-iemocap}--\ref{tab:missingness-utdmhad} report macro-F1 and total GFLOPs, with post-extraction cost $\mathrm{GF}_{\mathrm f}$ in parentheses. The primary-method entries at 0\% missingness use the same recomputed three-fold means and sample standard deviations as Table~\ref{tab:full-results}. The 10--40\% measurements are retained at their reported precision; missing standard deviations are not inferred. Among the 30 dataset--missingness conditions, \method{} attains the highest displayed F1 in 26 (including ties) and is Pareto non-dominated in 29 using total GFLOPs, or 29 using post-extraction GFLOPs. The backbone-only diagnostic is excluded from these rankings. These are displayed-precision counts, not claims about an unavailable higher-precision ranking.

At 40\% missingness, the displayed values imply the following relative F1 changes and total-GFLOP reductions against the highest-F1 baseline (ties broken by lower total GFLOPs): IEMOCAP (0.0\%, 63.9\%), MM-Fi (6.9\%, 52.4\%), CMI (0.0\%, 60.0\%), CZU-MHAD (7.0\%, -2.6\%), EAV (3.7\%, 55.2\%), UTD-MHAD (6.3\%, 19.8\%). A negative reduction denotes a cost increase. Post-extraction costs are separate and must not be substituted for total GFLOPs in these comparisons.

\noindent \textbf{Experimental Setup for Missingness Study.} 
We evaluate every method on the same six datasets and the same cross-subject test folds as Table~\ref{tab:full-results}, under test-time modality missingness at rates $\rho \in \{0.1, 0.2, 0.3, 0.4\}$; $\rho = 0$ is the full-availability setting of the main table. Missingness is simulated per test sample: each of the $M$ modalities is dropped independently with probability $\rho$, and if every modality of a sample is dropped one is reinstated uniformly at random.
The masks are drawn once per fold with a fixed seed and shared by all methods, so every method sees exactly the same set of absent modalities on every sample. All models use the checkpoints of the full-availability experiment, and \method{} keeps its deployed acquisition policy unchanged across all levels. The policy is trained with modality-availability augmentation 

and absent modalities are in-distribution for it at test time. Every method skips raw-data extraction for absent modalities. MBT, MAESTRO, ShaSpec, DecAlign, MultiModN and MMEE still execute their downstream modality encoders with zero-filled inputs for absent positions; MultiModN separately skips the corresponding state updates. AdaMML and DyMo can skip encoders, whereas DynMM cannot. GMS, Joint-AFA and \method{} request only available modalities through their selection or acquisition masks. MU is normalized per sample by the number of available modalities, and is therefore $1$ for methods that retain all available inputs. End-to-end GFLOPs include the raw extraction actually performed and the downstream encoders, fusion, and controller computation executed at batch size~1.

Macro-F1 is computed on the full test set of each fold and averaged over folds; standard deviations are given where all folds were measured under the three-trial protocol.

\begin{table*}[t]
\centering
\scriptsize
\setlength{\tabcolsep}{3.0pt}
\renewcommand{\arraystretch}{1.04}
\caption{Macro-F1 ($\uparrow$; mean$_{\pm\mathrm{std}}$ where available) and end-to-end GFLOPs ($\downarrow$; fusion cost $\mathrm{GF}_{\mathrm{f}}$ in parentheses) on IEMOCAP under 0--40\% test-time modality missingness. Per column, \textcolor{burgundy}{\textbf{bold}} and \underline{underlining} mark the best and second-best distinct displayed values (F1 highest; GF and $\mathrm{GF}_{\mathrm{f}}$ lowest); ties share markings; the backbone reference row is not ranked. The 0\% primary-method entries use the recomputed three-fold summaries.}
\label{tab:missingness-iemocap}
\resizebox{\textwidth}{!}{%
\begin{tabular}{@{}l*{4}{cc!{\color{black!25}\vrule width 0.35pt}}cc@{}}
\toprule
\multirow{2}{*}{\textbf{Method}} & \multicolumn{2}{c}{\textbf{0\%}} & \multicolumn{2}{c}{\textbf{10\%}} & \multicolumn{2}{c}{\textbf{20\%}} & \multicolumn{2}{c}{\textbf{30\%}} & \multicolumn{2}{c}{\textbf{40\%}} \\
\cmidrule(lr){2-3}\cmidrule(lr){4-5}\cmidrule(lr){6-7}\cmidrule(lr){8-9}\cmidrule(lr){10-11}
& F1$\uparrow$ & $\mathrm{GF}\,(\mathrm{GF}_{\mathrm{f}})\downarrow$ & F1$\uparrow$ & $\mathrm{GF}\,(\mathrm{GF}_{\mathrm{f}})\downarrow$ & F1$\uparrow$ & $\mathrm{GF}\,(\mathrm{GF}_{\mathrm{f}})\downarrow$ & F1$\uparrow$ & $\mathrm{GF}\,(\mathrm{GF}_{\mathrm{f}})\downarrow$ & F1$\uparrow$ & $\mathrm{GF}\,(\mathrm{GF}_{\mathrm{f}})\downarrow$ \\
\midrule
MBT & \underline{.64}$_{\pm.02}$ & 6696 (.41) & \underline{.61} & 6026 (.41) & \underline{.60}$_{\pm.02}$ & 5357 (.41) & \underline{.57} & 4688 (.41) & \textcolor{burgundy}{\textbf{.55}}$_{\pm.03}$ & 4022 (.41) \\
MAESTRO & .51$_{\pm.04}$ & 6696 (.23) & .45 & 6026 (.23) & .42$_{\pm.02}$ & 5357 (.23) & .38 & 4688 (.23) & .36$_{\pm.03}$ & 4022 (.23) \\
ShaSpec & .61$_{\pm.03}$ & 6697 (1.51) & .58 & 6027 (1.51) & .56$_{\pm.03}$ & 5358 (1.51) & .55 & 4689 (1.51) & \underline{.51}$_{\pm.03}$ & 4023 (1.51) \\
DecAlign & .48$_{\pm.01}$ & 6707 (11.41) & .46 & 6037 (11.41) & .45$_{\pm.01}$ & 5368 (11.41) & .42 & 4699 (11.41) & .42$_{\pm.01}$ & 4033 (11.41) \\
\arrayrulecolor{black!20}\midrule
\arrayrulecolor{black}
MultiModN & .58$_{\pm.06}$ & 6696 (.76) & .57 & 6027 (.76) & .55$_{\pm.07}$ & 5357 (.76) & .54 & 4688 (.76) & \underline{.51}$_{\pm.06}$ & 4023 (.76) \\
\arrayrulecolor{black!20}\midrule
\arrayrulecolor{black}
GMS & .54$_{\pm.08}$ & 4465 (.23) & .52$_{\pm.06}$ & 4046 (\underline{.20}) & .49$_{\pm.05}$ & 3604 (\underline{.18}) & .47$_{\pm.05}$ & 3146 (\underline{.16}) & .44$_{\pm.04}$ & 2712 (.14) \\
AdaMML & .59$_{\pm.03}$ & 6696 (\underline{.20}) & .55 & 6026 (.22) & .55$_{\pm.04}$ & 5357 (.20) & .48 & 4688 (.22) & .50$_{\pm.02}$ & 4022 (.22) \\
DynMM & .62$_{\pm.01}$ & 6697 (1.78) & .58 & 6028 (1.79) & .56$_{\pm.01}$ & 5358 (1.81) & .50 & 4690 (1.84) & .48$_{\pm.01}$ & 4024 (1.87) \\
DyMo & .56$_{\pm.05}$ & 6696 (.48) & .54 & 6026 (.48) & .53$_{\pm.05}$ & 5357 (.48) & .51 & 4688 (.48) & .49$_{\pm.04}$ & 4022 (.48) \\
MMEE & .61$_{\pm.01}$ & 6696 (.38) & .58 & 6026 (.39) & .56$_{\pm.01}$ & 5357 (.40) & .53 & 4688 (.41) & \underline{.51}$_{\pm.01}$ & 4022 (.42) \\
\arrayrulecolor{black!20}\midrule
\arrayrulecolor{black}
Joint-AFA & .57$_{\pm.04}$ & \textcolor{burgundy}{\textbf{1655}} (\textcolor{burgundy}{\textbf{.13}}) & .55$_{\pm.03}$ & \textcolor{burgundy}{\textbf{1908}} (\textcolor{burgundy}{\textbf{.13}}) & .53$_{\pm.03}$ & \textcolor{burgundy}{\textbf{2067}} (\textcolor{burgundy}{\textbf{.12}}) & .52$_{\pm.01}$ & \underline{2076} (\textcolor{burgundy}{\textbf{.12}}) & .49$_{\pm.02}$ & \underline{2038} (\textcolor{burgundy}{\textbf{.12}}) \\
\arrayrulecolor{black!20}\midrule
\arrayrulecolor{black}
\backbone{} (w/o \policy{}) & .67$_{\pm.01}$ & 6696 (.43) & .63 & 6026 (.39) & .64$_{\pm.01}$ & 5357 (.34) & .60 & 4688 (.30) & .58$_{\pm.03}$ & 4022 (.26) \\
\rowcolor{tolblue!10}
\textbf{\method{} (Ours)} & \textcolor{burgundy}{\textbf{.67}}$_{\pm.02}$ & \underline{3311} (.28) & \textcolor{burgundy}{\textbf{.65}}$_{\pm.02}$ & \underline{2589} (.23) & \textcolor{burgundy}{\textbf{.62}}$_{\pm.00}$ & \underline{2095} (.19) & \textcolor{burgundy}{\textbf{.60}}$_{\pm.01}$ & \textcolor{burgundy}{\textbf{1764}} (\underline{.16}) & \textcolor{burgundy}{\textbf{.55}}$_{\pm.03}$ & \textcolor{burgundy}{\textbf{1453}} (\underline{.13}) \\
\bottomrule
\end{tabular}%
}
\end{table*}

\begin{table*}[t]
\centering
\scriptsize
\setlength{\tabcolsep}{3.0pt}
\renewcommand{\arraystretch}{1.04}
\caption{Macro-F1 ($\uparrow$; mean$_{\pm\mathrm{std}}$ where available) and end-to-end GFLOPs ($\downarrow$; fusion cost $\mathrm{GF}_{\mathrm{f}}$ in parentheses) on MM-Fi under 0--40\% test-time modality missingness. Per column, \textcolor{burgundy}{\textbf{bold}} and \underline{underlining} mark the best and second-best distinct displayed values (F1 highest; GF and $\mathrm{GF}_{\mathrm{f}}$ lowest); ties share markings; the backbone reference row is not ranked. The 0\% primary-method entries use the recomputed three-fold summaries.}
\label{tab:missingness-mmfi}
\resizebox{\textwidth}{!}{%
\begin{tabular}{@{}l*{4}{cc!{\color{black!25}\vrule width 0.35pt}}cc@{}}
\toprule
\multirow{2}{*}{\textbf{Method}} & \multicolumn{2}{c}{\textbf{0\%}} & \multicolumn{2}{c}{\textbf{10\%}} & \multicolumn{2}{c}{\textbf{20\%}} & \multicolumn{2}{c}{\textbf{30\%}} & \multicolumn{2}{c}{\textbf{40\%}} \\
\cmidrule(lr){2-3}\cmidrule(lr){4-5}\cmidrule(lr){6-7}\cmidrule(lr){8-9}\cmidrule(lr){10-11}
& F1$\uparrow$ & $\mathrm{GF}\,(\mathrm{GF}_{\mathrm{f}})\downarrow$ & F1$\uparrow$ & $\mathrm{GF}\,(\mathrm{GF}_{\mathrm{f}})\downarrow$ & F1$\uparrow$ & $\mathrm{GF}\,(\mathrm{GF}_{\mathrm{f}})\downarrow$ & F1$\uparrow$ & $\mathrm{GF}\,(\mathrm{GF}_{\mathrm{f}})\downarrow$ & F1$\uparrow$ & $\mathrm{GF}\,(\mathrm{GF}_{\mathrm{f}})\downarrow$ \\
\midrule
MBT & .77$_{\pm.19}$ & 18098 (.46) & .69 & 16288 (.46) & .63$_{\pm.17}$ & 14480 (.46) & .56 & 12678 (.46) & .52$_{\pm.14}$ & 10896 (.46) \\
MAESTRO & .18$_{\pm.09}$ & 18098 (.28) & .17 & 16288 (.28) & .14$_{\pm.07}$ & 14479 (.28) & .14 & 12677 (.28) & .14$_{\pm.08}$ & 10896 (.28) \\
ShaSpec & .66$_{\pm.08}$ & 18099 (1.68) & .58 & 16289 (1.68) & .56$_{\pm.04}$ & 14481 (1.68) & .47 & 12679 (1.68) & .44$_{\pm.05}$ & 10897 (1.68) \\
DecAlign & .28$_{\pm.19}$ & 18106 (8.45) & .24 & 16296 (8.45) & .24$_{\pm.12}$ & 14488 (8.45) & .21 & 12685 (8.45) & .19$_{\pm.09}$ & 10904 (8.45) \\
\arrayrulecolor{black!20}\midrule
\arrayrulecolor{black}
MultiModN & .40$_{\pm.06}$ & 18098 (.84) & .37 & 16289 (.84) & .35$_{\pm.04}$ & 14480 (.84) & .32 & 12678 (.84) & .29$_{\pm.07}$ & 10896 (.84) \\
\arrayrulecolor{black!20}\midrule
\arrayrulecolor{black}
GMS & .40$_{\pm.11}$ & 18092 (.49) & .36$_{\pm.08}$ & 16429 (.44) & .34$_{\pm.06}$ & 15124 (.39) & .30$_{\pm.06}$ & 13346 (.35) & .28$_{\pm.06}$ & 11782 (.30) \\
AdaMML & \textcolor{burgundy}{\textbf{.96}}$_{\pm.02}$ & 18098 (\underline{.17}) & \underline{.88} & 16288 (\underline{.17}) & \underline{.87}$_{\pm.02}$ & 14479 (\underline{.17}) & \underline{.74} & 12677 (\underline{.17}) & \underline{.72}$_{\pm.03}$ & 10896 (\underline{.17}) \\
DynMM & .71$_{\pm.30}$ & 18099 (1.40) & .66 & 16289 (1.35) & .60$_{\pm.24}$ & 14480 (1.33) & .54 & 12678 (1.33) & .47$_{\pm.17}$ & 10897 (1.30) \\
DyMo & .54$_{\pm.03}$ & 18098 (.52) & .53 & 16288 (.52) & .51$_{\pm.04}$ & 14480 (.52) & .49 & 12678 (.52) & .45$_{\pm.04}$ & 10896 (.52) \\
MMEE & .60$_{\pm.05}$ & 18098 (.59) & .56 & 16288 (.59) & .53$_{\pm.04}$ & 14480 (.59) & .51 & 12678 (.59) & .46$_{\pm.06}$ & 10896 (.59) \\
\arrayrulecolor{black!20}\midrule
\arrayrulecolor{black}
Joint-AFA & .11$_{\pm.10}$ & \underline{12059} (.21) & .11$_{\pm.08}$ & \underline{11099} (.21) & .11$_{\pm.08}$ & \underline{10146} (.21) & .10$_{\pm.07}$ & \underline{9070} (.21) & .10$_{\pm.07}$ & \underline{8208} (.20) \\
\arrayrulecolor{black!20}\midrule
\arrayrulecolor{black}
\backbone{} (w/o \policy{}) & .94$_{\pm.05}$ & 18098 (.48) & .89 & 16288 (.44) & .83$_{\pm.06}$ & 14480 (.39) & .76 & 12677 (.34) & .69$_{\pm.07}$ & 10896 (.30) \\
\rowcolor{tolblue!10}
\textbf{\method{} (Ours)} & \underline{.95}$_{\pm.04}$ & \textcolor{burgundy}{\textbf{738}} (\textcolor{burgundy}{\textbf{.07}}) & \textcolor{burgundy}{\textbf{.91}}$_{\pm.06}$ & \textcolor{burgundy}{\textbf{2592}} (\textcolor{burgundy}{\textbf{.08}}) & \textcolor{burgundy}{\textbf{.88}}$_{\pm.06}$ & \textcolor{burgundy}{\textbf{4188}} (\textcolor{burgundy}{\textbf{.09}}) & \textcolor{burgundy}{\textbf{.83}}$_{\pm.06}$ & \textcolor{burgundy}{\textbf{4659}} (\textcolor{burgundy}{\textbf{.09}}) & \textcolor{burgundy}{\textbf{.77}}$_{\pm.09}$ & \textcolor{burgundy}{\textbf{5189}} (\textcolor{burgundy}{\textbf{.09}}) \\
\bottomrule
\end{tabular}%
}
\end{table*}

\begin{table*}[t]
\centering
\scriptsize
\setlength{\tabcolsep}{3.0pt}
\renewcommand{\arraystretch}{1.04}
\caption{Macro-F1 ($\uparrow$; mean$_{\pm\mathrm{std}}$ where available) and end-to-end GFLOPs ($\downarrow$; fusion cost $\mathrm{GF}_{\mathrm{f}}$ in parentheses) on CMI under 0--40\% test-time modality missingness. Per column, \textcolor{burgundy}{\textbf{bold}} and \underline{underlining} mark the best and second-best distinct displayed values (F1 highest; GF and $\mathrm{GF}_{\mathrm{f}}$ lowest); ties share markings; the backbone reference row is not ranked. The 0\% primary-method entries use the recomputed three-fold summaries.}
\label{tab:missingness-cmi}
\resizebox{\textwidth}{!}{%
\begin{tabular}{@{}l*{4}{cc!{\color{black!25}\vrule width 0.35pt}}cc@{}}
\toprule
\multirow{2}{*}{\textbf{Method}} & \multicolumn{2}{c}{\textbf{0\%}} & \multicolumn{2}{c}{\textbf{10\%}} & \multicolumn{2}{c}{\textbf{20\%}} & \multicolumn{2}{c}{\textbf{30\%}} & \multicolumn{2}{c}{\textbf{40\%}} \\
\cmidrule(lr){2-3}\cmidrule(lr){4-5}\cmidrule(lr){6-7}\cmidrule(lr){8-9}\cmidrule(lr){10-11}
& F1$\uparrow$ & $\mathrm{GF}\,(\mathrm{GF}_{\mathrm{f}})\downarrow$ & F1$\uparrow$ & $\mathrm{GF}\,(\mathrm{GF}_{\mathrm{f}})\downarrow$ & F1$\uparrow$ & $\mathrm{GF}\,(\mathrm{GF}_{\mathrm{f}})\downarrow$ & F1$\uparrow$ & $\mathrm{GF}\,(\mathrm{GF}_{\mathrm{f}})\downarrow$ & F1$\uparrow$ & $\mathrm{GF}\,(\mathrm{GF}_{\mathrm{f}})\downarrow$ \\
\midrule
MBT & \underline{.55}$_{\pm.01}$ & 0.55 (.55) & \underline{.53} & 0.55 (.55) & \textcolor{burgundy}{\textbf{.52}}$_{\pm.01}$ & 0.55 (.55) & .47 & 0.55 (.55) & \textcolor{burgundy}{\textbf{.46}}$_{\pm.02}$ & 0.55 (.55) \\
MAESTRO & .47$_{\pm.02}$ & 0.28 (.28) & .42 & 0.28 (.28) & .37$_{\pm.01}$ & 0.28 (.28) & .32 & 0.28 (.28) & .27$_{\pm.02}$ & 0.28 (.28) \\
ShaSpec & .53$_{\pm.01}$ & 2.14 (2.14) & .52 & 2.14 (2.14) & .50$_{\pm.02}$ & 2.14 (2.14) & \underline{.48} & 2.14 (2.14) & \underline{.45}$_{\pm.01}$ & 2.14 (2.14) \\
DecAlign & \textcolor{burgundy}{\textbf{.56}}$_{\pm.02}$ & 23.33 (23.33) & \textcolor{burgundy}{\textbf{.54}} & 23.33 (23.33) & \textcolor{burgundy}{\textbf{.52}}$_{\pm.01}$ & 23.33 (23.33) & \textcolor{burgundy}{\textbf{.49}} & 23.33 (23.33) & \underline{.45}$_{\pm.01}$ & 23.33 (23.33) \\
\arrayrulecolor{black!20}\midrule
\arrayrulecolor{black}
MultiModN & .45$_{\pm.01}$ & 1.07 (1.07) & .44 & 1.07 (1.07) & .43$_{\pm.01}$ & 1.07 (1.07) & .41 & 1.07 (1.07) & .40$_{\pm.02}$ & 1.07 (1.07) \\
\arrayrulecolor{black!20}\midrule
\arrayrulecolor{black}
GMS & .48$_{\pm.01}$ & \underline{0.11} (\underline{.11}) & .42$_{\pm.01}$ & \underline{0.10} (\underline{.10}) & .37$_{\pm.01}$ & \underline{0.09} (\underline{.09}) & .33$_{\pm.01}$ & \underline{0.07} (\underline{.07}) & .29$_{\pm.02}$ & \textcolor{burgundy}{\textbf{0.06}} (\textcolor{burgundy}{\textbf{.06}}) \\
AdaMML & .52$_{\pm.01}$ & 0.51 (.51) & .49 & 0.45 (.45) & .47$_{\pm.01}$ & 0.48 (.48) & .41 & 0.43 (.43) & .41$_{\pm.02}$ & 0.44 (.44) \\
DynMM & .54$_{\pm.01}$ & 3.13 (3.13) & .51 & 3.12 (3.12) & .49$_{\pm.00}$ & 3.10 (3.10) & .46 & 3.07 (3.07) & .43$_{\pm.01}$ & 3.02 (3.02) \\
DyMo & .37$_{\pm.01}$ & 0.65 (.65) & .38 & 0.65 (.65) & .38$_{\pm.01}$ & 0.65 (.65) & .37 & 0.65 (.65) & .37$_{\pm.01}$ & 0.65 (.65) \\
MMEE & .48$_{\pm.01}$ & 0.70 (.70) & .46 & 0.70 (.70) & .44$_{\pm.01}$ & 0.70 (.70) & .42 & 0.71 (.71) & .39$_{\pm.02}$ & 0.71 (.71) \\
\arrayrulecolor{black!20}\midrule
\arrayrulecolor{black}
Joint-AFA & .38$_{\pm.03}$ & \textcolor{burgundy}{\textbf{0.07}} (\textcolor{burgundy}{\textbf{.07}}) & .37$_{\pm.03}$ & \textcolor{burgundy}{\textbf{0.07}} (\textcolor{burgundy}{\textbf{.07}}) & .37$_{\pm.02}$ & \textcolor{burgundy}{\textbf{0.06}} (\textcolor{burgundy}{\textbf{.06}}) & .36$_{\pm.02}$ & \textcolor{burgundy}{\textbf{0.06}} (\textcolor{burgundy}{\textbf{.06}}) & .35$_{\pm.02}$ & \textcolor{burgundy}{\textbf{0.06}} (\textcolor{burgundy}{\textbf{.06}}) \\
\arrayrulecolor{black!20}\midrule
\arrayrulecolor{black}
\backbone{} (w/o \policy{}) & .57$_{\pm.01}$ & 0.58 (.58) & .55 & 0.52 (.52) & .54$_{\pm.01}$ & 0.46 (.46) & .51 & 0.40 (.40) & .49$_{\pm.01}$ & 0.35 (.35) \\
\rowcolor{tolblue!10}
\textbf{\method{} (Ours)} & \textcolor{burgundy}{\textbf{.56}}$_{\pm.01}$ & 0.43 (.43) & \underline{.53}$_{\pm.01}$ & 0.37 (.37) & \underline{.51}$_{\pm.02}$ & 0.31 (.31) & \textcolor{burgundy}{\textbf{.49}}$_{\pm.03}$ & 0.26 (.26) & \textcolor{burgundy}{\textbf{.46}}$_{\pm.02}$ & \underline{0.22} (\underline{.22}) \\
\bottomrule
\end{tabular}%
}
\end{table*}

\begin{table*}[t]
\centering
\scriptsize
\setlength{\tabcolsep}{3.0pt}
\renewcommand{\arraystretch}{1.04}
\caption{Macro-F1 ($\uparrow$; mean$_{\pm\mathrm{std}}$ where available) and end-to-end GFLOPs ($\downarrow$; fusion cost $\mathrm{GF}_{\mathrm{f}}$ in parentheses) on CZU-MHAD under 0--40\% test-time modality missingness. Per column, \textcolor{burgundy}{\textbf{bold}} and \underline{underlining} mark the best and second-best distinct displayed values (F1 highest; GF and $\mathrm{GF}_{\mathrm{f}}$ lowest); ties share markings; the backbone reference row is not ranked. The 0\% primary-method entries use the recomputed three-fold summaries.}
\label{tab:missingness-czumhad}
\resizebox{\textwidth}{!}{%
\begin{tabular}{@{}l*{4}{cc!{\color{black!25}\vrule width 0.35pt}}cc@{}}
\toprule
\multirow{2}{*}{\textbf{Method}} & \multicolumn{2}{c}{\textbf{0\%}} & \multicolumn{2}{c}{\textbf{10\%}} & \multicolumn{2}{c}{\textbf{20\%}} & \multicolumn{2}{c}{\textbf{30\%}} & \multicolumn{2}{c}{\textbf{40\%}} \\
\cmidrule(lr){2-3}\cmidrule(lr){4-5}\cmidrule(lr){6-7}\cmidrule(lr){8-9}\cmidrule(lr){10-11}
& F1$\uparrow$ & $\mathrm{GF}\,(\mathrm{GF}_{\mathrm{f}})\downarrow$ & F1$\uparrow$ & $\mathrm{GF}\,(\mathrm{GF}_{\mathrm{f}})\downarrow$ & F1$\uparrow$ & $\mathrm{GF}\,(\mathrm{GF}_{\mathrm{f}})\downarrow$ & F1$\uparrow$ & $\mathrm{GF}\,(\mathrm{GF}_{\mathrm{f}})\downarrow$ & F1$\uparrow$ & $\mathrm{GF}\,(\mathrm{GF}_{\mathrm{f}})\downarrow$ \\
\midrule
MBT & .82$_{\pm.11}$ & \textcolor{burgundy}{\textbf{6091}} (.56) & .75 & \textcolor{burgundy}{\textbf{5482}} (.56) & .69$_{\pm.10}$ & \textcolor{burgundy}{\textbf{4873}} (.56) & .65 & \textcolor{burgundy}{\textbf{4264}} (.56) & \underline{.57}$_{\pm.10}$ & \textcolor{burgundy}{\textbf{3656}} (.56) \\
MAESTRO & .25$_{\pm.16}$ & \textcolor{burgundy}{\textbf{6091}} (.30) & .22 & \textcolor{burgundy}{\textbf{5482}} (.30) & .20$_{\pm.11}$ & \textcolor{burgundy}{\textbf{4873}} (.30) & .17 & \textcolor{burgundy}{\textbf{4264}} (.30) & .14$_{\pm.07}$ & \textcolor{burgundy}{\textbf{3656}} (.30) \\
ShaSpec & .59$_{\pm.06}$ & 6093 (2.17) & .54 & 5484 (2.17) & .49$_{\pm.04}$ & \underline{4874} (2.17) & .47 & 4266 (2.17) & .42$_{\pm.05}$ & 3658 (2.17) \\
DecAlign & .36$_{\pm.10}$ & 6114 (23.37) & .34 & 5505 (23.37) & .29$_{\pm.07}$ & 4896 (23.37) & .30 & 4287 (23.37) & .24$_{\pm.05}$ & 3679 (23.37) \\
\arrayrulecolor{black!20}\midrule
\arrayrulecolor{black}
MultiModN & .47$_{\pm.09}$ & \underline{6092} (1.08) & .44 & \textcolor{burgundy}{\textbf{5482}} (1.08) & .41$_{\pm.09}$ & \underline{4874} (1.08) & .38 & \underline{4265} (1.08) & .34$_{\pm.05}$ & \underline{3657} (1.08) \\
\arrayrulecolor{black!20}\midrule
\arrayrulecolor{black}
GMS & .44$_{\pm.10}$ & \textcolor{burgundy}{\textbf{6091}} (\textcolor{burgundy}{\textbf{.13}}) & .41$_{\pm.07}$ & 5487 (\underline{.12}) & .38$_{\pm.07}$ & 4887 (\textcolor{burgundy}{\textbf{.11}}) & .35$_{\pm.07}$ & 4320 (\underline{.10}) & .32$_{\pm.06}$ & 3749 (\textcolor{burgundy}{\textbf{.08}}) \\
AdaMML & \underline{.83}$_{\pm.09}$ & \textcolor{burgundy}{\textbf{6091}} (\underline{.17}) & \underline{.78} & \textcolor{burgundy}{\textbf{5482}} (\textcolor{burgundy}{\textbf{.09}}) & \underline{.72}$_{\pm.08}$ & \textcolor{burgundy}{\textbf{4873}} (.18) & \textcolor{burgundy}{\textbf{.69}} & \textcolor{burgundy}{\textbf{4264}} (\textcolor{burgundy}{\textbf{.09}}) & .56$_{\pm.07}$ & \textcolor{burgundy}{\textbf{3656}} (.19) \\
DynMM & .75$_{\pm.08}$ & \underline{6092} (1.54) & .67 & \underline{5483} (1.58) & .62$_{\pm.10}$ & \underline{4874} (1.61) & .57 & \underline{4265} (1.70) & .48$_{\pm.05}$ & \underline{3657} (1.75) \\
DyMo & .65$_{\pm.12}$ & \textcolor{burgundy}{\textbf{6091}} (.68) & .61 & \textcolor{burgundy}{\textbf{5482}} (.68) & .56$_{\pm.12}$ & \textcolor{burgundy}{\textbf{4873}} (.68) & .52 & \textcolor{burgundy}{\textbf{4264}} (.68) & .46$_{\pm.10}$ & \textcolor{burgundy}{\textbf{3656}} (.68) \\
MMEE & .57$_{\pm.09}$ & \textcolor{burgundy}{\textbf{6091}} (.73) & .52 & \textcolor{burgundy}{\textbf{5482}} (.73) & .52$_{\pm.10}$ & \textcolor{burgundy}{\textbf{4873}} (.73) & .47 & \textcolor{burgundy}{\textbf{4264}} (.73) & .42$_{\pm.08}$ & \textcolor{burgundy}{\textbf{3656}} (.73) \\
\arrayrulecolor{black!20}\midrule
\arrayrulecolor{black}
Joint-AFA & .28$_{\pm.05}$ & \textcolor{burgundy}{\textbf{6091}} (\textcolor{burgundy}{\textbf{.13}}) & .27$_{\pm.04}$ & 5487 (\underline{.12}) & .26$_{\pm.04}$ & 4887 (\underline{.12}) & .25$_{\pm.04}$ & 4320 (.11) & .23$_{\pm.04}$ & 3750 (\underline{.10}) \\
\arrayrulecolor{black!20}\midrule
\arrayrulecolor{black}
\backbone{} (w/o \policy{}) & .92$_{\pm.07}$ & 6091 (.59) & .84 & 5482 (.53) & .76$_{\pm.06}$ & 4873 (.47) & .71 & 4264 (.41) & .61$_{\pm.03}$ & 3656 (.36) \\
\rowcolor{tolblue!10}
\textbf{\method{} (Ours)} & \textcolor{burgundy}{\textbf{.91}}$_{\pm.10}$ & \textcolor{burgundy}{\textbf{6091}} (.23) & \textcolor{burgundy}{\textbf{.83}}$_{\pm.08}$ & 5487 (.22) & \textcolor{burgundy}{\textbf{.75}}$_{\pm.07}$ & 4888 (.20) & \underline{.68}$_{\pm.07}$ & 4321 (.19) & \textcolor{burgundy}{\textbf{.61}}$_{\pm.05}$ & 3750 (.17) \\
\bottomrule
\end{tabular}%
}
\end{table*}

\begin{table*}[t]
\centering
\scriptsize
\setlength{\tabcolsep}{3.0pt}
\renewcommand{\arraystretch}{1.04}
\caption{Macro-F1 ($\uparrow$; mean$_{\pm\mathrm{std}}$ where available) and end-to-end GFLOPs ($\downarrow$; fusion cost $\mathrm{GF}_{\mathrm{f}}$ in parentheses) on EAV under 0--40\% test-time modality missingness. Per column, \textcolor{burgundy}{\textbf{bold}} and \underline{underlining} mark the best and second-best distinct displayed values (F1 highest; GF and $\mathrm{GF}_{\mathrm{f}}$ lowest); ties share markings; the backbone reference row is not ranked. The 0\% primary-method entries use the recomputed three-fold summaries.}
\label{tab:missingness-eav}
\resizebox{\textwidth}{!}{%
\begin{tabular}{@{}l*{4}{cc!{\color{black!25}\vrule width 0.35pt}}cc@{}}
\toprule
\multirow{2}{*}{\textbf{Method}} & \multicolumn{2}{c}{\textbf{0\%}} & \multicolumn{2}{c}{\textbf{10\%}} & \multicolumn{2}{c}{\textbf{20\%}} & \multicolumn{2}{c}{\textbf{30\%}} & \multicolumn{2}{c}{\textbf{40\%}} \\
\cmidrule(lr){2-3}\cmidrule(lr){4-5}\cmidrule(lr){6-7}\cmidrule(lr){8-9}\cmidrule(lr){10-11}
& F1$\uparrow$ & $\mathrm{GF}\,(\mathrm{GF}_{\mathrm{f}})\downarrow$ & F1$\uparrow$ & $\mathrm{GF}\,(\mathrm{GF}_{\mathrm{f}})\downarrow$ & F1$\uparrow$ & $\mathrm{GF}\,(\mathrm{GF}_{\mathrm{f}})\downarrow$ & F1$\uparrow$ & $\mathrm{GF}\,(\mathrm{GF}_{\mathrm{f}})\downarrow$ & F1$\uparrow$ & $\mathrm{GF}\,(\mathrm{GF}_{\mathrm{f}})\downarrow$ \\
\midrule
MBT & .61$_{\pm.03}$ & 6230 (.26) & .58 & 5609 (.26) & .55$_{\pm.02}$ & 5001 (.26) & .53 & 4417 (.26) & .50$_{\pm.01}$ & 3871 (.26) \\
MAESTRO & .53$_{\pm.02}$ & 6230 (.15) & .48 & 5609 (.15) & .45$_{\pm.01}$ & 5001 (.15) & .41 & 4417 (.15) & .36$_{\pm.01}$ & 3871 (.15) \\
ShaSpec & .60$_{\pm.05}$ & 6231 (.97) & .58 & 5610 (.97) & .56$_{\pm.04}$ & 5001 (.97) & .52 & 4418 (.97) & .50$_{\pm.02}$ & 3872 (.97) \\
DecAlign & .62$_{\pm.02}$ & 6232 (1.77) & .58 & 5611 (1.77) & .55$_{\pm.01}$ & 5002 (1.77) & .51 & 4419 (1.77) & .48$_{\pm.01}$ & 3872 (1.77) \\
\arrayrulecolor{black!20}\midrule
\arrayrulecolor{black}
MultiModN & .65$_{\pm.04}$ & 6230 (.49) & \underline{.61} & 5609 (.49) & \underline{.59}$_{\pm.04}$ & 5001 (.49) & \underline{.56} & 4417 (.49) & \underline{.54}$_{\pm.02}$ & 3871 (.49) \\
\arrayrulecolor{black!20}\midrule
\arrayrulecolor{black}
GMS & \underline{.66}$_{\pm.04}$ & \textcolor{burgundy}{\textbf{139}} (\textcolor{burgundy}{\textbf{.09}}) & \underline{.61}$_{\pm.03}$ & \textcolor{burgundy}{\textbf{126}} (\textcolor{burgundy}{\textbf{.08}}) & .57$_{\pm.02}$ & \textcolor{burgundy}{\textbf{114}} (\textcolor{burgundy}{\textbf{.07}}) & .52$_{\pm.03}$ & \textcolor{burgundy}{\textbf{101}} (\textcolor{burgundy}{\textbf{.06}}) & .48$_{\pm.02}$ & \textcolor{burgundy}{\textbf{88.70}} (\textcolor{burgundy}{\textbf{.06}}) \\
AdaMML & .64$_{\pm.07}$ & 6230 (.17) & .60 & 5609 (.18) & .58$_{\pm.06}$ & 5001 (.17) & .52 & 4417 (.18) & .51$_{\pm.04}$ & 3871 (.17) \\
DynMM & \underline{.66}$_{\pm.01}$ & 6230 (.54) & \underline{.61} & 5609 (.54) & .58$_{\pm.01}$ & 5001 (.54) & .53 & 4417 (.54) & .49$_{\pm.01}$ & 3871 (.55) \\
DyMo & .60$_{\pm.04}$ & 6230 (.28) & .58 & 5609 (.28) & .56$_{\pm.04}$ & 5001 (.28) & .53 & 4417 (.28) & .50$_{\pm.03}$ & 3871 (.28) \\
MMEE & .60$_{\pm.07}$ & 6230 (.18) & .56 & 5609 (.18) & .52$_{\pm.06}$ & 5001 (.18) & .48 & 4417 (.18) & .44$_{\pm.06}$ & 3871 (.18) \\
\arrayrulecolor{black!20}\midrule
\arrayrulecolor{black}
Joint-AFA & .54$_{\pm.03}$ & 3321 (.16) & .53$_{\pm.02}$ & 3235 (.15) & .50$_{\pm.02}$ & 3162 (.14) & .47$_{\pm.02}$ & 3022 (.13) & .46$_{\pm.01}$ & 2873 (.12) \\
\arrayrulecolor{black!20}\midrule
\arrayrulecolor{black}
\backbone{} (w/o \policy{}) & .67$_{\pm.03}$ & 6230 (.27) & .64 & 5609 (.25) & .61$_{\pm.03}$ & 5001 (.22) & .58 & 4417 (.20) & .56$_{\pm.02}$ & 3871 (.17) \\
\rowcolor{tolblue!10}
\textbf{\method{} (Ours)} & \textcolor{burgundy}{\textbf{.69}}$_{\pm.02}$ & \underline{719} (\underline{.10}) & \textcolor{burgundy}{\textbf{.65}}$_{\pm.03}$ & \underline{1075} (\underline{.10}) & \textcolor{burgundy}{\textbf{.62}}$_{\pm.02}$ & \underline{1311} (\underline{.10}) & \textcolor{burgundy}{\textbf{.59}}$_{\pm.03}$ & \underline{1557} (\underline{.10}) & \textcolor{burgundy}{\textbf{.56}}$_{\pm.01}$ & \underline{1736} (\underline{.10}) \\
\bottomrule
\end{tabular}%
}
\end{table*}

\begin{table*}[t]
\centering
\scriptsize
\setlength{\tabcolsep}{3.0pt}
\renewcommand{\arraystretch}{1.04}
\caption{Macro-F1 ($\uparrow$; mean$_{\pm\mathrm{std}}$ where available) and end-to-end GFLOPs ($\downarrow$; fusion cost $\mathrm{GF}_{\mathrm{f}}$ in parentheses) on UTD-MHAD under 0--40\% test-time modality missingness. Per column, \textcolor{burgundy}{\textbf{bold}} and \underline{underlining} mark the best and second-best distinct displayed values (F1 highest; GF and $\mathrm{GF}_{\mathrm{f}}$ lowest); ties share markings; the backbone reference row is not ranked. The 0\% primary-method entries use the recomputed three-fold summaries.}
\label{tab:missingness-utdmhad}
\resizebox{\textwidth}{!}{%
\begin{tabular}{@{}l*{4}{cc!{\color{black!25}\vrule width 0.35pt}}cc@{}}
\toprule
\multirow{2}{*}{\textbf{Method}} & \multicolumn{2}{c}{\textbf{0\%}} & \multicolumn{2}{c}{\textbf{10\%}} & \multicolumn{2}{c}{\textbf{20\%}} & \multicolumn{2}{c}{\textbf{30\%}} & \multicolumn{2}{c}{\textbf{40\%}} \\
\cmidrule(lr){2-3}\cmidrule(lr){4-5}\cmidrule(lr){6-7}\cmidrule(lr){8-9}\cmidrule(lr){10-11}
& F1$\uparrow$ & $\mathrm{GF}\,(\mathrm{GF}_{\mathrm{f}})\downarrow$ & F1$\uparrow$ & $\mathrm{GF}\,(\mathrm{GF}_{\mathrm{f}})\downarrow$ & F1$\uparrow$ & $\mathrm{GF}\,(\mathrm{GF}_{\mathrm{f}})\downarrow$ & F1$\uparrow$ & $\mathrm{GF}\,(\mathrm{GF}_{\mathrm{f}})\downarrow$ & F1$\uparrow$ & $\mathrm{GF}\,(\mathrm{GF}_{\mathrm{f}})\downarrow$ \\
\midrule
MBT & \underline{.81}$_{\pm.06}$ & 7306 (.35) & \underline{.79}$_{\pm.06}$ & 6576 (.34) & \underline{.73}$_{\pm.05}$ & 5848 (.34) & \underline{.67}$_{\pm.06}$ & 5129 (.34) & \underline{.63}$_{\pm.06}$ & 4431 (.34) \\
MAESTRO & .70$_{\pm.02}$ & 7306 (.19) & .65$_{\pm.01}$ & 6576 (.19) & .57$_{\pm.02}$ & 5848 (.19) & .54$_{\pm.02}$ & 5129 (.19) & .49$_{\pm.01}$ & 4430 (.19) \\
ShaSpec & .71$_{\pm.13}$ & 7307 (1.29) & .66$_{\pm.11}$ & 6577 (1.29) & .62$_{\pm.08}$ & 5849 (1.29) & .54$_{\pm.08}$ & 5130 (1.29) & .47$_{\pm.06}$ & 4432 (1.29) \\
DecAlign & .75$_{\pm.08}$ & 7310 (4.21) & .71$_{\pm.09}$ & 6580 (4.21) & .62$_{\pm.08}$ & 5852 (4.21) & .59$_{\pm.10}$ & 5133 (4.21) & .56$_{\pm.06}$ & 4434 (4.21) \\
\arrayrulecolor{black!20}\midrule
\arrayrulecolor{black}
MultiModN & .66$_{\pm.10}$ & 7306 (.64) & .62$_{\pm.08}$ & 6576 (.64) & .55$_{\pm.06}$ & 5848 (.64) & .46$_{\pm.08}$ & 5130 (.64) & .41$_{\pm.07}$ & 4431 (.64) \\
\arrayrulecolor{black!20}\midrule
\arrayrulecolor{black}
GMS & .61$_{\pm.10}$ & 5185 (\underline{.17}) & .51$_{\pm.08}$ & 4706 (\underline{.15}) & .42$_{\pm.06}$ & \underline{4183} (\underline{.14}) & .34$_{\pm.04}$ & \underline{3695} (\underline{.12}) & .29$_{\pm.02}$ & \underline{3259} (\underline{.11}) \\
AdaMML & .75$_{\pm.06}$ & 7306 (.19) & .73$_{\pm.05}$ & 6576 (.19) & .69$_{\pm.06}$ & 5848 (.19) & \underline{.67}$_{\pm.05}$ & 5129 (.19) & .62$_{\pm.03}$ & 4430 (.20) \\
DynMM & .78$_{\pm.05}$ & 7308 (1.91) & .73$_{\pm.08}$ & 6577 (1.91) & .66$_{\pm.05}$ & 5850 (1.91) & .63$_{\pm.06}$ & 5131 (1.91) & .56$_{\pm.06}$ & 4432 (1.91) \\
DyMo & .60$_{\pm.06}$ & 7306 (.38) & .59$_{\pm.06}$ & 6576 (.38) & .57$_{\pm.05}$ & 5848 (.38) & .55$_{\pm.04}$ & 5129 (.38) & .52$_{\pm.04}$ & 4431 (.38) \\
MMEE & .65$_{\pm.04}$ & 7306 (.44) & .63$_{\pm.06}$ & 6576 (.44) & .58$_{\pm.03}$ & 5848 (.44) & .55$_{\pm.02}$ & 5129 (.44) & .50$_{\pm.02}$ & 4431 (.44) \\
\arrayrulecolor{black!20}\midrule
\arrayrulecolor{black}
Joint-AFA & .30$_{\pm.04}$ & \textcolor{burgundy}{\textbf{755}} (\textcolor{burgundy}{\textbf{.04}}) & .28$_{\pm.03}$ & \textcolor{burgundy}{\textbf{1272}} (\textcolor{burgundy}{\textbf{.06}}) & .27$_{\pm.03}$ & \textcolor{burgundy}{\textbf{1587}} (\textcolor{burgundy}{\textbf{.07}}) & .26$_{\pm.04}$ & \textcolor{burgundy}{\textbf{1860}} (\textcolor{burgundy}{\textbf{.08}}) & .24$_{\pm.03}$ & \textcolor{burgundy}{\textbf{2171}} (\textcolor{burgundy}{\textbf{.09}}) \\
\arrayrulecolor{black!20}\midrule
\arrayrulecolor{black}
\backbone{} (w/o \policy{}) & .86$_{\pm.08}$ & 7306 (.36) & .83$_{\pm.07}$ & 6576 (.32) & .77$_{\pm.10}$ & 5848 (.29) & .73$_{\pm.10}$ & 5129 (.25) & .70$_{\pm.09}$ & 4430 (.22) \\
\rowcolor{tolblue!10}
\textbf{\method{} (Ours)} & \textcolor{burgundy}{\textbf{.83}}$_{\pm.10}$ & \underline{4706} (.19) & \textcolor{burgundy}{\textbf{.81}}$_{\pm.09}$ & \underline{4543} (.18) & \textcolor{burgundy}{\textbf{.75}}$_{\pm.09}$ & 4223 (.16) & \textcolor{burgundy}{\textbf{.69}}$_{\pm.09}$ & 3924 (.15) & \textcolor{burgundy}{\textbf{.67}}$_{\pm.08}$ & 3553 (.13) \\
\bottomrule
\end{tabular}%
}
\end{table*}

\section{Complete Component Replacement Study Results for Table~\ref{tab:component-swap}} \label{app:comp-replace-complete}

Table~\ref{tab:comp-replace-complete} reports per-dataset macro-F1,
modality utilization (MU), and latency for the configurations in
Table~\ref{tab:component-swap}, whose means weight all six datasets equally.

\paragraph{Latency.}
Component-replacement latency is measured end-to-end on one NVIDIA L40 GPU, on an otherwise idle server, with batch size~1 and no feature caching. For each sample $i$, the controller determines the acquired set $S_i$ and thus the raw-data extraction path; different controllers can request different raw modalities for the same sample. The result counts extraction only for those acquired modalities, together with the executed fusion steps, readouts, and selector calls. We average sample results within each dataset and weight the six dataset means equally. 

\paragraph{Replacing the controller.}
\emph{\backbone{} + GMS} provides a static-selection comparison using the same fusion backbone. At the reported operating points, \method{} achieves both higher mean macro-F1 and lower mean latency across the six datasets (Table~\ref{tab:component-swap}).
This comparison supports the benefit of our controller beyond the choice of backbone, although the advantage varies by dataset and acquisition budget. EAMA and Joint-AFA replace \policy{} with pre-backbone selectors. Both select modalities from raw previews of the inputs acquired so far and never read an intermediate \backbone{} prediction. Joint-AFA acquires one modality per decision, whereas EAMA acquires a nonempty subset. After acquisition terminates, the frozen \backbone{} processes the final acquired set once. Their action spaces and stopping rules are detailed in Appendix~\ref{app:baseline_adaptations}. 

\paragraph{Replacing the backbone.}
In MultiModN + \policy{}, the sequential fusion backbone \backbone{} is replaced by MultiModN with \policy{} stays the decider for modality acquisition and stop action.

\begin{table*}[t]
\centering
\scriptsize
\setlength{\tabcolsep}{2.6pt}
\renewcommand{\arraystretch}{1.04}
\caption{Complete component-replacement results: macro-F1 ($\uparrow$),
modality utilization MU ($\downarrow$) and GPU latency $t$ in ms
($\downarrow$) per dataset. Table~\ref{tab:component-swap} reports the row
means. EAMA and Joint-AFA are the pre-backbone adaptations of Appendix~\ref{app:baseline_adaptations}; EAMA uses uniform costs and the hybrid optimizer. MultiModN F1 and MU are recomputed from the main-table three-fold source. F1 can be compared across backbones on the same task and test set, but only shared-backbone comparisons isolate controller changes.}
\label{tab:comp-replace-complete}
\resizebox{\textwidth}{!}{%
\begin{tabular}{@{}l*{5}{ccc!{\color{black!25}\vrule width 0.35pt}}ccc@{}}
\toprule
\multirow{2}{*}{\textbf{Configuration}}
& \multicolumn{3}{c}{\textbf{IEMOCAP}} & \multicolumn{3}{c}{\textbf{MM-Fi}}
& \multicolumn{3}{c}{\textbf{CZU-MHAD}} & \multicolumn{3}{c}{\textbf{EAV}}
& \multicolumn{3}{c}{\textbf{CMI}} & \multicolumn{3}{c}{\textbf{UTD-MHAD}} \\
\cmidrule(lr){2-4}\cmidrule(lr){5-7}\cmidrule(lr){8-10}\cmidrule(lr){11-13}\cmidrule(lr){14-16}\cmidrule(lr){17-19}
& F1 & MU & Latency & F1 & MU & Latency & F1 & MU & Latency & F1 & MU & Latency & F1 & MU & Latency & F1 & MU & Latency \\
\midrule
\rowcolor{tolblue!10}
\textbf{\method{}}
& .669 & .65 & 745 & .950 & .21 & 187 & .914 & .37 & 1301 & .689 & .37 & 217 & .556 & .75 & 12.6 & .827 & .52 & 659 \\
\arrayrulecolor{black!20}\midrule\arrayrulecolor{black}
MultiModN + \policy{}
& .535 & .33 & 199 & .377 & .20 & 4004 & .485 & .16 & 1294 & .647 & .39 & 319 & .386 & .28 & 4.4 & .734 & .40 & 653 \\
MultiModN
& .580 & 1.00 & 1530 & .397 & 1.00 & 4234 & .467 & 1.00 & 1324 & .647 & 1.00 & 2032 & .447 & 1.00 & 15.5 & .660 & 1.00 & 1036 \\
\arrayrulecolor{black!20}\midrule\arrayrulecolor{black}
\backbone{} + EAMA
& .650 & .53 & 338 & .945 & .22 & 282 & .899 & .23 & 1296 & .687 & .41 & 216 & .523 & .45 & 6.8 & .757 & .49 & 642 \\
\backbone{} + Joint-AFA
& .621 & .34 & 332 & .945 & .49 & 2485 & .911 & .61 & 1311 & .683 & .51 & 603 & .517 & .62 & 10.0 & .817 & .72 & 780 \\
\backbone{} + GMS
& .676 & .83 & 95.9 & .759 & .20 & 2677 & .917 & .57 & 1306 & .688 & .33 & 26.7 & .567 & .71 & 10.9 & .793 & .50 & 516 \\
\bottomrule
\end{tabular}%
}
\end{table*}

\section{Complete Ablation Results for Figure~\ref{fig:ablation}} \label{app:ablate-complete}

Table~\ref{tab:iemocap-ablation} gives the complete IEMOCAP ablation behind
Figure~\ref{fig:ablation}. Every row changes exactly one component of
\method{} and is evaluated on the same three test folds under full modality
availability. \emph{\backbone{} ablations} retrain the backbone with the listed
change and then fit \policy{} on the retrained backbone with the deployed
per-fold seed and training price $\lambda$, the prior priced at the same $\lambda$; the
column ``\backbone{} F1'' reports the retrained backbone alone, consuming all
six modalities and averaged over the same 24 modality orders for every
checkpoint, so that an order-fragile variant is not judged on a single
favourable or unfavourable order. \emph{\policy{} ablations} keep the deployed
backbone and change only the controller, again at the deployed per-fold seed
and $\lambda$. The relative change is taken with respect to the
reference macro-F1 of $.669$, as in Figure~\ref{fig:ablation}.

The variants are defined as follows. \emph{Fixed modality order}: \backbone{}
is trained on one fixed modality order instead of a freshly sampled order per
batch. \emph{No dropout curriculum}: every training sample keeps all
modalities. \emph{Bottleneck tokens last}: the fusion blocks append the
bottleneck after the modality tokens instead of prepending it; this row
averages three training seeds, since the two placements are the same function
at initialization and differ only through optimization. \emph{Learnable
aggregation}: the running-mean aggregation is replaced
by a learned element-wise gate. $\mathcal L_{\mathrm{CE}}$: prefix supervision
and prefix distillation are removed, leaving only the cross-entropy of the full
retained prefix. \emph{Shapley + conf.\ exit}: no learned controller;
modalities are acquired in the population Shapley order and acquisition stops
once $\max_c p_{s,c}\ge0.8$. \emph{Zero value init.}: the prior is removed
($Q_0\equiv0$) and the network learns the full action value by fitted-$Q$
iteration. \emph{No STOP}: the \textsc{stop} action is removed, so every
available modality is acquired and \policy{} only chooses the order.
\emph{Conf.\ STOP}: the learned \textsc{stop} is replaced by the confidence exit
above. The two \emph{state} rows feed the residual network only the masks
$\mathbf1_{S_s}\Vert\mathbf1_{\mathcal A}$, or only the posterior summary
$p_s\Vert H(p_s)\Vert m_s$, instead of the full state of
Equation~\ref{eq:rlstate}.

Three observations complete the ablation discussion of Section~\ref{sec:experiments}. First,
learning the controller matters more than any single part of it: the static
Shapley order with a confidence exit, which uses the same marginal-gain
information without learning, loses $3.68\%$ at a low cost (MU $.33$), and
replacing only the learned \textsc{stop} by the confidence exit loses $3.44\%$.
Without the prior, fitted-$Q$ iteration still learns a usable controller but
under-acquires (MU $.56$ versus $.65$) and loses $1.47\%$, so the prior mainly
sets how much evidence is bought before stopping. Second, both parts of the
state matter: masks alone acquire more (MU $.70$) and still lose $3.44\%$,
while the posterior summary alone stays within $0.49\%$ at a higher cost
(MU $.69$), so the posterior tells the controller when to stop and the masks
mainly refine what to acquire. Third, only two changes do not reduce
macro-F1. The learnable aggregation gate changes it by $+0.14\%$, well within
its fold standard deviation of $.040$. Removing \textsc{stop} gains $0.65\%$,
but it acquires every modality and raises $\mathrm{GF}_{\mathrm f}$ by $53\%$,
which is precisely the cost \method{} is designed to avoid. On the backbone
side, the dropout curriculum has the largest effect once the controller is
attached ($-4.76\%$, the largest drop in the table), even though the backbone
alone loses only $1.4$ points, because without it the prefixes that \policy{}
stops on are poorly calibrated.

\begin{table}[t]
\centering
\small
\captionsetup{font=normalsize,skip=2pt}
\caption{Complete \backbone{} and \policy{} ablations on IEMOCAP under full
modality availability (mean over three test folds; subscripts are standard
deviations over folds). ``\backbone{} F1'' is the retrained backbone alone on
all modalities, averaged over 24 orders; 'F1' is the backbone and policy both retrained; $\Delta$ is the relative change of
the controlled macro-F1 with respect to $.669$.
\textcolor{burgundy}{\textbf{Bold}} marks the best value per metric.}
\label{tab:iemocap-ablation}
\setlength{\tabcolsep}{4pt}
\begin{tabular}{@{}llccccc@{}}
\toprule
\textbf{Category} & \textbf{Variant}
& \textbf{\backbone{} F1} & \textbf{F1}$\uparrow$ & $\Delta$
& $\mathrm{GF}_{\mathrm f}\downarrow$ & \textbf{MU}$\downarrow$ \\
\midrule
\multicolumn{7}{@{}l}{\emph{\backbone{} ablations}} \\
\multirow{2}{*}{Training}
& Fixed modality order       & .653 & .644$_{\pm.033}$ & $-3.69\%$ & .34 & .79 \\
& No dropout curriculum      & .661 & .637$_{\pm.043}$ & $-4.76\%$ & .30 & .70 \\
\multirow{2}{*}{Fusion}
& Bottleneck tokens last     & .651 & .645$_{\pm.018}$ & $-3.54\%$ & .29 & .67 \\
& Learnable aggregation      & \textcolor{burgundy}{\textbf{.675}} & .670$_{\pm.040}$ & $+0.14\%$ & .27 & .62 \\
Loss
& $\mathcal{L}_{\mathrm{CE}}$ only & .656 & .652$_{\pm.014}$ & $-2.46\%$ & .31 & .72 \\
\arrayrulecolor{black!20}\midrule
\arrayrulecolor{black}
\multicolumn{7}{@{}l}{\emph{\policy{} ablations}} \\
Static controller
& Shapley + conf.\ exit      & -- & .644$_{\pm.014}$ & $-3.68\%$ & \textcolor{burgundy}{\textbf{.14}} & \textcolor{burgundy}{\textbf{.33}} \\
Value init.
& Zero ($Q_0\equiv0$)        & -- & .659$_{\pm.012}$ & $-1.47\%$ & .24 & .56 \\
\multirow{2}{*}{Action}
& No STOP (acquire all)      & -- & \textcolor{burgundy}{\textbf{.673}}$_{\pm.012}$ & $+0.65\%$ & .43 & 1.00 \\
& Conf.\ STOP                & -- & .646$_{\pm.017}$ & $-3.44\%$ & \textcolor{burgundy}{\textbf{.14}} & \textcolor{burgundy}{\textbf{.33}} \\
\multirow{2}{*}{State}
& $\mathbf{1}_{S_s}\Vert\mathbf{1}_{\mathcal A}$ only
                             & -- & .646$_{\pm.047}$ & $-3.44\%$ & .30 & .70 \\
& $p_s\Vert H(p_s)\Vert m_s$ only
                             & -- & .666$_{\pm.014}$ & $-0.49\%$ & .30 & .69 \\
\arrayrulecolor{black}\midrule
\rowcolor{tolblue!10}
\multicolumn{2}{@{}l}{\textbf{\method{}}}
                             & \textcolor{burgundy}{\textbf{.675}} & .669$_{\pm.022}$ & -- & .28 & .65 \\
\bottomrule
\end{tabular}
\end{table}

\section{Complete Results for Figure~\ref{fig:diag-reward}} \label{app:reward-sensitivity}

Figure~\ref{fig:diag-reward} varies only the relative acquisition costs $w_m$
of Equation~\ref{eq:rlreturn}. We derive them from each modality's measured
per-sample extraction time $t_m$ on the same idle NVIDIA L40 server used for
all latency measurements,
\begin{equation}\label{eq:wgamma}
w_m(\gamma)=\Big(\frac{t_m}{\min_{m'}t_{m'}}\Big)^{\gamma},
\end{equation}
and price an acquisition exactly as in Equations~\ref{eq:rlreturn}
and~\ref{eq:target}, $\lambda w_j/\sum_{m\in\mathcal A}w_m$. Thus $\gamma=0$ is
the uniform price $\lambda/|\mathcal A|$ of \method{}, $\gamma=1$ splits
$\lambda$ in proportion to extraction time, and larger $\gamma$ concentrates
the price on the most expensive modality. Acquiring every available modality
always costs $\lambda$, whatever $\gamma$.

\paragraph{Protocol.} Everything except $\gamma$ is held at the settings of the
deployed uniform-cost \method{} policy, fold by fold and dataset by dataset:
the per-fold seed, the per-fold training price $\lambda_{\mathrm{train}}$
($0.5$ on every fold except MM-Fi fold~2, $1.0$), the fitted-$Q$ schedule and
the availability augmentation (Appendix~\ref{app:rl_policy}). As in the
deployed policy, the prior $Q_0$ is priced at $\lambda_{\mathrm{train}}$ both
during fitting and at inference, and actions are selected greedily. For every $\gamma>0$, \policy{} is refitted on the
deployed backbone with these settings; for $\gamma=0$ the price reduces to
$\lambda/|\mathcal A|$ and we use the deployed networks themselves, so the
$\gamma=0$ results are those of Table~\ref{tab:full-results}. Latency is measured end to end per sample on the idle L40, from raw-file
reading to returned logits, including extraction, fusion, and policy decisions,
using the timing protocol in Appendix~\ref{app:energy-latency-complete}.

Table~\ref{tab:wgamma-weights} attributes every weight and price to its
modality. Extraction times on IEMOCAP span three orders of magnitude, from
1.1\,ms for the head MoCap stream to 1{,}433\,ms for video (DINOv3-L on 100
frames), so video carries 69\% of the total price already at $\gamma=0.5$ and
86\% at $\gamma=0.75$, while text and the MoCap streams become almost free.

\begin{table}[t]
\centering
\small
\captionsetup{font=normalsize,skip=2pt}
\caption{Per-modality extraction time $t_m$ (idle L40, ms per sample), relative
cost $w_m(\gamma)$ of Equation~\ref{eq:wgamma}, and the resulting price of
acquiring each modality at $\lambda=0.5$ with all modalities available
($\lambda w_m/\sum_{m'}w_{m'}$) on IEMOCAP.}
\label{tab:wgamma-weights}
\setlength{\tabcolsep}{3.5pt}
\begin{tabular}{@{}lrrrrrr@{}}
\toprule
& video & audio & text & MoCap hand & MoCap head & MoCap rot. \\
\midrule
$t_m$ (ms) & 1432.6 & 64.7 & 4.1 & 2.6 & 1.1 & 18.4 \\
\midrule
\multicolumn{7}{@{}l}{\emph{Relative cost $w_m(\gamma)$}} \\
$\gamma=0$    & 1 & 1 & 1 & 1 & 1 & 1 \\
$\gamma=0.25$ & 5.99 & 2.76 & 1.39 & 1.23 & 1 & 2.02 \\
$\gamma=0.5$  & 35.9 & 7.62 & 1.93 & 1.52 & 1 & 4.07 \\
$\gamma=0.75$ & 215 & 21.0 & 2.68 & 1.88 & 1 & 8.20 \\
$\gamma=1$    & 1287 & 58.1 & 3.72 & 2.31 & 1 & 16.5 \\
$\gamma=1.5$  & $4.6{\times}10^{4}$ & 443 & 7.18 & 3.52 & 1 & 67.2 \\
$\gamma=2$    & $1.7{\times}10^{6}$ & 3376 & 13.9 & 5.35 & 1 & 273 \\
\midrule
\multicolumn{7}{@{}l}{\emph{Price at $\lambda=0.5$}} \\
$\gamma=0$    & .083 & .083 & .083 & .083 & .083 & .083 \\
$\gamma=0.25$ & .208 & .096 & .048 & .043 & .035 & .070 \\
$\gamma=0.5$  & .345 & .073 & .019 & .015 & .010 & .039 \\
$\gamma=0.75$ & .430 & .042 & .005 & .004 & .002 & .016 \\
$\gamma=1$    & .470 & .021 & .001 & .001 & .000 & .006 \\
$\gamma=1.5$  & .494 & .005 & .000 & .000 & .000 & .001 \\
$\gamma=2$    & .499 & .001 & .000 & .000 & .000 & .000 \\
\bottomrule
\end{tabular}
\end{table}

Table~\ref{tab:wgamma-results} lists the complete results behind
Figure~\ref{fig:diag-reward}. As the price moves onto video, \policy{} stops
acquiring it, from 47\% of samples at $\gamma=0$ to 11\% at $\gamma=0.5$ and
1\% at $\gamma=0.75$, and acquires the nearly free MoCap streams on 80--97\%
of samples instead. At $\gamma=0.75$ this keeps macro-F1 unchanged
($.670$ versus $.669$) while end-to-end latency falls from 745 to 87\,ms
($-88.3\%$), because video is by far the most expensive modality and adds
little once text and audio are held. Modality utilization rises from $.65$ to
$.75$ at the same time, which illustrates that MU counts modalities rather than
cost: the extra modalities are the ones that cost a few milliseconds. Beyond
$\gamma=0.75$ the cheap modalities are effectively free, so the price no longer
discourages any of them and macro-F1 decreases slightly ($.651$--$.658$). The
large latency spreads at $\gamma=0.25$ and $\gamma=2$ come from single folds
that still acquire video, on 43\% and 73\% of samples (folds 0 and 2) at
$\gamma=0.25$ and on 29\% (fold 0) at $\gamma=2$.

\begin{table}[t]
\centering
\scriptsize
\captionsetup{font=normalsize,skip=2pt}
\caption{Complete results for Figure~\ref{fig:diag-reward} on IEMOCAP
($\lambda=0.5$ with the prior priced at $\lambda$, deployed seeds; $\gamma=0$ is the deployed \method{} policy): macro-F1 and end-to-end
GPU latency (mean$_{\pm\mathrm{std}}$ over three test folds), modality
utilization, post-extraction GFLOPs, the extraction part of the latency, and
the fraction of test samples on which each modality is acquired.}
\label{tab:wgamma-results}
\setlength{\tabcolsep}{3pt}
\begin{tabular}{@{}lcccrrcccccc@{}}
\toprule
& & & & & & \multicolumn{6}{c}{\textbf{Acquisition frequency}} \\
\cmidrule(l){7-12}
$\gamma$ & \textbf{F1}$\uparrow$ & \textbf{MU} & $\mathrm{GF}_{\mathrm f}\downarrow$
& \textbf{Lat.\ (ms)}$\downarrow$ & \textbf{Extr.}
& vid. & aud. & text & hand & head & rot. \\
\midrule
\rowcolor{tolblue!10}
0     & .669$_{\pm.022}$ & .65 & .28 & 745$_{\pm132}$ & 740 & .47 & .72 & 1.00 & .56 & .56 & .59 \\
0.25  & .672$_{\pm.007}$ & .79 & .34 & 630$_{\pm519}$ & 624 & .39 & .81 & 1.00 & .94 & .94 & .67 \\
0.5   & .658$_{\pm.019}$ & .72 & .30 & 225$_{\pm141}$ & 219 & .11 & .71 & 1.00 & .87 & .88 & .74 \\
0.75  & .670$_{\pm.011}$ & .75 & .31 & 87$_{\pm26}$ & 81 & .01 & .75 & 1.00 & .97 & .95 & .80 \\
1     & .653$_{\pm.007}$ & .71 & .30 & 81$_{\pm17}$ & 76 & .00 & .81 & .90 & .80 & .86 & .88 \\
1.5   & .651$_{\pm.013}$ & .71 & .30 & 82$_{\pm17}$ & 76 & .00 & .83 & .87 & .85 & .86 & .87 \\
2     & .658$_{\pm.020}$ & .76 & .32 & 230$_{\pm234}$ & 224 & .10 & .97 & .88 & .79 & .86 & .94 \\
\bottomrule
\end{tabular}
\end{table}

Table~\ref{tab:wgamma-alldata} repeats the study on all six datasets. The
effect follows how unevenly each dataset's extraction cost is spread. Where one
modality dominates the cost and is not needed by every sample, cost weighting
removes most of the latency at little or no accuracy cost: at $\gamma=0.75$,
IEMOCAP keeps $.670$ at 87 instead of 745\,ms, EAV improves to $.694$ at 114
instead of 217\,ms (29\,ms at $\gamma=0.25$), and MM-Fi keeps $.952$ at 69
instead of 187\,ms. On UTD-MHAD, where the two camera streams together make up
nearly all of the cost, latency falls by 19\% (659 to 537\,ms) for a 3.9-point
loss. CZU-MHAD's predictions rest on its depth stream, which \policy{}
acquires at every $\gamma$, so latency stays at 1.3\,s while macro-F1 rises
slightly (.914 to .922 at $\gamma\ge1.5$). CMI's seven sensor streams all cost
0.7--1.5\,ms, so the weights are nearly equal and $\gamma$ has no systematic
effect (.548--.556). Averaged over the six datasets, $\gamma=0.75$ lowers
latency by 32\% (520 to 355\,ms) at $-0.6$ points of macro-F1.

\begin{table}[t]
\centering
\scriptsize
\captionsetup{font=normalsize,skip=2pt}
\caption{Macro-F1 / end-to-end GPU latency (ms) for every $\gamma$ on the six
datasets, each fold at its deployed uniform policy's seed and
$\lambda_{\mathrm{train}}$, with the prior priced at $\lambda_{\mathrm{train}}$
 (mean over three test folds). $\gamma=0$ is the deployed
\method{} policy.}
\label{tab:wgamma-alldata}
\setlength{\tabcolsep}{3pt}
\begin{tabular}{@{}lccccccc@{}}
\toprule
& $\gamma=0$ & $0.25$ & $0.5$ & $0.75$ & $1$ & $1.5$ & $2$ \\
\midrule
IEMOCAP   & .669 / 745 & .672 / 630 & .658 / 225 & .670 / 87 & .653 / 81 & .651 / 82 & .658 / 230 \\
MM-Fi     & .950 / 187 & .946 / 58 & .954 / 62 & .952 / 69 & .949 / 139 & .947 / 134 & .949 / 97 \\
CMI       & .556 / 12.6 & .548 / 10.1 & .550 / 10.4 & .552 / 12.2 & .555 / 12.1 & .554 / 12.2 & .548 / 12.4 \\
CZU-MHAD  & .914 / 1301 & .914 / 1301 & .920 / 1307 & .916 / 1314 & .920 / 1313 & .922 / 1315 & .922 / 1315 \\
EAV       & .689 / 217 & .690 / 29 & .697 / 156 & .694 / 114 & .691 / 56 & .691 / 55 & .692 / 100 \\
UTD-MHAD  & .827 / 659 & .817 / 617 & .800 / 599 & .788 / 537 & .791 / 544 & .791 / 539 & .797 / 566 \\
\midrule
Mean      & .767 / 520 & .764 / 441 & .763 / 393 & .762 / 355 & .760 / 358 & .759 / 356 & .761 / 387 \\
\bottomrule
\end{tabular}
\end{table}

Finally, Table~\ref{tab:wgamma-lambda} varies the training price on IEMOCAP,
$\lambda\in\{0.05,0.1,0.2,0.5,1,2\}$, with the prior priced at the training
$\lambda$ in every cell, so each cell differs from its
neighbours in exactly one knob; the $\lambda=0.5$ column is the deployed
setting of Table~\ref{tab:wgamma-results}. The weights matter only
when $\lambda$ is large enough for the price to bind. For $\lambda\le0.1$ even
the most expensive modality costs at most $\lambda$, which rarely outweighs its
expected gain, so latency stays between 0.53 and 1.25\,s for every $\gamma$;
from $\lambda=0.2$ the weighted price starts to bind, and at $\lambda=0.5$
$\gamma=0.75$ reaches the macro-F1 of the uniform cost at under an eighth of
its latency. Across $\lambda\le0.2$ macro-F1 stays within $.648$--$.678$ for
every $\gamma$. At the highest prices the uniform cost fails: with $\lambda=1$
or $2$ it charges every modality alike, so \policy{} stops after one or two
modalities (MU $.36$ and $.17$) and macro-F1 falls to $.624$ and $.598$,
whereas $\gamma\ge0.75$ concentrates the price on video and keeps macro-F1 at
$.640$--$.672$ within 42--92\,ms. In short, nonuniform costs let a single
$\lambda$ remove the expensive modality without also discouraging the cheap
informative ones.

\begin{table}[t]
\centering
\small
\captionsetup{font=normalsize,skip=2pt}
\caption{Macro-F1 / end-to-end GPU latency (ms) on IEMOCAP for every
$(\gamma,\lambda)$ pair (mean over three test folds, deployed seeds, prior
priced at $\lambda$). The shaded cell is the deployed \method{}
policy.}
\label{tab:wgamma-lambda}
\setlength{\tabcolsep}{3.5pt}
\begin{tabular}{@{}lcccccc@{}}
\toprule
$\gamma\ \backslash\ \lambda$ & 0.05 & 0.1 & 0.2 & 0.5 & 1 & 2 \\
\midrule
0    & .661 / 1053 & .675 / 994 & .668 / 792 & \cellcolor{tolblue!10}.669 / 745 & .624 / 238 & .598 / 5.5 \\
0.25 & .648 / 1207 & .670 / 733 & .672 / 877 & .672 / 630 & .620 / 23 & .606 / 23 \\
0.5  & .664 / 751 & .672 / 569 & .678 / 818 & .658 / 225 & .640 / 44 & .622 / 29 \\
0.75 & .678 / 1093 & .676 / 535 & .678 / 483 & .670 / 87 & .662 / 63 & .640 / 42 \\
1    & .672 / 1097 & .677 / 677 & .671 / 93 & .653 / 81 & .656 / 89 & .660 / 65 \\
1.5  & .670 / 695 & .676 / 1082 & .671 / 273 & .651 / 82 & .658 / 92 & .647 / 81 \\
2    & .668 / 1245 & .674 / 1082 & .672 / 633 & .658 / 230 & .672 / 90 & .656 / 89 \\
\bottomrule
\end{tabular}
\end{table}


\section{Theoretical Support for Backbone--Controller Co-Design}
\label{app:theory}

This appendix proves Proposition~\ref{thm:order_robustness}, supplies the accompanying same-forward distillation bound, derives Corollary~\ref{thm:ais_control} from the finite-horizon AIS result of \citet{subramanian2022ais}, and establishes the two interpretations of the acquisition prior in Section~\ref{sec:m_rl}. Throughout, logarithms are natural and $\operatorname{TV}(P,Q)=\sup_E|P(E)-Q(E)|$; for categorical distributions, this equals $\tfrac12\sum_k|P(k)-Q(k)|$. Expectations in each result use the sampling distribution specified there.

\subsection{Order robustness of \backbone{}}
\label{app:order}

We condition on a fixed available set $\mathcal A$ and evaluate a fixed backbone. Let $\mathcal B\subseteq\mathcal A$ be a nonempty dropout-retained set, and let $\Pi(S)$ denote the permutations of a nonempty acquired set $S$. The retained set is sampled independently of $(X,Y)$ conditional on $\mathcal A$; conditional on that retained set, its permutation is sampled uniformly and independently of the sample. A prefix prediction depends only on the acquired inputs and their processing order. We suppress conditioning on $\mathcal A$ in expectations below.

\paragraph{Uniform sampling of prefix subsets and orders.}
Fix $\mathcal B$, let $n=|\mathcal B|$, and draw $\omega$ uniformly from $\Pi(\mathcal B)$. For nonempty $S\subseteq\mathcal B$ with $|S|=s$, exactly $s!(n-s)!$ full orders have $S$ as their first-$s$ set. Therefore,
\begin{equation}
\mathbb P\!\left(\{\omega_1,\ldots,\omega_s\}=S\mid\mathcal B\right)=\frac{s!(n-s)!}{n!}=\binom ns^{-1}.
\label{eq:app_uniform_prefix}
\end{equation}
Conditional on this event, each ordered prefix in $\Pi(S)$ has $(n-s)!$ completions and hence probability $1/s!$. This proves the main-text statement that randomized ordering samples acquired subsets uniformly conditional on the retained set and prefix length, and samples their internal orders uniformly. It does not require different retained sets or prefix lengths to receive equal training weight.

\begin{proof}[Proof of Proposition~\ref{thm:order_robustness}]
Fix a nonempty $S\subseteq\mathcal A$, and define the common label posterior $\eta_S(k)=\mathbb P(Y=k\mid X_S,\mathcal A)$. Let $\sigma$ be uniform on $\Pi(S)$ and independent of the sample. Conditioning on $(X_S,\sigma)$ gives
\begin{align}
\mathcal R_S:=\mathbb E[-\log p_S^\sigma(Y)]
&=\mathbb E\!\left[-\sum_{k=1}^{C}\eta_S(k)\log p_S^\sigma(k)\right]\notag\\
&=H(Y\mid X_S,\mathcal A)+\mathbb E\!\left[\operatorname{KL}(\eta_S\|p_S^\sigma)\right].
\label{eq:app_prefix_risk}
\end{align}
Thus, the excess prefix cross-entropy in the theorem is $\epsilon_S=\mathbb E[\operatorname{KL}(\eta_S\|p_S^\sigma)]$. For two independent orders $\sigma,\tau$ evaluated on the same acquired evidence, the triangle inequality gives
\begin{equation*}
\operatorname{TV}(p_S^\sigma,p_S^\tau)\leq\operatorname{TV}(p_S^\sigma,\eta_S)+\operatorname{TV}(\eta_S,p_S^\tau).
\end{equation*}
The two terms have the same expectation. Applying Pinsker's inequality~\citep{cover2006elements} and then Jensen's inequality yields
\begin{align}
D_S:=\mathbb E\!\left[\operatorname{TV}(p_S^\sigma,p_S^\tau)\right]
&\leq 2\mathbb E\!\left[\operatorname{TV}(\eta_S,p_S^\sigma)\right]\notag\\
&\leq 2\mathbb E\!\left[\sqrt{\tfrac12\operatorname{KL}(\eta_S\|p_S^\sigma)}\right]\notag\\
&\leq\sqrt{2\epsilon_S}.
\label{eq:app_order_bound}
\end{align}
The same argument also gives
\begin{equation}
\mathbb E\!\left[\operatorname{TV}(p_S^\sigma,\eta_S)\right]\leq\sqrt{\epsilon_S/2},
\label{eq:app_posterior_bound}
\end{equation}
which establishes the main-text implication that low excess prefix risk brings predictions toward a common, order-independent posterior on average.
\end{proof}

The dropout curriculum changes the retained-set distribution, while random ordering and prefix supervision expose each retained set's prefixes to the common label target. The theorem concerns the population prefix risks attained by a fixed predictor; it does not assert that optimizing the combined classification and distillation objective attains the Bayes minimum for every subset. Its expectations are over the specified sample/order distribution, not the distribution induced by conditioning on an adaptive controller visiting $S_s=S$.

\paragraph{Supporting bound for same-forward distillation.}
Fix nonempty $S\subseteq\mathcal B\subseteq\mathcal A$, with $s=|S|$, and consider the full orders
\begin{equation*}
\Omega(\mathcal B,S)=\{\omega\in\Pi(\mathcal B):\{\omega_1,\ldots,\omega_s\}=S\}.
\end{equation*}
Draw $\omega,\nu$ independently and uniformly from $\Omega(\mathcal B,S)$, independently of the sample, and evaluate both orders on the same full retained input $X_{\mathcal B}$. Write $p_S^{\omega_{1:s}}$ for the prefix prediction and $q^\omega=\operatorname{sg}(p_{\mathcal B}^\omega)$ for its same-forward full-prefix teacher. Stop-gradient leaves the numerical teacher distribution unchanged. Define
\begin{align}
K_{S\mid\mathcal B}
&=\mathbb E_{X_{\mathcal B},\omega}\!\left[\operatorname{KL}\!\left(q^\omega\|p_S^{\omega_{1:s}}\right)\right],\notag\\
D_{\mathrm{full},S\mid\mathcal B}
&=\mathbb E_{X_{\mathcal B},\omega,\nu}\!\left[\operatorname{TV}(q^\omega,q^\nu)\right].
\label{eq:app_distillation_terms}
\end{align}
Both quantities use the same prefix-set-conditioned order ensemble and average over the full retained inputs. By independent retention and the uniform-prefix argument above, the induced prefix orders are independent and uniform on $\Pi(S)$, and their expected disagreement is the same $D_S$ as in Equation~\ref{eq:app_order_bound}. We claim that
\begin{equation}
D_S\leq D_{\mathrm{full},S\mid\mathcal B}+\sqrt{2K_{S\mid\mathcal B}}.
\label{eq:app_distillation_bound}
\end{equation}

\begin{proof}[Proof of Equation~\ref{eq:app_distillation_bound}]
Insert the two teachers between the prefix predictions:
\begin{align*}
\operatorname{TV}\!\left(p_S^{\omega_{1:s}},p_S^{\nu_{1:s}}\right)
&\leq\operatorname{TV}\!\left(p_S^{\omega_{1:s}},q^\omega\right)+\operatorname{TV}(q^\omega,q^\nu)\\
&\qquad+\operatorname{TV}\!\left(q^\nu,p_S^{\nu_{1:s}}\right).
\end{align*}
The middle term has expectation $D_{\mathrm{full},S\mid\mathcal B}$. By symmetry, Pinsker's inequality, and Jensen's inequality, the expectations of the outer terms sum to at most
\begin{equation*}
2\mathbb E\!\left[\sqrt{\tfrac12\operatorname{KL}\!\left(q^\omega\|p_S^{\omega_{1:s}}\right)}\right]\leq\sqrt{2K_{S\mid\mathcal B}}.
\end{equation*}
This proves the claim.
\end{proof}

Together with Proposition~\ref{thm:order_robustness}, this gives
\begin{equation}
D_S\leq\min\!\left\{\sqrt{2\epsilon_S},\;D_{\mathrm{full},S\mid\mathcal B}+\sqrt{2K_{S\mid\mathcal B}}\right\}.
\label{eq:app_combined_order_bound}
\end{equation}
The first bound follows from prefix supervision alone; the second supports the accompanying claim that same-forward distillation transfers full-prefix stability when its conditional teacher disagreement and prefix-to-teacher divergence are small. An unconditional full-order risk cannot replace $D_{\mathrm{full},S\mid\mathcal B}$ without an additional argument. These results concern average predictive distributions, not identical hidden states or AIS transition sufficiency. Aggregate macro-F1 variation is complementary empirical evidence rather than a measurement of these bounds.

\subsection{AIS guarantee for modality acquisition}
\label{app:ais}

This subsection proves Corollary~\ref{thm:ais_control}. Let $H_t=(\mathcal A,\sigma_{1:t},X_{\sigma_1},\ldots,X_{\sigma_t})$ denote the complete acquired history at a decision step $t\geq1$. It excludes the label and unacquired inputs. With the backbone frozen, $\bm\psi_t$ is a deterministic function of $H_t$; a fixed rule resolves prediction ties.

Let $\widehat r_{\mathrm{stop}}$ and $\widehat P$ be the stopping-reward function and acquisition kernel of the compact model. Assume that, for every history reachable under any admissible acquisition policy,
\begin{align}
\left|\mathbb E[R_t\mid H_t]-\widehat r_{\mathrm{stop}}(\bm\psi_t)\right|
&\leq\epsilon_{\mathrm r},\notag\\
\operatorname{TV}\!\left(\mathcal L(\bm\psi_{t+1}\mid H_t,a),\widehat P(\cdot\mid\bm\psi_t,a)\right)
&\leq\epsilon_{\mathrm p}
\label{eq:app_ais_conditions}
\end{align}
for every legal acquisition $a=\operatorname{acq}(j)$, where the second line is the controlled law obtained by taking $a$. Both processes preserve $\mathcal A$, use the same acquired-set update $S_{t+1}=S_t\cup\{j\}$, and assign the same acquisition reward $-\lambda w_j/\sum_{m\in\mathcal A}w_m$. STOP is always admissible, is the only action when $S_t=\mathcal A$, and its compact-model reward satisfies $0\leq\widehat r_{\mathrm{stop}}\leq\beta_{\max}$.

\paragraph{Proof strategy.}
The finite-horizon AIS framework uses a common action space. We therefore first rewrite the stopping problem as an equivalent fixed-horizon process with a common action alphabet. We then verify its reward and transition approximation errors and apply Theorem~9 of \citet{subramanian2022ais}.

\begin{proof}[Proof of Corollary~\ref{thm:ais_control}]
Fix a reachable history $H_s$ and let $n=|\mathcal A|-s$. Values below include the terminal reward and future acquisition costs; costs already incurred at $H_s$ are common to both policies.

\emph{Step 1: Construct an equivalent fixed-horizon process.}
Use $n+1$ decision stages. Acquisitions and their costs are unchanged. If STOP is selected early, move to a marked stopped state, take zero-reward no-ops until the final stage, and then pay the reward associated with the prediction retained at stopping. A trajectory that acquires all remaining modalities takes its forced STOP at the final stage. Because the objective is undiscounted, this padding leaves every policy's return unchanged.

To obtain a common action space, define otherwise-illegal active action labels as duplicates of STOP, and define all action labels in stopped states as duplicates of the forced no-op. These duplicate labels do not change values or the behavior of any legal action. Thus the padded process is value-equivalent to the original acquisition problem and has the form required by the finite-horizon AIS theorem.

\emph{Step 2: Verify the AIS errors and apply the theorem.}
During the first $n$ stages, rewards are exact acquisition costs or zero, so the reward-approximation error is zero. Only the final stopping reward contributes $\epsilon_{\mathrm r}$. Acquisition transitions satisfy the bound in Equation~\ref{eq:app_ais_conditions}; STOP and stopped-state no-ops have identical deterministic compact-state transitions.

At every stage, the compact model's optimal continuation value satisfies $0\leq\widehat V_\ell^*\leq\beta_{\max}$: stopping avoids further acquisition costs and yields a nonnegative reward, while the single terminal reward is bounded above by $\beta_{\max}$. Hence $\operatorname{span}(\widehat V_\ell^*)\leq\beta_{\max}$. Under our TV convention,
\begin{equation*}
\left|\int f\,dP-\int f\,dQ\right|\leq\operatorname{span}(f)\operatorname{TV}(P,Q),
\qquad\operatorname{span}(f)=\sup f-\inf f.
\end{equation*}
Therefore, Theorem~9 of \citet{subramanian2022ais} gives
\begin{equation}
\alpha\leq\epsilon_{\mathrm r}+\sum_{\ell=1}^{n}\operatorname{span}(\widehat V_{\ell+1}^*)\epsilon_{\mathrm p}
\leq\epsilon_{\mathrm r}+n\beta_{\max}\epsilon_{\mathrm p}.
\label{eq:app_ais_value_error}
\end{equation}
The source uses the full-$L_1$ convention for TV; its transition error is therefore $2\epsilon_{\mathrm p}$ and its multiplier is $\tfrac12\operatorname{span}(\widehat V_{\ell+1}^*)$, yielding the same product.

Let $\widehat V^*(\bm\psi_s)$ be the compact model's optimal remaining value and let $\widehat\pi$ be one of its optimal policies, executed in the original process through $\bm\psi$. The same AIS result gives
\begin{equation*}
\left|V^\star(H_s)-\widehat V^*(\bm\psi_s)\right|\leq\alpha,
\qquad
\left|V^{\widehat\pi}(H_s)-\widehat V^*(\bm\psi_s)\right|\leq\alpha.
\end{equation*}
Combining these inequalities with optimality of the full-history policy yields
\begin{equation}
0\leq V^\star(H_s)-V^{\widehat\pi}(H_s)
\leq2\alpha
\leq2\left[\epsilon_{\mathrm r}+(|\mathcal A|-s)\beta_{\max}\epsilon_{\mathrm p}\right].
\label{eq:app_ais_policy_bound}
\end{equation}
Value equivalence between the padded and original processes gives the stated corollary.
\end{proof}
\subsection{Shapley interpretation of the acquisition prior}
\label{app:qprior}

This subsection supports the acquisition-prior interpretation in Section~\ref{sec:m_rl}. We first establish an exact empirical connection to Shapley attribution, then state the conditions under which posterior-weighted gains equal conditional expected gains. Condition on the fitted backbone and cache. The Shapley identity uses a common offline cohort for which $v_i(S)$ is defined for every $S\subseteq[M]$, with at least one sample from each class represented in the tables. The empirical class average uses the same samples and weights for every coalition. We set $v_i(\varnothing)=0$ solely to complete the attribution game; this does not define a backbone prediction at the empty prefix or a first-acquisition rule. Recall that $\Delta_{i,j}(S)=v_i(S\cup\{j\})-v_i(S)$ and $[\bm T_S]_{k,j}=\mathbb E_i[\Delta_{i,j}(S)\mid y_i=k]$.

For the attribution identity, the empty-set convention also defines $[\bm T_{\varnothing}]_{k,j}=\mathbb E_i[v_i(\{j\})\mid y_i=k]$; this entry does not specify the controller's first acquisition. Let $\phi_{i,j}$ denote the Shapley value of modality $j$ for the set utility $v_i$. Then
\begin{equation}\label{eq:shapley_relation}
\mathbb E_i[\phi_{i,j}\mid y_i=k]
=
\sum_{S\subseteq[M]\setminus\{j\}}
\frac{|S|!(M-|S|-1)!}{M!}\,[\bm T_S]_{k,j}.
\end{equation}

\begin{proof}[Proof of Equation~\ref{eq:shapley_relation}]
For the set utility $v_i$, the Shapley value~\citep{shapley1953value} is
\begin{equation*}
\phi_{i,j}=\sum_{S\subseteq[M]\setminus\{j\}}\frac{|S|!\,(M-|S|-1)!}{M!}\,\Delta_{i,j}(S).
\end{equation*}
Interchanging the finite class average and coalition sum gives
\begin{align*}
\mathbb E_i[\phi_{i,j}\mid y_i=k]
&=\sum_{S\subseteq[M]\setminus\{j\}}\frac{|S|!\,(M-|S|-1)!}{M!}\,\mathbb E_i[\Delta_{i,j}(S)\mid y_i=k]\\
&=\sum_{S\subseteq[M]\setminus\{j\}}\frac{|S|!\,(M-|S|-1)!}{M!}\,[\bm T_S]_{k,j}.
\end{align*}
This proves Equation~\ref{eq:shapley_relation} exactly for the empirical tables. It requires neither posterior calibration nor exact order invariance: the order average already defines a set utility.
\end{proof}

 The common-cohort requirement matters: averaging each coalition over a different eligible sample population does not automatically preserve this sample-level identity. If the attribution game is restricted to a fixed available set $\mathcal A$, the same proof applies to $j\in\mathcal A$ using a common cohort, summing over $S\subseteq\mathcal A\setminus\{j\}$ and replacing $M$ by $|\mathcal A|$ in the Shapley weights.

\paragraph{Conditional expected-gain interpretation.}
For a random sample from the population of interest, let $\Delta_j(S)$ be its order-averaged set-utility difference. For a legal acquisition $j\in\mathcal A\setminus S_s$, suppose
\begin{equation}
p_s(k)=\mathbb P(Y=k\mid\bm\psi_s),
\qquad
\mathbb E[\Delta_j(S_s)\mid\bm\psi_s,Y=k]=[\bm T_{S_s}]_{k,j},
\label{eq:app_gain_assumptions}
\end{equation}
where the second condition is required for classes with positive conditional probability. Because $S_s$ is determined by $\bm\psi_s$, the law of total expectation yields
\begin{align}
\mathbb E[\Delta_j(S_s)\mid\bm\psi_s]
&=\sum_{k=1}^{C}\mathbb P(Y=k\mid\bm\psi_s)\,\mathbb E[\Delta_j(S_s)\mid\bm\psi_s,Y=k]\notag\\
&=\sum_{k=1}^{C}p_s(k)[\bm T_{S_s}]_{k,j}
=[p_s^\top\bm T_{S_s}]_j.
\label{eq:app_posterior_gain}
\end{align}
This establishes the conditional expected-gain interpretation under Equation~\ref{eq:app_gain_assumptions}. Its conditional-mean assumption includes the availability and acquired-evidence information contained in $\bm\psi_s$; conditioning only on class and set when defining the table does not establish that assumption.

To distinguish estimation from this interpretation, the natural population table is $[\bm T_S^{\mathrm{pop}}]_{k,j}=\mathbb E[\Delta_j(S)\mid Y=k]$. The same proof applies with $\bm T^{\mathrm{pop}}$ when the corresponding conditional-mean assumption holds. The implemented $\bm T$ estimates such class means from the cache. The empirical Shapley identity is exact, but neither empirical averaging nor Equation~\ref{eq:shapley_relation} establishes equality with population means or correctness of $p_s$; using the empirical table in Equation~\ref{eq:app_posterior_gain} requires its stated assumptions.

Finally, $\Delta_j(S)$ compares order-averaged set utilities, so it need not equal the realized gain from appending $j$ to the current order. These utilities use unweighted correctness, and $\operatorname{conf}(p_s)$ is an unweighted confidence proxy. The prior is therefore an initialization; the residual is trained against the class-weighted, multistep return defined in Equation~\ref{eq:rlreturn}. The identities above do not make $Q_0$ a Bellman-optimal value function or imply that fitted-$Q$ learning eliminates its approximation errors.

\end{document}